\documentclass{article} % For LaTeX2e
\usepackage{iclr2027_conference,times}

\usepackage{amsmath,amsfonts,bm}

\def\eqref#1{equation~\ref{#1}}
\def\1{\bm{1}}

\DeclareMathAlphabet{\mathsfit}{\encodingdefault}{\sfdefault}{m}{sl}
\SetMathAlphabet{\mathsfit}{bold}{\encodingdefault}{\sfdefault}{bx}{n}

\usepackage{hyperref}
\usepackage{url}

\title{Hybrid Methods for Robust Tabular Data Imputation}

\author{Jinwei Li \\
Department of Data Science\\
Friedrich-Alexander-Universität Erlangen-Nürnberg\\
Erlangen, Germany\\
\texttt{jinwei.li@fau.de}
\And
Michelle Bruch \\
Department of Data Science\\
Friedrich-Alexander-Universität Erlangen-Nürnberg\\
Erlangen, Germany\\
\texttt{michelle.bruch@fau.de}
\AND
Daniel Tenbrinck \\
Department of Data Science\\
Friedrich-Alexander-Universität Erlangen-Nürnberg\\
Erlangen, Germany\\
\texttt{daniel.tenbrinck@fau.de}
}

\usepackage[utf8]{inputenc} % allow utf-8 input
\usepackage[T1]{fontenc}    % use 8-bit T1 fonts
\usepackage{hyperref}       % hyperlinks
\usepackage{url}            % simple URL typesetting
\usepackage{booktabs}       % professional-quality tables
\usepackage{amsfonts}       % blackboard math symbols
\usepackage{nicefrac}       % compact symbols for 1/2, etc.
\usepackage{microtype}      % microtypography
\usepackage{xcolor}         % colors
\usepackage{amsmath,amssymb,amsthm}

\newtheorem{theorem}{Theorem}

\usepackage{bm}
\usepackage{amsmath,amssymb,amsthm}
\usepackage{graphicx}
\usepackage{subcaption}
\usepackage{booktabs}
\usepackage{algorithm}
\usepackage{algpseudocode}
\usepackage{enumitem}
\newtheorem{Proposition}{Proposition}
\newtheorem{corollary}{Corollary}
\usepackage{wrapfig}
\usepackage[nameinlink,capitalise]{cleveref}
\iclrfinalcopy % Uncomment for camera-ready version, but NOT for submission.
\begin{document}

\maketitle

\begin{abstract}
Missing data are a fundamental challenge in statistical analysis and machine learning, as the choice of imputation method substantially impacts downstream inference.
In this work, we propose two hybrid imputation methods called NuclearForest and SoftForest, which combine nuclear-norm-based low-rank initialization using Singular Value Thresholding (SVT) and SoftImpute, respectively, with a non-iterative Random Forest refinement. For the SVT-based component, we further introduce an adaptive step-size rule, prove adaptive step-size bounds, and establish convergence for the corresponding zero-initialized iteration. The low-rank initialization provides a structured warm start that captures the global covariance patterns in the data, while the subsequent Random Forest step recovers residual nonlinear signals encoding local dependencies.
We conduct an extensive benchmark on diverse datasets from different application domains, comparing the proposed methods with seven established imputation methods under the Missing Completely at Random (MCAR), Missing at Random (MAR), and Missing Not at Random (MNAR) mechanisms across varying missingness rates. Our results demonstrate that NuclearForest and SoftForest match or exceed the imputation fidelity of state-of-the-art iterative methods such as MissForest, while significantly reducing computational cost. In particular, they achieve speedups of approximately \(5.81\times\) and \(9.52\times\) over MissForest by replacing iterative cycles with a single refinement step. Our approach effectively exploits the low-rank structure of real-world tabular data and accommodates mixed-type variables, providing an efficient and robust solution for data imputation in bioinformatics, economics, and beyond.
\end{abstract}

\section{Introduction}
\paragraph{Motivation}
Missing values are ubiquitous in real-world biomedical datasets, and data imputation is a critical prerequisite in the preprocessing pipeline. 
Consequently, the reliability of downstream analyses, including statistical testing and classification using machine learning methods, depends directly on the quality of the imputed data.
However, state-of-the-art Random-Forest-based imputation methods, such as MissForest \citep{stekhoven2012missforest}, require repeated fitting of predictive models until convergence, resulting in substantial computational cost.
This computational complexity limits the practicality of iterative Random-Forest-based imputation methods when imputation must be performed repeatedly across experimental conditions.
We introduce NuclearForest and SoftForest, two hybrid imputation methods that combine low-rank matrix completion with non-iterative Random Forest refinement, substantially reducing runtime while preserving high-quality imputations on tabular datasets.
\paragraph{Background}
Missing-data imputation can be approached from several complementary paradigms.
Classical theory distinguishes between missing completely at random (MCAR), missing at random (MAR), and missing not at random (MNAR), since the validity of an imputation strategy depends on the mechanism that generated the missing entries \citep{rubin1976inference}.
Practical imputation methods span simple univariate replacement, model-based multiple imputation such as MICE \citep{vanbuuren2011mice}, and non-parametric machine-learning approaches such as MissForest \citep{stekhoven2012missforest}. 
A complementary line of work exploits global correlation structures through low-rank matrix completion and nuclear-norm regularization, including Singular Value Thresholding (SVT) \citep{cai2010singular} and SoftImpute \citep{mazumder2010spectral}.

MissForest, proposed by \citet{stekhoven2012missforest}, is a random-forest-based imputation method that has been shown to perform well on biological datasets, outperforming imputation methods such as kNN by modeling complex interactions and nonlinear relationships.
Subsequent studies have shown that random-forest-based imputation can achieve strong imputation accuracy in datasets with nonlinear relationships and mixed variable types, with particularly favorable performance reported under MCAR/MAR missingness in metabolomics benchmarks \citep{tang2017random,wei2018missing}.
We therefore choose random forests for the refinement step because our target setting is tabular imputation rather than representation learning from large unstructured data. For medium-sized tabular datasets, Random-Forest-based models remain state-of-the-art and outperform neural-network baselines even under extensive hyperparameter-search budgets \citep{grinsztajn2022tree}. 
Moreover, random forests are a standard non-parametric regression tool and often perform reasonably well under default hyperparameter settings \citep{breiman2001random,probst2019hyperparameters}.
For missing-data imputation, this property is particularly valuable: random-forest-based models can capture nonlinear effects and feature interactions while accommodating both numerical and categorical variables, which motivates their use in MissForest \citep{stekhoven2012missforest}.
However, MissForest often is computationally expensive, as it repeatedly fits variable-wise random forests until convergence.
The simple univariate initialization used before the iterative updates may provide a limited starting point, since it does not explicitly impose low-rank or other global structural assumptions on the completed matrix.

%The simple univariate initialization used before the iterative updates may provide an oversimplified starting point, making subsequent iterative refinement more challenging.

%Since SVT is designed to minimize the \textbf{nuclear} norm, the hybrid method combining SVT and random \textbf{forest} is called \textbf{NuclearForest}. Moreover, the combination of \textbf{Soft}Impute with random \textbf{forest} is named \textbf{SoftForest}. 

The SVT algorithm~\citep{cai2010singular} is a classical first-order method for nuclear-norm-based matrix completion.
Its appeal lies in a simple iterative structure, where each iteration alternates between singular-value shrinkage and a residual update.
However, because standard SVT relies on fixed algorithmic parameters, including the singular-value threshold and the step size, its practical convergence speed can be sensitive to these choices and may require many iterations when they are not well matched to the data.
One response in the literature is to adapt the threshold during the iteration \citep{zarmehi2017adaptive}.

\paragraph{Approximate low-rank structure in real-world datasets.}
Many real-world datasets exhibit an approximately low-rank structure, as their variation is often governed by a relatively small number of latent factors.
This is particularly relevant for omics data, here referring to tabular measurements of molecular abundances, where correlated biological pathways and shared regulatory mechanisms induce dependencies among measured features \citep{hilafu2020sparse}.
Therefore, methods based on low-rank matrix completion provide a natural framework for omics data imputation by exploiting latent structure, but they may not fully capture nonlinear dependencies in the data.

\paragraph{Missingness mechanisms in MS-omics data.} Mass-spectrometry (MS) omics data commonly contain missing values, and the missingness mechanism can affect imputation performance and downstream analysis \citep{sethi2016omics,wei2018missing}.
%To obtain ground truth for an extensive benchmark of imputation methods, we generate missing values in the metabolomics dataset under three mechanisms: MCAR, MAR, and MNAR. Since MCAR and MAR are often difficult to distinguish in MS-based metabolomics, previous benchmarks evaluate them jointly using random masking. We follow this setting and uniformly mask observed entries to generate MCAR/MAR missingness \citep{wei2018missing}.
Using the metabolomics dataset from \citet{wei2018missing}, we benchmark imputation methods under two artificial missingness settings: a joint MCAR/MAR condition and a separate MNAR condition. Because MCAR and MAR are often difficult to distinguish in MS-based metabolomics, the joint MCAR/MAR setting follows the benchmark design of \citet{wei2018missing} and is generated by uniformly masking observed entries.
In an additional experiment using a housing dataset \citep{housing_price_kaggle}, MCAR and MAR are evaluated separately to further assess the behavior of the imputation methods under distinct missingness mechanisms.
MNAR is modeled as left-censored missingness, where low-abundance features fall below the limit of detection or quantification.
Accordingly, we generate MNAR missingness by selecting features and removing observations below variable-specific quantile cutoffs \citep{wei2018missing}.

\paragraph{Benchmark methods and evaluation metrics.} In this paper, real-world data from different applications will be imputed and evaluated using nine different imputation methods: MissForest, kNN, Mean, Median, Half-min, SVT, SoftImpute, and our proposed methods NuclearForest and SoftForest.
We evaluate imputation quality from multiple complementary perspectives, covering normalized root mean squared error (NRMSE), global sample structure, univariate group-level signals, distributional similarity, and downstream predictive utility.
Specifically, we use NRMSE and an NRMSE-based sum of ranks (SOR) score for masked-entry reconstruction; PCA/PLS Procrustes analysis for structural preservation in reduced-dimensional spaces; Student's \(t\)-test followed by Pearson correlation analysis for preservation of univariate group differences; and Gower's distance together with downstream predictive \(R^2\) degradation to assess distributional and task-level effects.

\paragraph{Contributions}

\begin{enumerate}[leftmargin=*,itemsep=2pt]

\item \textbf{Adapted SVT.} We introduce an efficient adaptive SVT variant
(Algorithm~\ref{alg:svt-me}) for imputation that replaces zero
initialization with column-mean warm starting and modulates the step size according to the relative observed-entry reconstruction error, rather than relying on a fixed step size \(\delta\). 
Specifically,
\(\delta_k\) is contracted by a factor of \(0.9\) when the error increases
and expanded by a factor of \(1.05\), up to \(2p\), when the error decreases,
where \(p = |\Omega|/(n_1n_2)\) is the observed-entry ratio. Under a finite-contraction condition, the adaptive step sizes remain bounded away from zero and satisfy \(\delta_k<2\). Proposition~\ref{prop:1} proves these bounds, while Corollary~\ref{cor:adaptive-svt-zero-init} gives convergence and an \(O(T^{-1})\) rate for the corresponding zero-initialized iteration. In addition, an ablation study separately evaluates column-mean warm starting and adaptive step-size modulation, indicating that both components contribute to improved imputation quality. See Appendix~\ref{app:ablation}.

\item \textbf{Hybrid imputation algorithms.}
We propose NuclearForest and SoftForest, two hybrid imputation methods that combine low-rank initialization with a single subsequent random-forest refinement step. NuclearForest uses the proposed adaptive SVT variant as its initialization module, whereas SoftForest uses SoftImpute as the corresponding low-rank initializer. Across the evaluated real-data settings, both methods achieve competitive imputation quality relative to iterative random-forest baselines while reducing runtime by approximately \(5.81\times\) for NuclearForest and \(9.52\times\) for SoftForest in one representative experiment.

\item \textbf{Comprehensive real-data benchmark.}
We provide a broad empirical evaluation of nine imputation methods on metabolomics and housing datasets under MCAR, MAR, and MNAR missingness mechanisms. The benchmark compares methods across reconstruction, structural, statistical, and downstream predictive metrics, allowing us to assess not only masked-entry reconstruction error but also the preservation of low-dimensional sample structure, group-level signals, and predictive utility.

\end{enumerate}

%\paragraph{the OLD version of Background}
%In many real-world datasets, variables exhibit underlying latent dependencies. Our methods utilize this implicit low-rank structure to achieve more efficient data imputation. To balance both global structure and local accuracy, we propose a hybrid, two-step imputation framework that combines low-rank matrix completion (Singular Value Thresholding (SVT) \citep{cai2010singular} / SoftImpute \citep{hastie2014matrix}) with a single feature-specific random forest refinement, capturing the nonlinear structure in the dataset. 

\section{Methodology}

\paragraph{Matrix completion via nuclear norm minimization.}

Let \(M\in\mathbb{R}^{n_1\times n_2}\) be a partially observed matrix, and an entry of \(M\) is indexed by a pair \((i,j)\), where
\(i\in\{1,\ldots,n_1\}\) is the row index and
\(j\in\{1,\ldots,n_2\}\) is the column index. Let \(\Omega \subseteq \{1,\ldots,n_1\}\times\{1,\ldots,n_2\}\) denote the index set of observed entries. The sampling operator \(\mathcal{P}_{\Omega}\) is defined entrywise by \(
\bigl(\mathcal{P}_{\Omega}(X)\bigr)_{ij}=
\begin{cases}
X_{ij}, & (i,j)\in\Omega,\\
0, & (i,j)\notin\Omega.
\end{cases}\).
The matrix completion problem aims to recover a complete matrix \(X\) that agrees with the partially observed matrix \(M\) on the observed index set \(\Omega\). Since infinitely many matrices can satisfy these observed constraints, recovery is ill posed without additional a priori assumptions about the expected solution. A standard assumption is that the underlying complete matrix has a low-rank structure, meaning that much of its variation can be represented in a low-dimensional linear subspace.
This assumption leads to the following rank-minimization formulation:
\begin{equation}
\label{eq:matrix_completion_np}
\min_{X \in  \mathbb{R}^{n_1 \times n_2}}\, \operatorname{rank}(X)
\qquad
\text{s. t.}
\qquad
\mathcal{P}_\Omega(X)=\mathcal{P}_\Omega(M).
\end{equation}
Since rank minimization is nonconvex and NP-hard \citep{rechtGuaranteedMinimumrankSolutions2010}, the optimization problem \eqref{eq:matrix_completion_np} is commonly replaced by a convex relaxation \citep{candes2012exact} 
\begin{equation}
\label{eq:matrix_completion_nuclear}
\min_{X \in \mathbb{R}^{n_1 \times n_2}} \; \|X\|_*
\qquad
\text{s. t.}
\qquad
\mathcal{P}_\Omega(X)=\mathcal{P}_\Omega(M),
\end{equation}
where \(\|X\|_*=\sum_i \sigma_i(X)\) denotes the nuclear norm, i.e.,
the sum of the singular values of \(X\). This leads to a convex formulation of the matrix completion problem based on nuclear norm minimization.
Solutions to this relaxed problem can be efficiently computed via singular value thresholding methods.

\paragraph{Singular Value Thresholding (SVT).}
The SVT algorithm was introduced by \citet{cai2010singular} as a first-order method for solving nuclear-norm minimization problems. 
In the matrix completion setting, instead of treating the constrained nuclear-norm problem \eqref{eq:matrix_completion_nuclear} directly, the method is derived from the closely related regularized problem
\[
\min_X \;
\tau \|X\|_* + \frac12 \|X\|_F^2
\qquad
\text{s. t.}
\qquad
\mathcal{P}_\Omega(X)=\mathcal{P}_\Omega(M),
\]
where \(\tau>0\) is a threshold parameter and \(\|\cdot\|_F\) denotes the Frobenius norm.
For large values of \(\tau\), the minimizer of this problem approaches the minimum-Frobenius norm solution of the corresponding nuclear-norm minimization problem. To derive the SVT algorithm, one introduces the following Lagrangian
\(
\mathcal{L}(X,Y)
=
\tau\|X\|_* + \frac12\|X\|_F^2
+
\langle Y,\mathcal{P}_\Omega(M-X)\rangle,
\)
where \(Y\) is the dual variable associated with the equality constraint. 
Following the Uzawa interpretation of \citet{cai2010singular}, this yields an iterative scheme that alternates between minimizing the Lagrangian with respect to the primal variable \(X\) and performing a dual gradient step in the dual variable \(Y\). Starting from \(Y^0=0\), the SVT iteration is given by
\[
X^k = \mathcal{S}_\tau(Y^{k-1}),
\qquad
Y^k = Y^{k-1} + \delta_k\,\mathcal{P}_\Omega(M-X^k),
\]
where \(\delta_k>0\) denotes the step-size parameter. In the classical
SVT implementation of \citet{cai2010singular}, this step size is typically
chosen to be constant, i.e., \(\delta_k=\delta\). The operator \(\mathcal{S}_\tau\) denotes the proximal operator of the nuclear norm given by
\(
\mathcal{S}_\tau(Y)
:=
\operatorname{prox}_{\tau\|\cdot\|_*}(Y)
=
\arg\min_X
\left\{
\tau\|X\|_*
+
\frac12\|X-Y\|_F^2
\right\}.
\)
If \(Y = U\Sigma V^T\) is a singular value decomposition of \(Y\), with \(\Sigma=\operatorname{diag}(\sigma_1,\dots,\sigma_r),\) where \(\sigma_1,\dots,\sigma_r\) are the singular values of \(Y\), then \(\mathcal{S}_\tau\) acts by soft-thresholding the singular values through: 
\begin{equation}
\label{eq:svt}
\mathcal{S}_\tau(Y)
=
U\,\operatorname{diag}\big((\sigma_i-\tau)^+\big)\,V^T,
\qquad
(a)^+ := \max(a,0).
\end{equation}
Hence, each SVT iteration alternates between two operations: singular value thresholding, which promotes a low-rank structure, and a dual update on the observed entries, which drives the reconstruction toward consistency with the available data.

\paragraph{Improved SVT variant.}
In this work, we propose an improved variant of the classical SVT method with two key modifications.
First, instead of initializing the missing entries as zero, we use a column-mean warm start, which provides an inexpensive data-dependent initialization that preserves the observed entries and places the initial completion on the empirical feature scale, thereby reducing the burden on subsequent low-rank recovery and random-forest refinement.
Second, we replace the empirical fixed step size
\(\delta=1.2/p\) (where \(p=\frac{|\Omega|}{n_1n_2}\) is the observed-entry rate of the matrix) used in the original SVT implementation with an adaptive rule: \(\delta_0=1.2p\), followed by multiplicative expansion or contraction according to the observed-entry residual, with a cap at \(\delta_{\max}=2\). 
Under the finite-contraction assumption, the adaptive step sizes remain bounded away from zero. In our experiments, this condition is empirically satisfied, as the residual-based rule does not trigger contraction steps on the evaluated real datasets.
Thus, the adaptive step sizes remain within the standard SVT admissible range \citep{cai2010singular}. Proposition~\ref{prop:1} gives the step-size bounds, and Corollary~\ref{cor:adaptive-svt-zero-init} gives the zero-initialized convergence result.
Instead of using an adaptive threshold as proposed by Zarmehi and Marvasti~\citep{zarmehi2017adaptive}, we keep \(\tau\) fixed, matching the original SVT parameter settings. In all numerical experiments, we compute the SVT updates using a full singular value decomposition.

\paragraph{SoftImpute.}

In contrast to the SVT algorithm, SoftImpute, introduced by \citet{mazumder2010spectral}, approaches matrix completion through the following penalized nuclear norm problem
\begin{equation}
\label{eq:softimpute}
\min_X \;
\frac12 \|\mathcal{P}_\Omega(M-X)\|_F^2 + \lambda \|X\|_*,
\end{equation}
where \(\lambda>0\) is a fixed regularization parameter.
This objective is convex: the first term penalizes reconstruction error on the observed entries, while the nuclear-norm term promotes low-rank structure through the same convex surrogate of the rank function used in nuclear-norm matrix completion.
Therefore, SoftImpute is closely related to the SVT formulation, but replaces the hard constraint
\(\mathcal{P}_\Omega(X)=\mathcal{P}_\Omega(M)\)
with a penalized reconstruction term on the observed entries.

To solve this minimization problem, SoftImpute proceeds as follows.
Given a current iterate \(X^k\), the method defines
\[
W^k
=
\mathcal{P}_\Omega(M) + \mathcal{P}_\Omega^\perp(X^k),
\]
where \(\mathcal{P}_\Omega^\perp\) denotes the complementary projection onto the unobserved entries.
This leads to the following equivalent reformulation:
\[
W^k
=
X^k + \mathcal{P}_\Omega(M-X^k).
\]
Thus, \(W^k\) coincides with the observed data on \(\Omega\) and with the current iterate on the complementary index set.
The next iterate is then defined by
\(
X^{k+1} = S_{\lambda}(W^k),
\)
where \(S_\lambda\) denotes the singular value soft-thresholding operator introduced in \eqref{eq:svt}.
Hence, each iteration of SoftImpute consists of an imputation step in which the missing entries are filled-in using the current iterate, followed by a singular value thresholding step that promotes a low-rank structure.
The parameter \(\lambda\) determines the amount of shrinkage applied to the singular values.
In contrast to SVT, SoftImpute is formulated entirely in terms of the primal variable.
Let \(X^\star\) denote a solution of the penalized nuclear norm minimization problem.
Then the objective values generated by SoftImpute decrease monotonically and converge to the optimal value.
Moreover, the convergence rate is of order \(\mathcal{O}(1/k)\).

\paragraph{Random forest regression.}
As a random-forest-based iterative imputation baseline, we use a MissForest-style procedure in which each incomplete column is treated as a supervised regression problem.
Let \(X^{(t)}\in\mathbb{R}^{n_1\times n_2}\) denote the completed matrix at iteration \(t\).
At each iteration, incomplete columns are visited sequentially. For an incomplete column \(j\), a random forest regressor \(f_j^{(t)}\) is trained on the observed rows
\(\{i:(i,j)\in\Omega\}\), using the current imputed predictor vectors \(X^{(t)}_{i,-j}\) and the observed responses \(M_{ij}\). 
Here, \(X^{(t)}_{i,-j}\) denotes the row-\(i\) predictor vector formed by all columns except the target column \(j\).
The trained model is then used to update the missing entries in column \(j\):
\(
X_{ij}^{(t+1)} =
f_j^{(t)}\!\left(X^{(t)}_{i,-j}\right), (i,j)\notin\Omega,
\)
while observed entries are kept fixed.
This round-robin procedure is repeated until either a maximum number of iterations is reached or the change between consecutive completed matrices falls below a predefined tolerance.

%The R package \texttt{missForest} follows the original \citet{stekhoven2012missforest} stopping rule: after each iteration, the difference between consecutive imputations is monitored, and for continuous-only data the iteration stops once this difference increases for the first time; for mixed-type data, the corresponding continuous and categorical differences are considered separately. When this stopping rule is triggered, \texttt{missForest} returns the previous imputation matrix. 

\paragraph{NuclearForest and SoftForest.}
We propose two hybrid imputation methods that combine low-rank completion with nonlinear random-forest refinement.
NuclearForest first obtains a low-rank initialization using SVT, whereas SoftForest uses SoftImpute.
This design changes the role of the random forest compared with fully iterative random-forest imputation.
Rather than starting from a simple univariate initialization and repeatedly updating incomplete features until convergence, as in MissForest-style methods, our methods first construct a structured low-rank completion and then apply a single random-forest refinement pass.
For each incomplete feature, a random-forest regressor is trained on the rows where that feature is observed, using the corresponding matrix completed by SVT or SoftImpute as the predictor space.
The fitted model is then applied once to the missing entries of that feature, while the originally observed entries remain fixed.
The single-pass design is deliberate: the low-rank stage already captures global correlation structure, so the random forest is used primarily as a nonlinear residual corrector that refines the low-rank estimate by modeling interaction effects and nonlinear dependencies.
Additional refinement passes are possible and may slightly improve accuracy in some settings, but they increase computational cost and provide only marginal further gains in our experiments.
We therefore use the single-pass random-forest refinement to emphasize the accuracy--efficiency trade-off of the proposed hybrid framework.

%\textbf{state in the abstract: wanna have global structure, then local nonlinear by regression model. save time+combine the global with local. USE KEY WORDS HYBRID. no formular is needed here. for this reason we call it nuclearforest and softforest.}

\section{Experiments and results}

%We will use uniform random masking for MCAR \citep{wei2018missing}. MAR occurs rarely, but we will show the result of MAR in the housing data set. It can be implemented by using a logistic model on another random column excluding the missing column itself. \circled{3} MNAR missing values were introduced by quantile cut-off for the full dataset \citep{lazar2016accounting}\citep{Webb-Robertson2015}. (THESE CITATIONS USED PROTEOMICS DATASET, MIGHT BE INAPPROPRIATE FOR OUR SETTING)

In our numerical experiments, we benchmark nine imputation methods on two datasets under three missingness mechanisms.
For each setting, missing entries are regenerated independently over 10 repeated experimental runs, ensuring different missingness masks across runs while preserving reproducibility. Neural-network-based imputation methods are not included as primary baselines because the focus of this study is on efficient classical and hybrid tabular imputers. This choice is consistent with recent tabular-learning benchmarks showing that tree-based models remain highly competitive and can outperform neural-network baselines on many medium-sized tabular datasets \citep{grinsztajn2022tree}. 
On the publicly available metabolomics dataset \citep{wei2018missing}, we use the four evaluation metrics adopted by \citet{wei2018missing}.
Since MAR is commonly approximated by MCAR in metabolomics benchmarks, we evaluate MCAR/MAR via uniform random masking and MNAR via left-censored missingness.
On the housing dataset, we use the same four metrics together with three additional task-specific metrics. For the MAR setting, we generated missing values using a conditional logistic masking mechanism, where the missingness probability of each variable was modeled as a function of other variables rather than the variable's own value \citep{schouten2018generating}.
Runtime is reported as wall-clock time in seconds and summarized as mean ± standard deviation (std) over repeated runs under the same local computational environment.
All experiments were run on macOS 14.1 using an Apple M3 Pro processor with 11 CPU cores and 18 GB of RAM. The code of this benchmark experiment as well as of the two proposed imputation algorithms will be publicly released upon acceptance to ensure reproducibility.

\paragraph{Ablations and SVD implementation in the SVT.}

\begin{wrapfigure}[10]{r}{0.35\textwidth}
    \centering
    
    \includegraphics[width=0.30\textwidth]{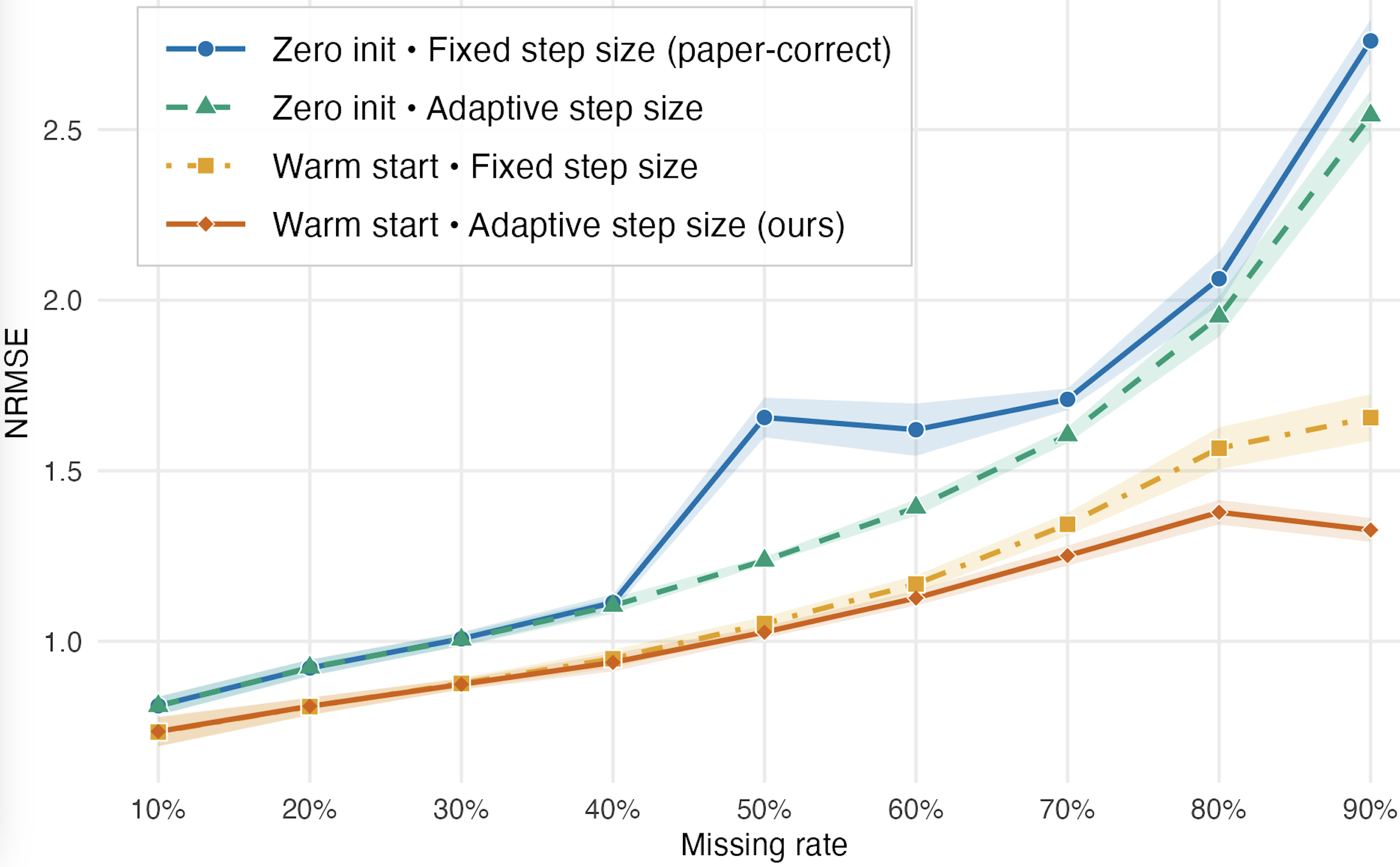}
    \caption{Ablation study.}
    \label{fig:wrapped_result}
    
\end{wrapfigure}

We evaluate the contribution of each design choice by isolating
(i) zero initialization versus column-mean warm start and
(ii) fixed versus adaptive step size.
Across the four metrics
(NRMSE, PCA Procrustes distance, \(p\)-value correlation, and PLS Procrustes
distance), both components improve imputation quality, with the full method
performing best overall under MCAR/MAR missingness.
\cref{fig:wrapped_result} summarizes the NRMSE ablation, while the
complete ablation results are reported in Appendix~\ref{app:ablation}.
We also compare dense full SVD with partial SVD.
The original SVT implementation of \citet{cai2010singular} exploits the
sparsity of the dual iterate \(Y^k\) and computes only the dominant
singular values needed by the shrinkage operator, using PROPACK/Lanczos
bidiagonalization.
This is appropriate for large sparse matrix completion
problems.
In contrast, our experimental matrices are dense after initialization. On the metabolomics dataset (\(198\times131\)),
the full-SVD implementation achieved a \(2.89\times\) speedup over the
partial-SVD SVT variant, with indistinguishable imputation error (Appendix~\ref{app:partial-full-svd} ).
For this reason, we use the full SVD in all experiments.
For large sparse datasets, one may instead use partial SVD approximations, which are commonly employed to reduce the cost of singular-value shrinkage when only the leading singular components are needed.
\begin{figure}[h]
    \centering

    \begin{subfigure}[b]{0.48\textwidth}
        \centering
        \includegraphics[width=\linewidth,height=0.27\textheight,keepaspectratio]{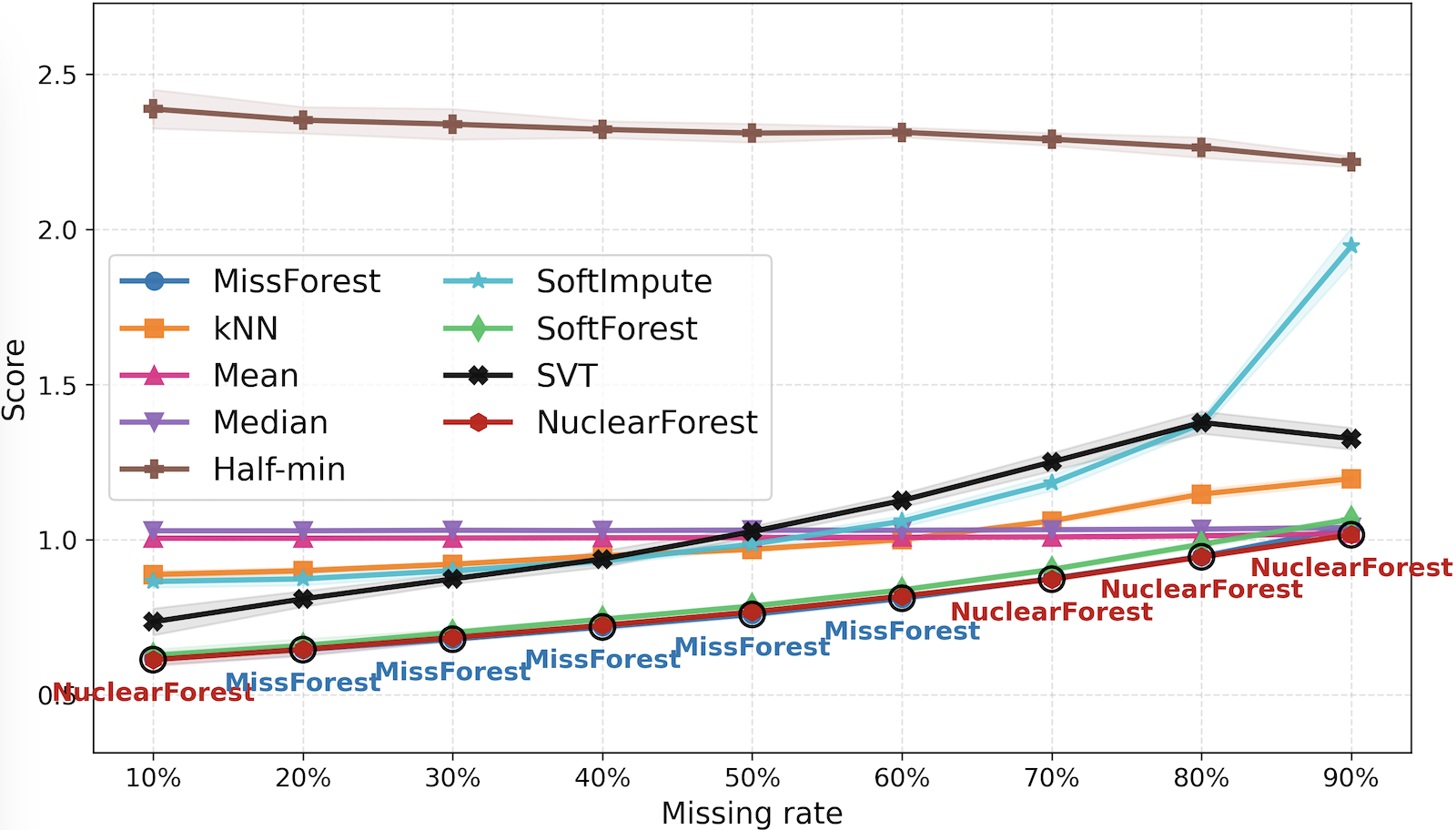}
        \caption{NRMSE}
        \label{fig:sub11}
    \end{subfigure}
    \hfill
    \begin{subfigure}[b]{0.48\textwidth}
        \centering
        \includegraphics[width=\linewidth,height=0.27\textheight,keepaspectratio]{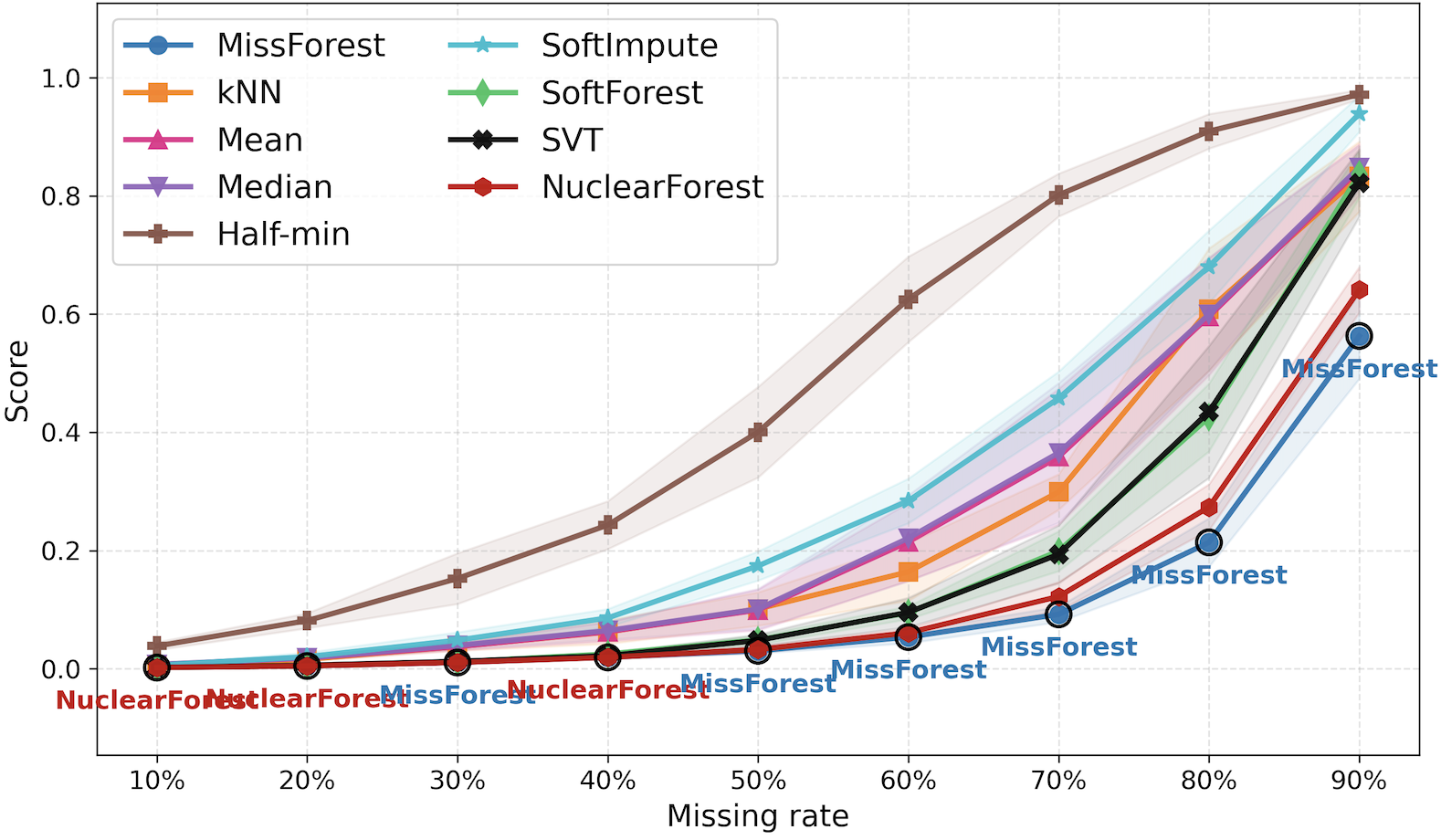}
        \caption{PCA Procrustes distance}
        \label{fig:sub12}
    \end{subfigure}

    \centering
    \begin{subfigure}[b]{0.48\textwidth}
        \centering
        \includegraphics[width=\linewidth,height=0.27\textheight,keepaspectratio]{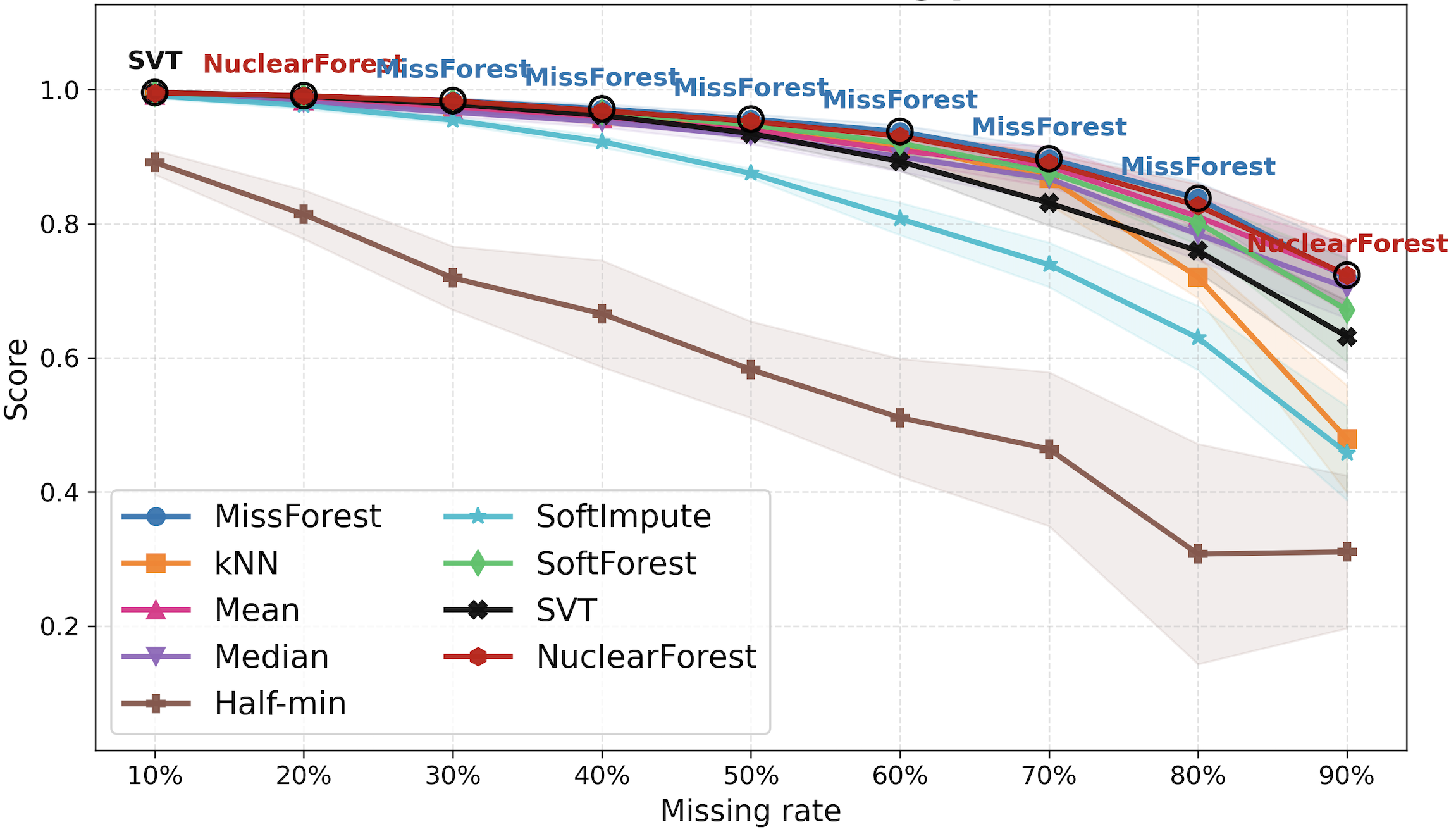}
        \caption{Pearson log-\(p\) correlation}
        \label{fig:sub13}
    \end{subfigure}
    \hfill
    \begin{subfigure}[b]{0.48\textwidth}
        \centering
        \includegraphics[width=\linewidth,height=0.27\textheight,keepaspectratio]{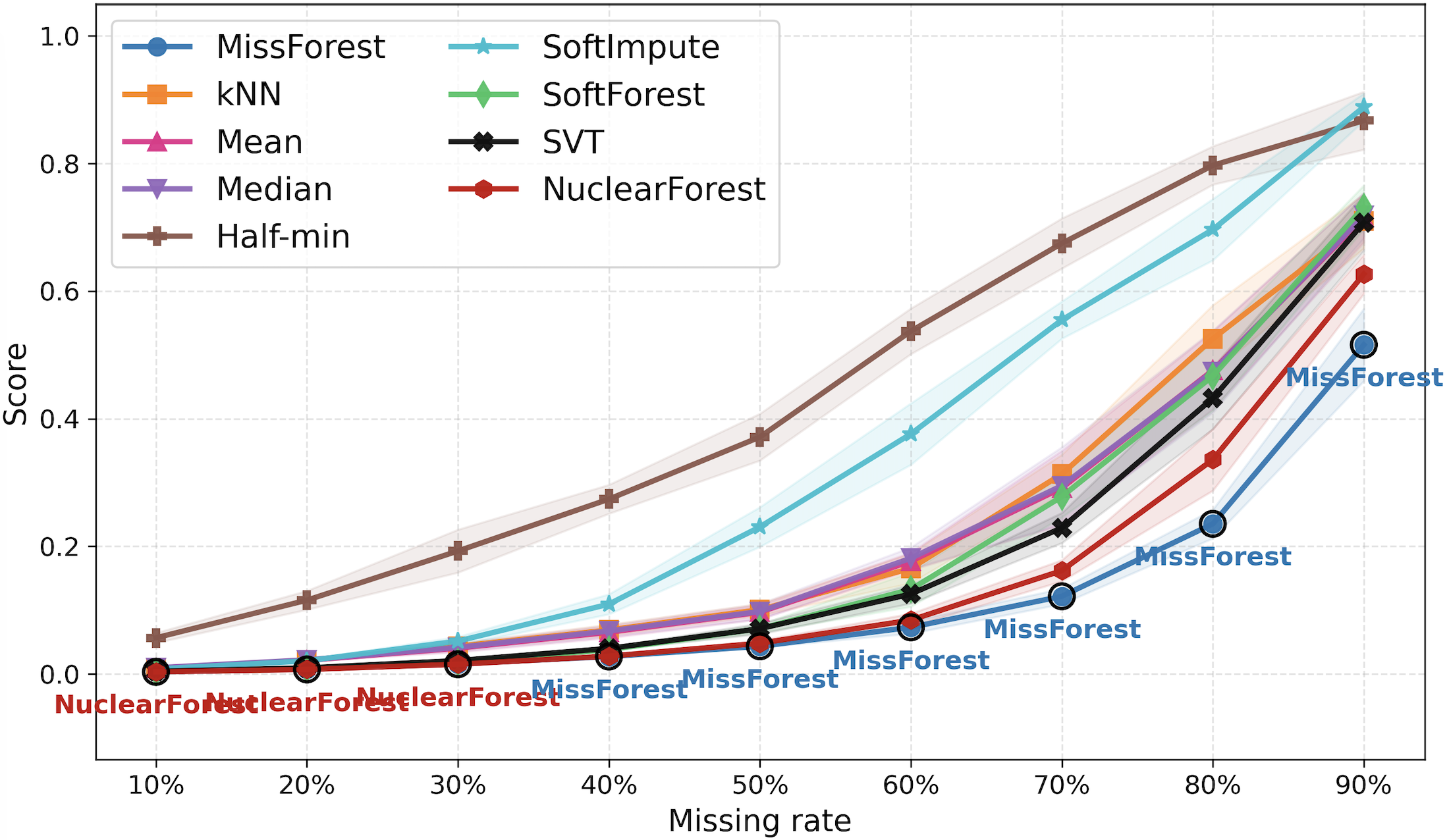}
        \caption{PLS Procrustes distance}
        \label{fig:sub14}
    \end{subfigure}

    \caption{Comparison of imputation methods on the metabolomics dataset under MCAR/MAR.}
    \label{fig:naturemcar}
\end{figure}

\paragraph{Metabolomics dataset under MCAR/MAR missingness.}
NRMSE is computed on masked entries after column-wise \(z\)-score standardization of both the ground-truth and imputed matrices, using means and standard deviations estimated from the complete ground-truth matrix. This transformation enables an unbiased comparison when using NRMSE by standardizing variable scales, preventing variables with larger values from dominating the evaluation \citep{wei2018missing}.
As shown in \cref{fig:sub11}, the imputation quality of MissForest, NuclearForest, and SoftForest is very close to each other with respect to the NRMSE metric.
As can be observed, MissForest performs slightly better for missing rates between 20\% and 60\%.
Next, PCA is performed using the first two principal components, as they capture the greatest variance in the data \citep{wei2018missing}.
Moreover, the symmetric Procrustes sum of squared errors is computed to compare the distribution.
In \cref{fig:sub12}, NuclearForest yields the lowest PCA Procrustes distance at 10\%, 20\%, and 40\% missingness.
Following \citet{wei2018missing}, we evaluate whether imputation preserves univariate group-difference signals by conducting Welch's two-sample \(t\)-tests for each variable between groups in the complete and imputed data, and then computing the Pearson correlation between the resulting log-transformed \(p\)-values.
As shown in \cref{fig:sub13}, NuclearForest achieves the highest Pearson log-\(p\) correlation values at 20\% and 90\% missingness and remains competitive across the remaining missing rates, where MissForest obtains the best results.
PLS Procrustes analysis is used to quantify the structural distortion.
NuclearForest achieves the lowest PLS Procrustes distance for missing rates between 10\% and 30\% in \cref{fig:sub14}. As shown in \cref{tab:runtime_resultsbio}, the hybrid methods (SoftForest and NuclearForest) remain significantly faster, providing \textasciitilde9.52\texttimes{} and \textasciitilde5.81\texttimes{} speedups, respectively.
Further runtime analysis is provided in Appendix~\ref{app:runtime_scaling}.

\begin{table}[h]
\centering

\caption{Runtime comparison for metabolomics dataset, reported in seconds as mean ± std.}

\label{tab:runtime_resultsbio}
\begin{tabular}{lccc}
\toprule
 Mechanism & MissForest

 & SoftForest 

 & NuclearForest \\
\midrule
MCAR/MAR
& 77.787 ± 21.426 

& 8.171 ± 1.931 

& 13.389 ± 2.365 \\

MNAR
& 89.438 ± 22.845 

& 4.344 ± 2.137 

& 9.797 ± 2.381 \\
\bottomrule
\end{tabular}
\end{table}

\begin{figure}[H]
    \centering

    \begin{subfigure}[b]{0.49\textwidth}
        \centering
        \includegraphics[width=\textwidth]{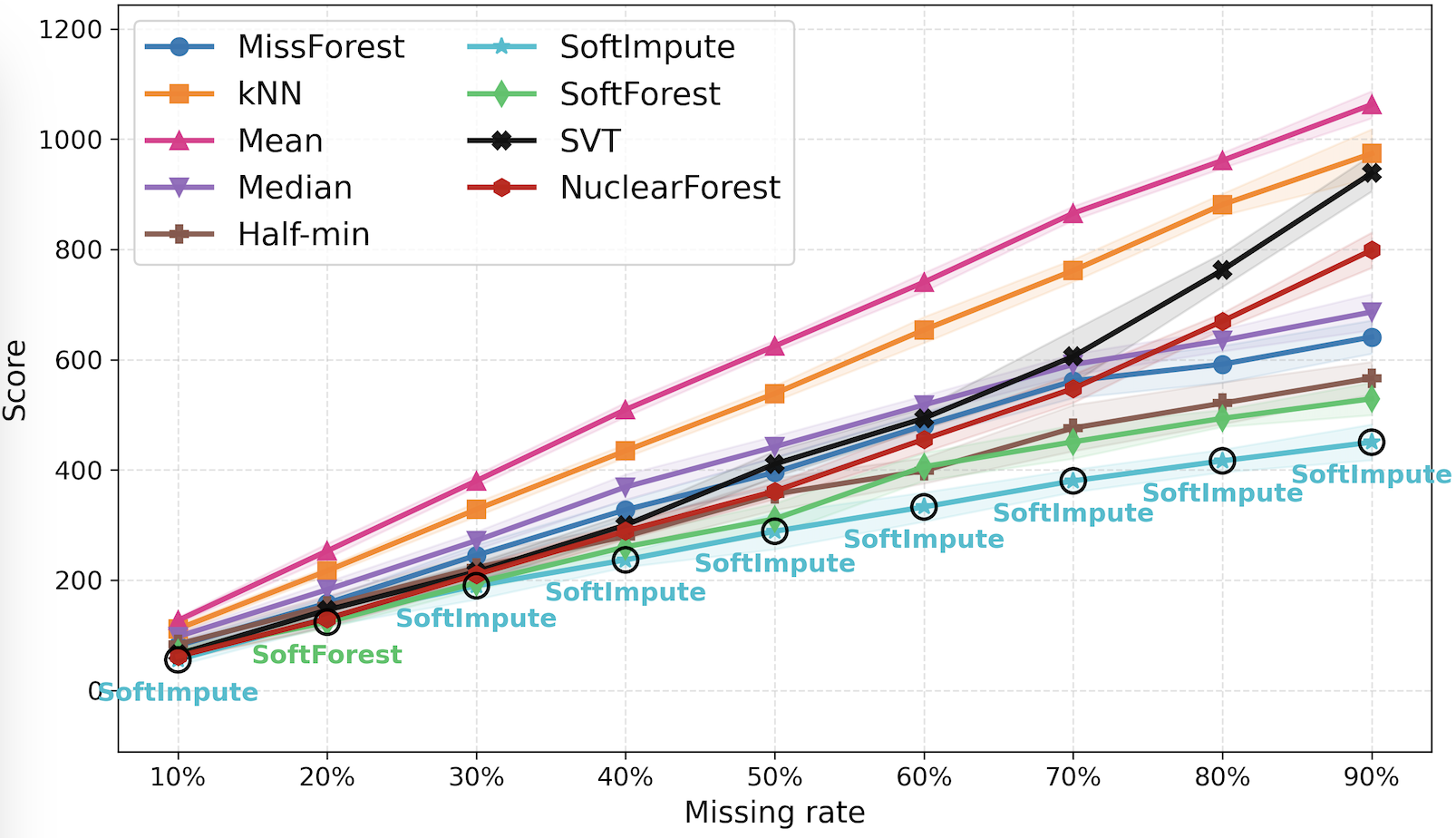}
        \caption{SOR}
        \label{fig:sub21}
    \end{subfigure}
    \hfill
    \begin{subfigure}[b]{0.49\textwidth}
        \centering
        \includegraphics[width=\textwidth]{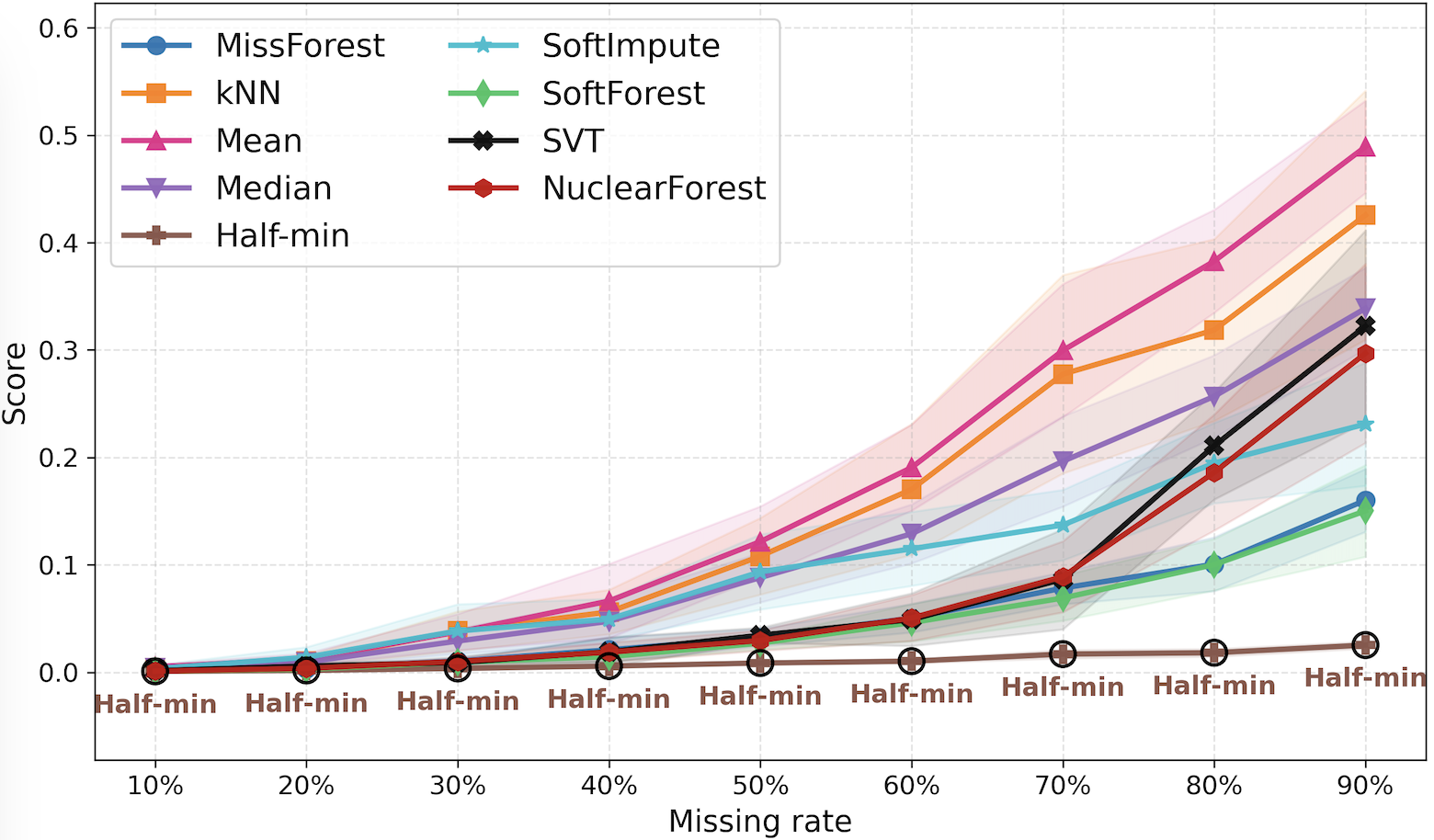}
        \caption{PCA Procrustes distance}
        \label{fig:sub22}
    \end{subfigure}

    \begin{subfigure}[b]{0.49\textwidth}
        \centering
        \includegraphics[width=\textwidth]{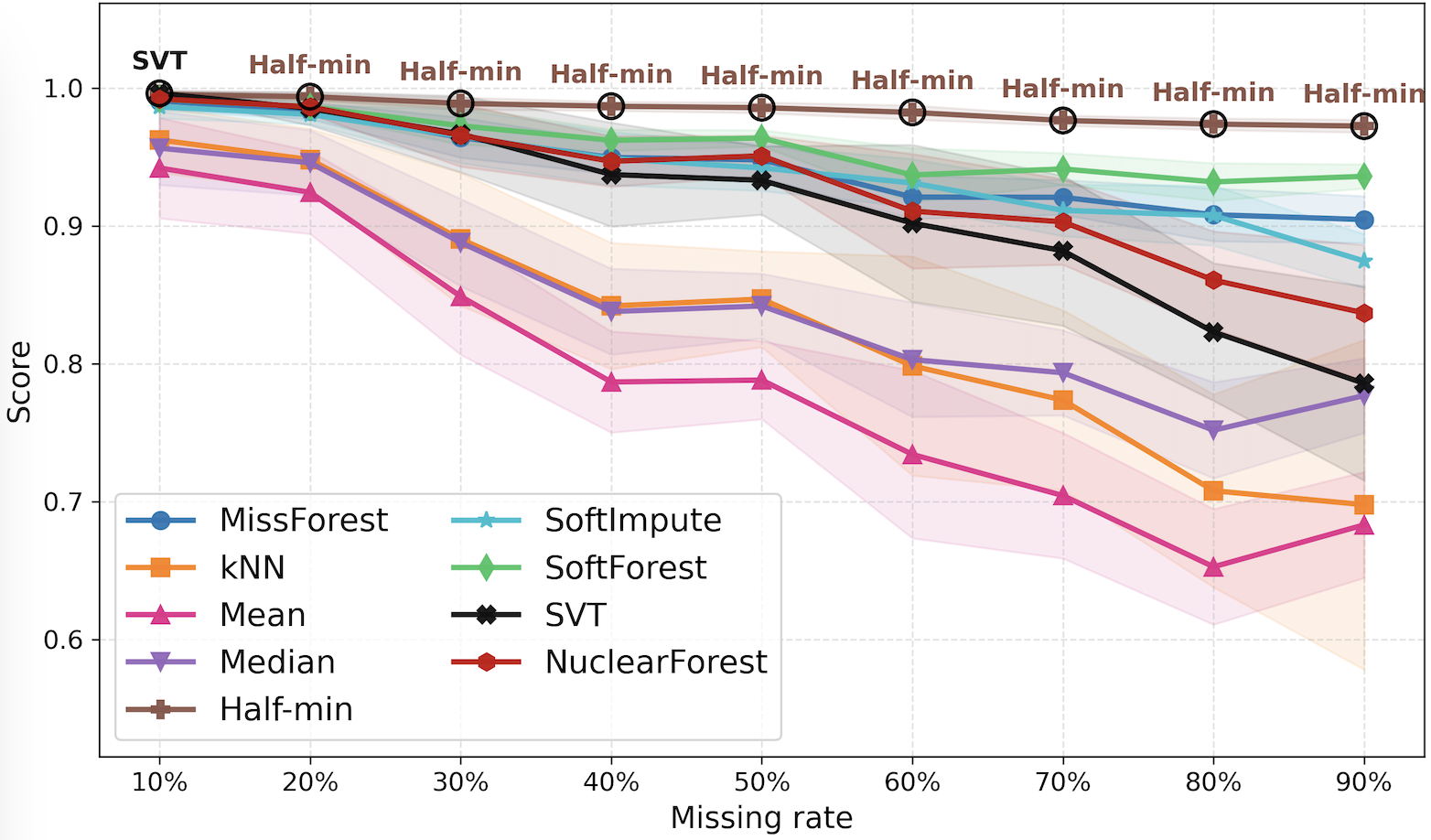}
        \caption{Pearson Log-\(p\) correlation}
        \label{fig:sub23}
    \end{subfigure}
    \hfill
    \begin{subfigure}[b]{0.49\textwidth}
        \centering
        \includegraphics[width=\textwidth]{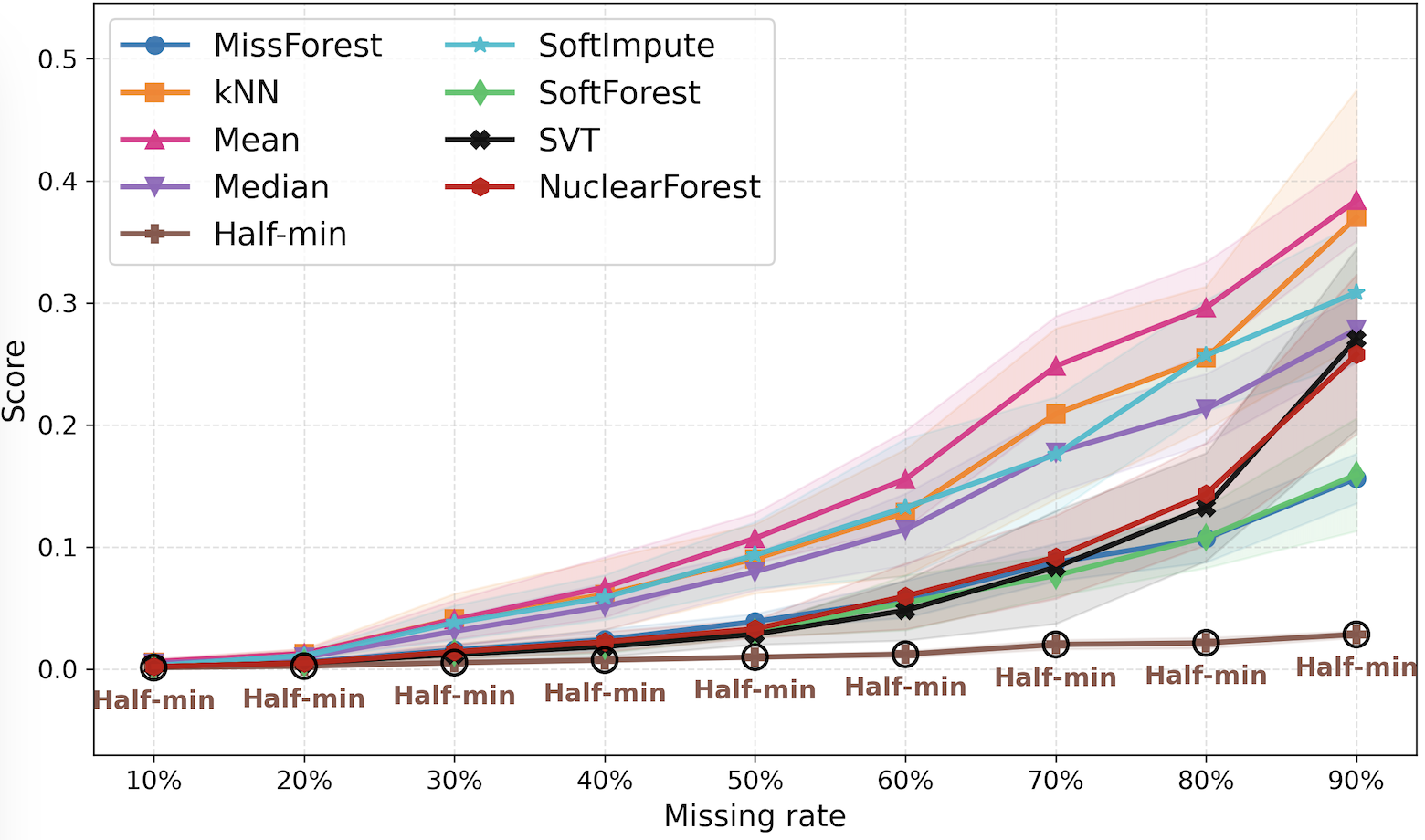}
        \caption{PLS Procrustes distance}
        \label{fig:sub24}
    \end{subfigure}
    \caption{Comparison of the imputation methods using metabolomics dataset under MNAR}    
    \label{fig:naturemnar}
\end{figure}

\paragraph{Metabolomics dataset under MNAR missingness.}
A non-parametric method called NRMSE-based sum of ranks (SOR) is used to evaluate the imputation error for the skewness of the MNAR distribution \citep{wei2018missing}.
\cref{fig:sub21} indicates that SoftImpute demonstrates a clear advantage in this metric, achieving the lowest values across all missingness levels except at 20\% missingness, where SoftForest performs best.
As shown in \cref{fig:naturemnar} \subref{fig:sub22}--\subref{fig:sub24}, the left-censored Half-min imputation performs best for three other metrics, yielding results similar to those reported by \citet{wei2018missing}.
This behavior is expected under the MNAR design, where missing values are generated from low-abundance entries below feature-specific thresholds.
Since Half-min replaces missing entries with a small value, its inductive bias matches the left-censored missingness mechanism.

\begin{figure}[t]
    \centering

    \begin{subfigure}[t]{0.32\textwidth}
        \centering
        \includegraphics[width=\textwidth]{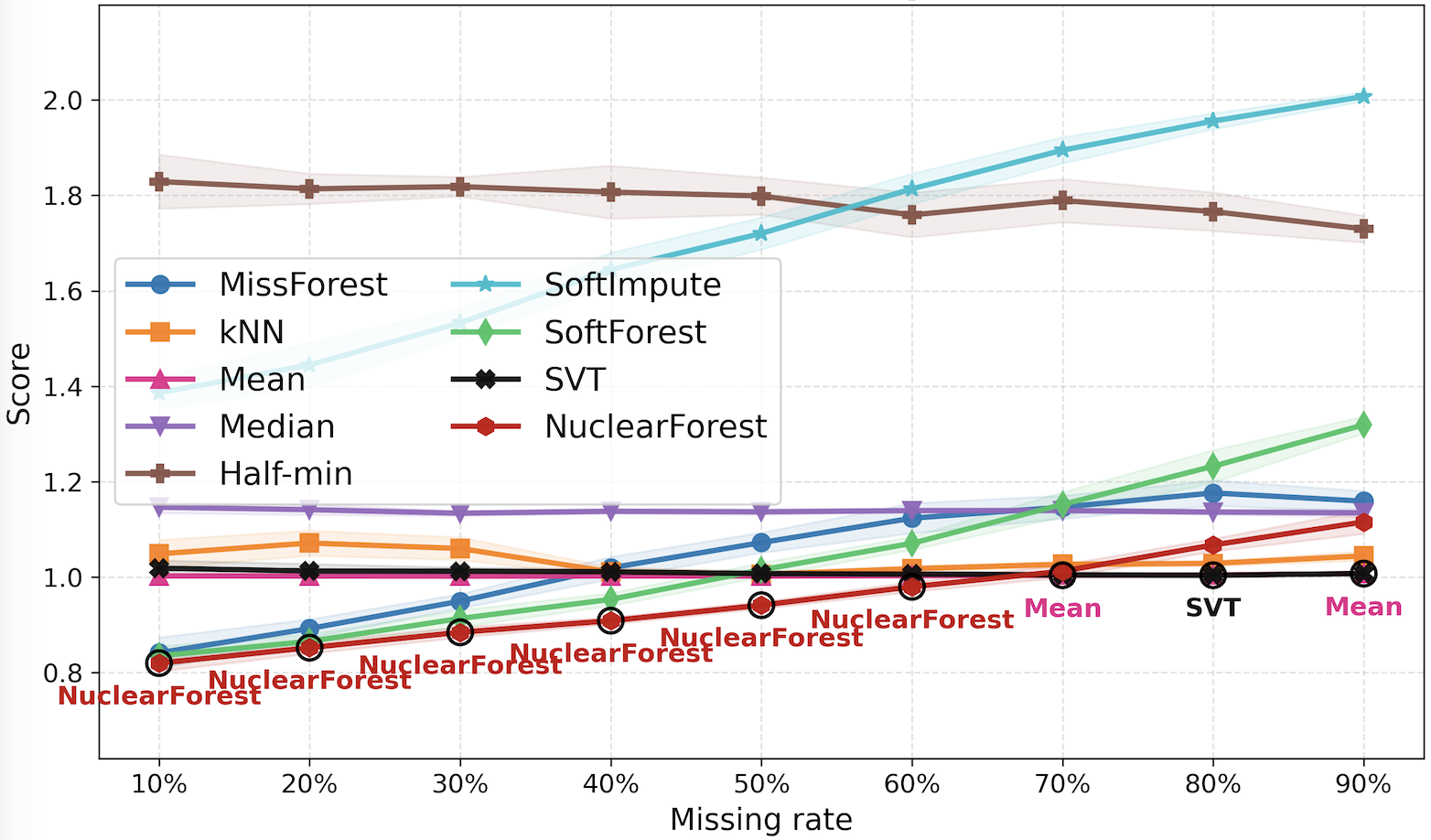}
        \caption{NRMSE}
        \label{fig:sub31}
    \end{subfigure}
    \hfill
    \begin{subfigure}[t]{0.32\textwidth}
        \centering
        \includegraphics[width=\textwidth]{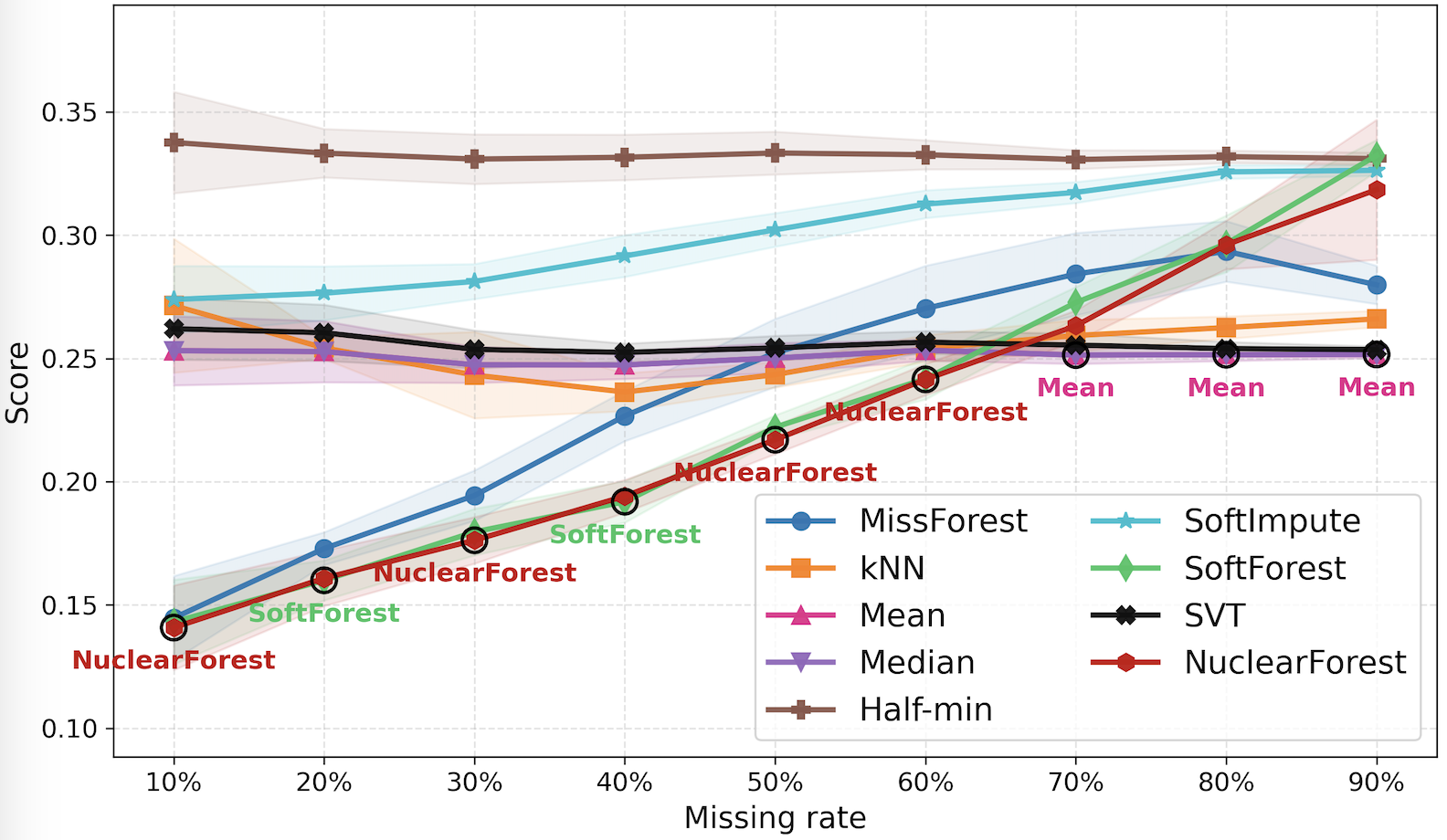}
        \caption{PFC}
        \label{fig:sub32}
    \end{subfigure}
    \hfill
    \begin{subfigure}[t]{0.32\textwidth}
        \centering
        \includegraphics[width=\textwidth]{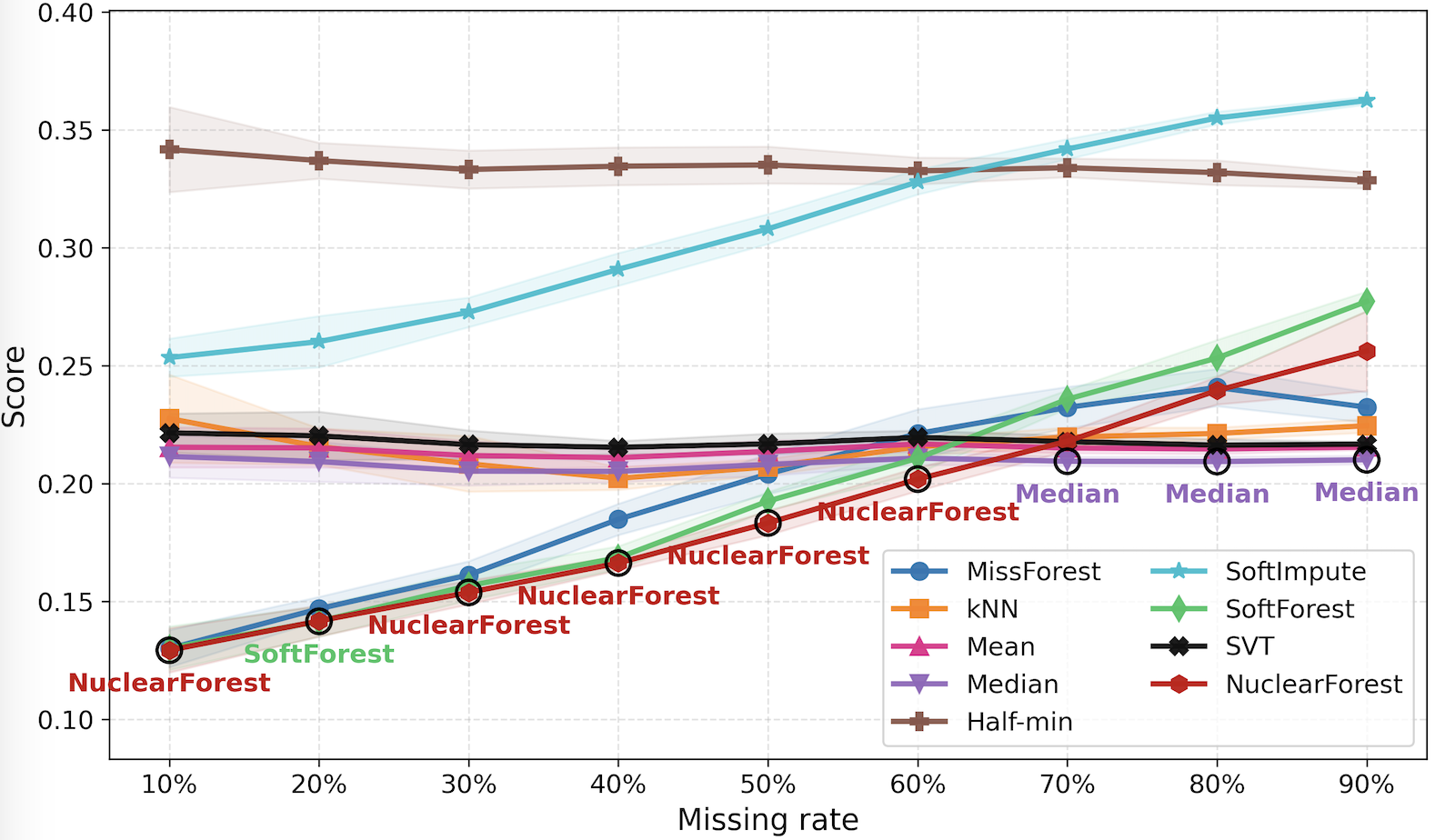}
        \caption{Gower's distance}
        \label{fig:sub33}
    \end{subfigure}

    \begin{subfigure}[t]{0.32\textwidth}
        \centering
        \includegraphics[width=\textwidth]{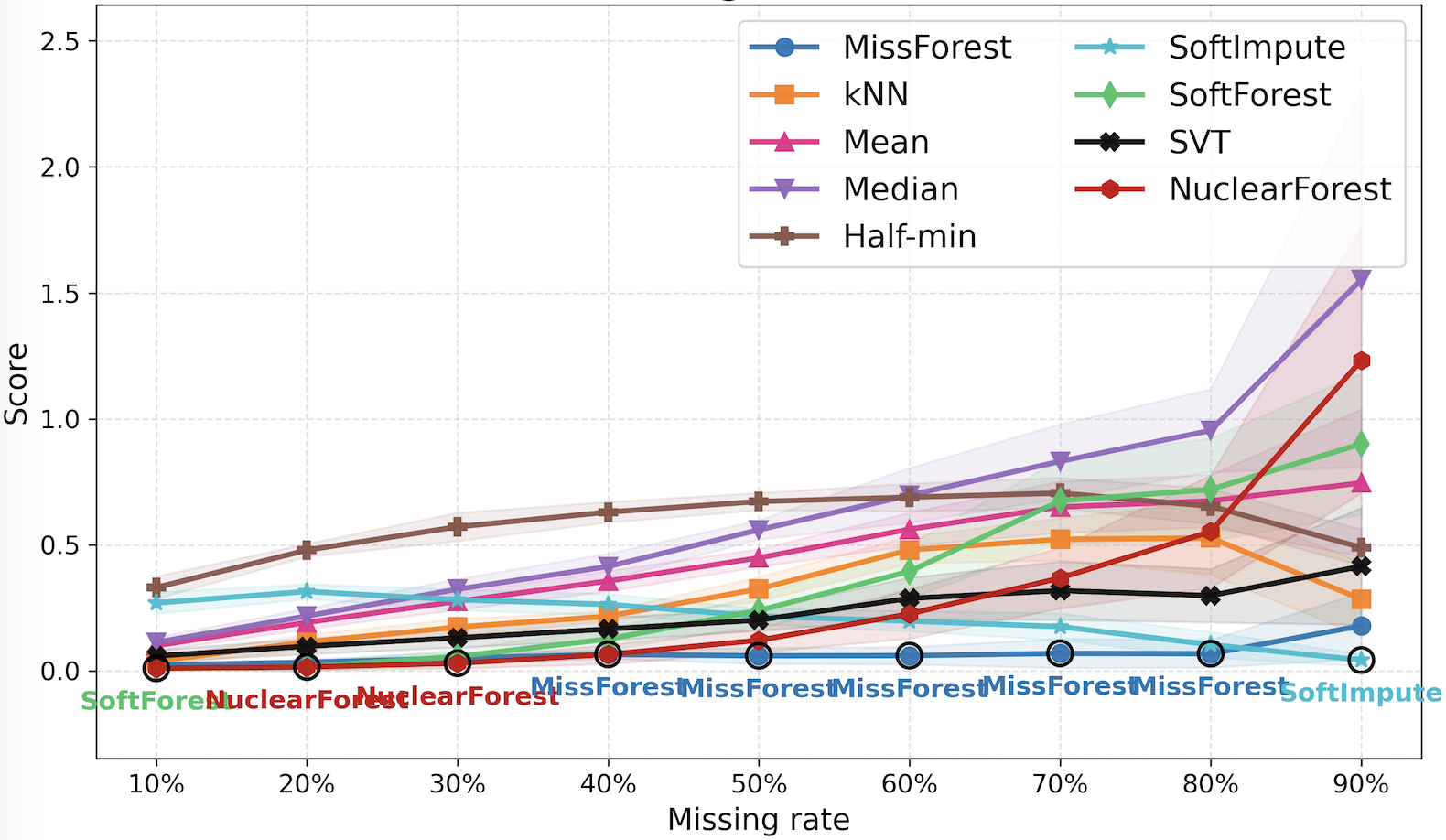}
        \caption{Predictive R\textsuperscript{2} degradation}
        \label{fig:sub34}
    \end{subfigure}
    \hfill
    \begin{subfigure}[t]{0.32\textwidth}
        \centering
        \includegraphics[width=\textwidth]{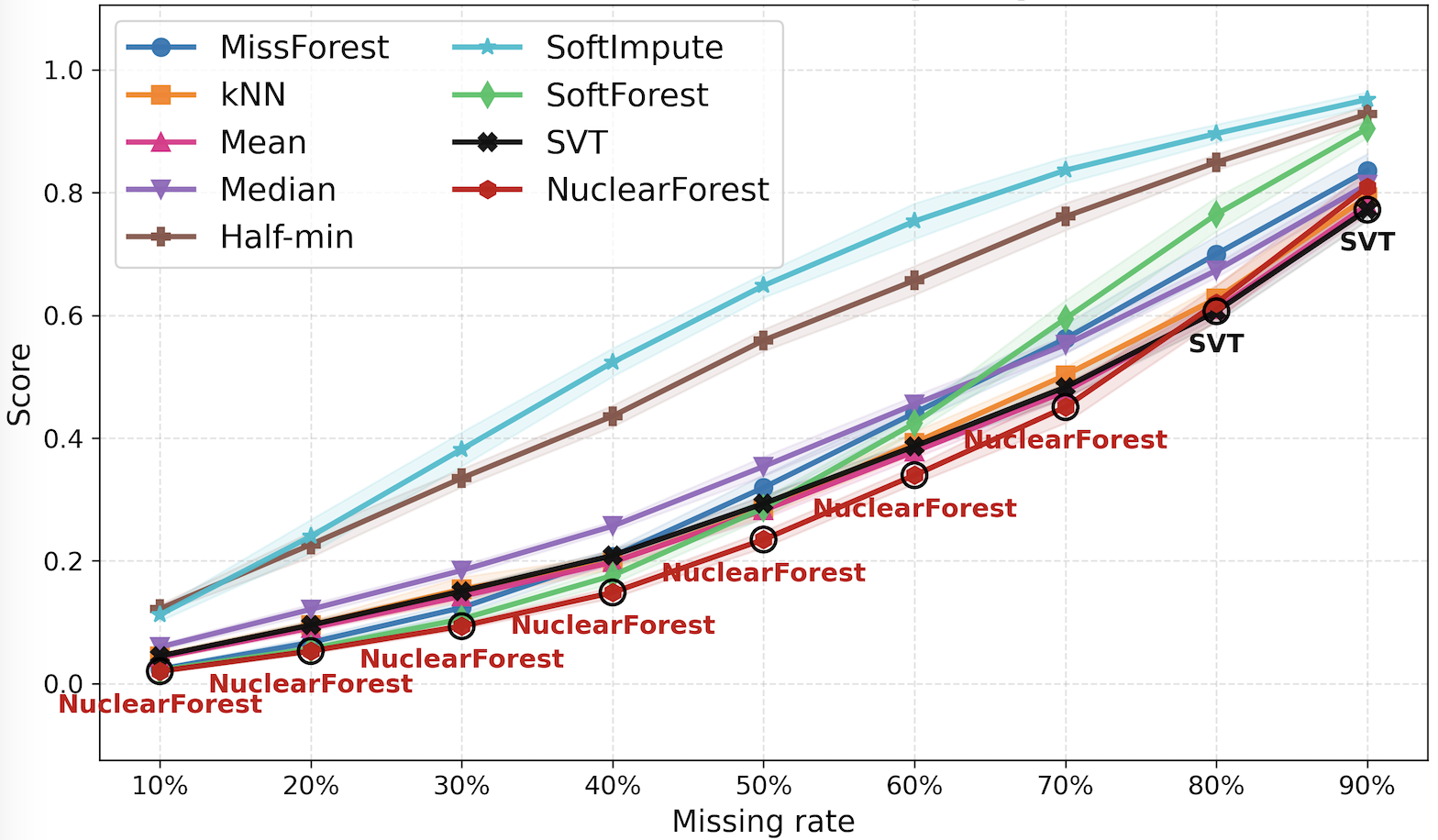}
        \caption{PCA Procrustes distance}
        \label{fig:sub35}
    \end{subfigure}
    \hfill
    \begin{subfigure}[t]{0.32\textwidth}
        \centering
        \includegraphics[width=\textwidth]{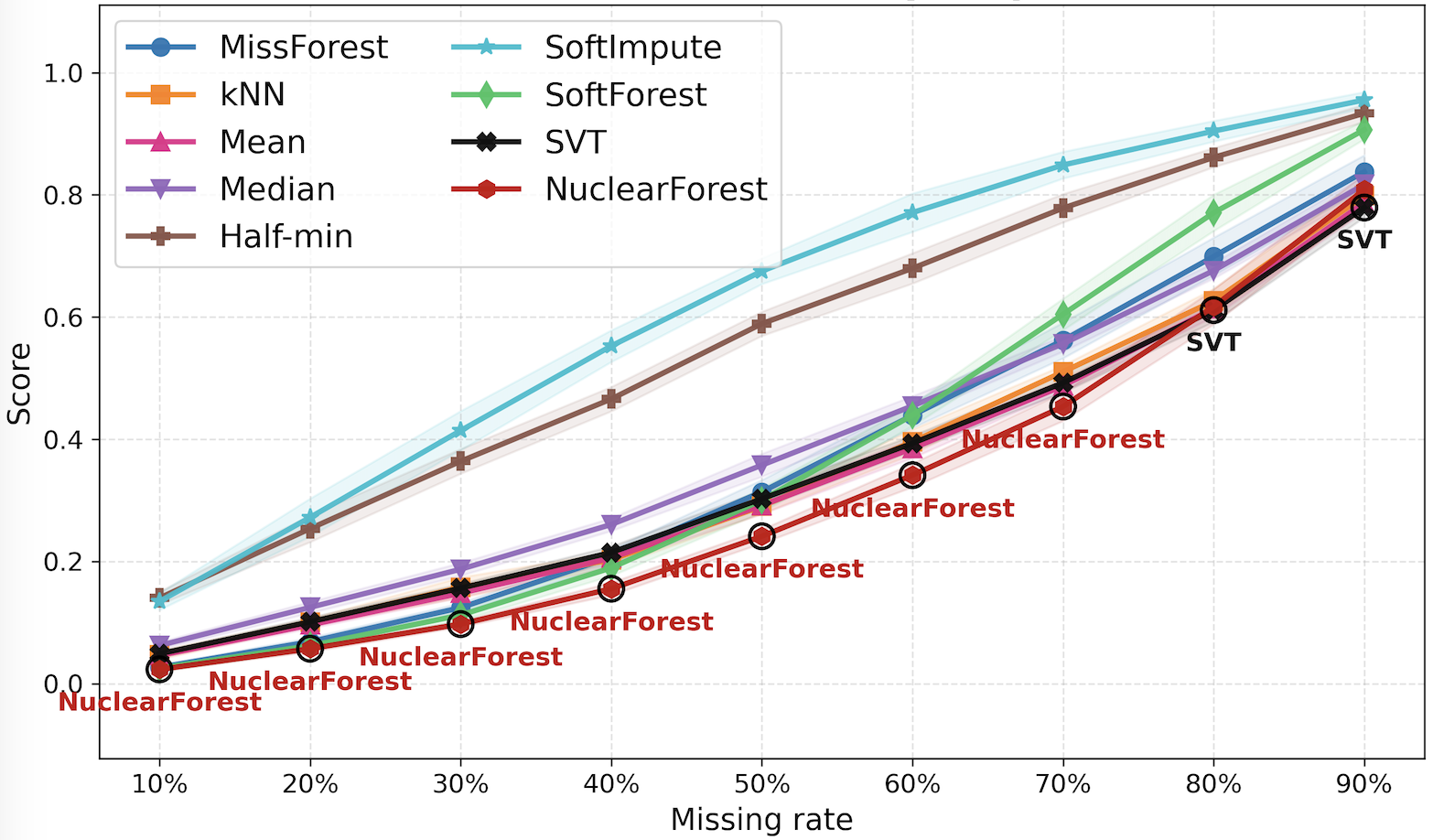}
        \caption{PLS Procrustes distance}
        \label{fig:sub36}
    \end{subfigure}

    \begin{subfigure}[t]{0.32\textwidth}
        \centering
        \includegraphics[width=\textwidth]{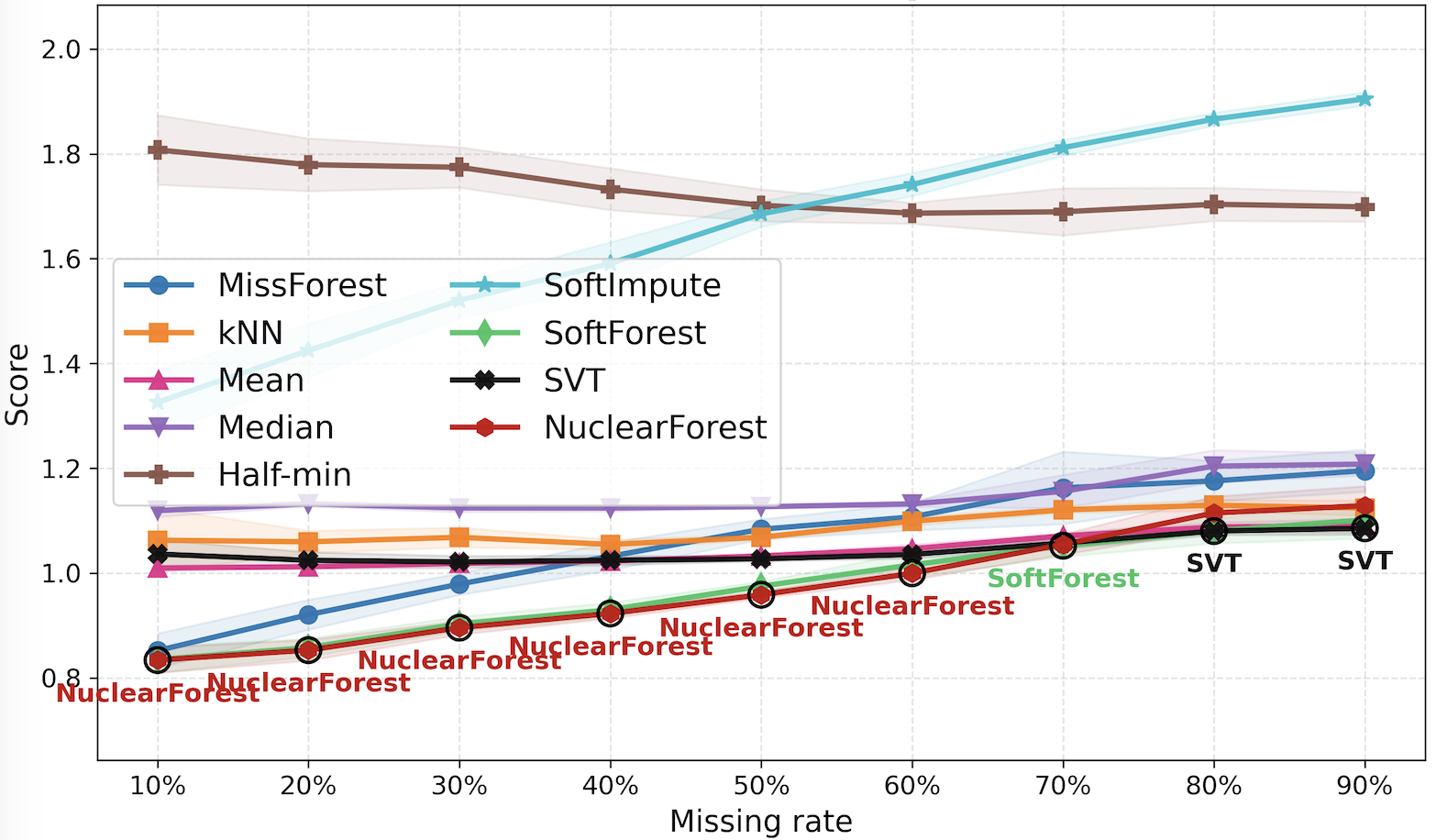}
        \caption{NRMSE}
        \label{fig:sub37}
    \end{subfigure}
    \hfill
    \begin{subfigure}[t]{0.32\textwidth}
        \centering
        \includegraphics[width=\textwidth]{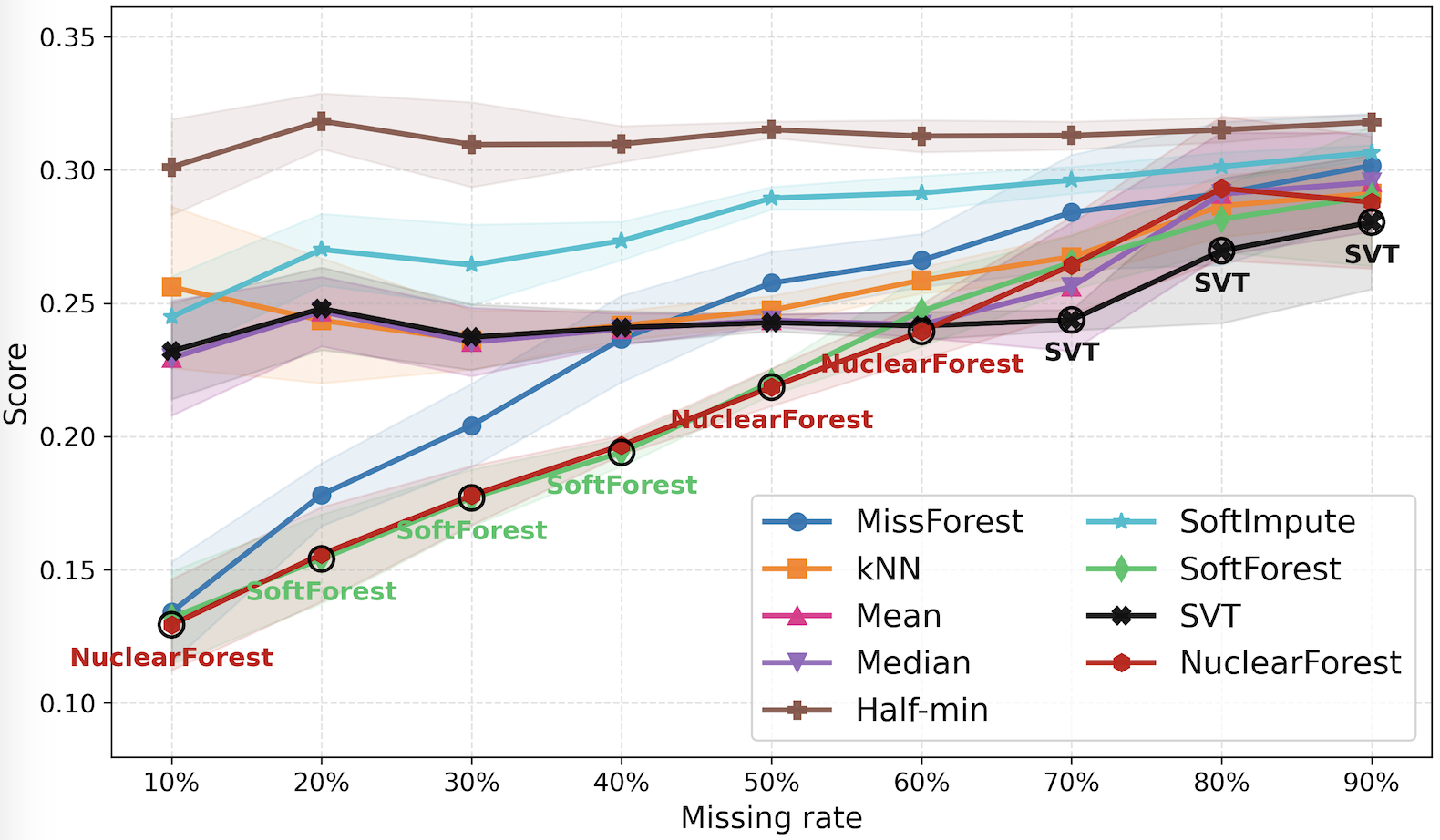}
        \caption{PFC}
        \label{fig:sub38}
    \end{subfigure}
    \hfill
    \begin{subfigure}[t]{0.32\textwidth}
        \centering
        \includegraphics[width=\textwidth]{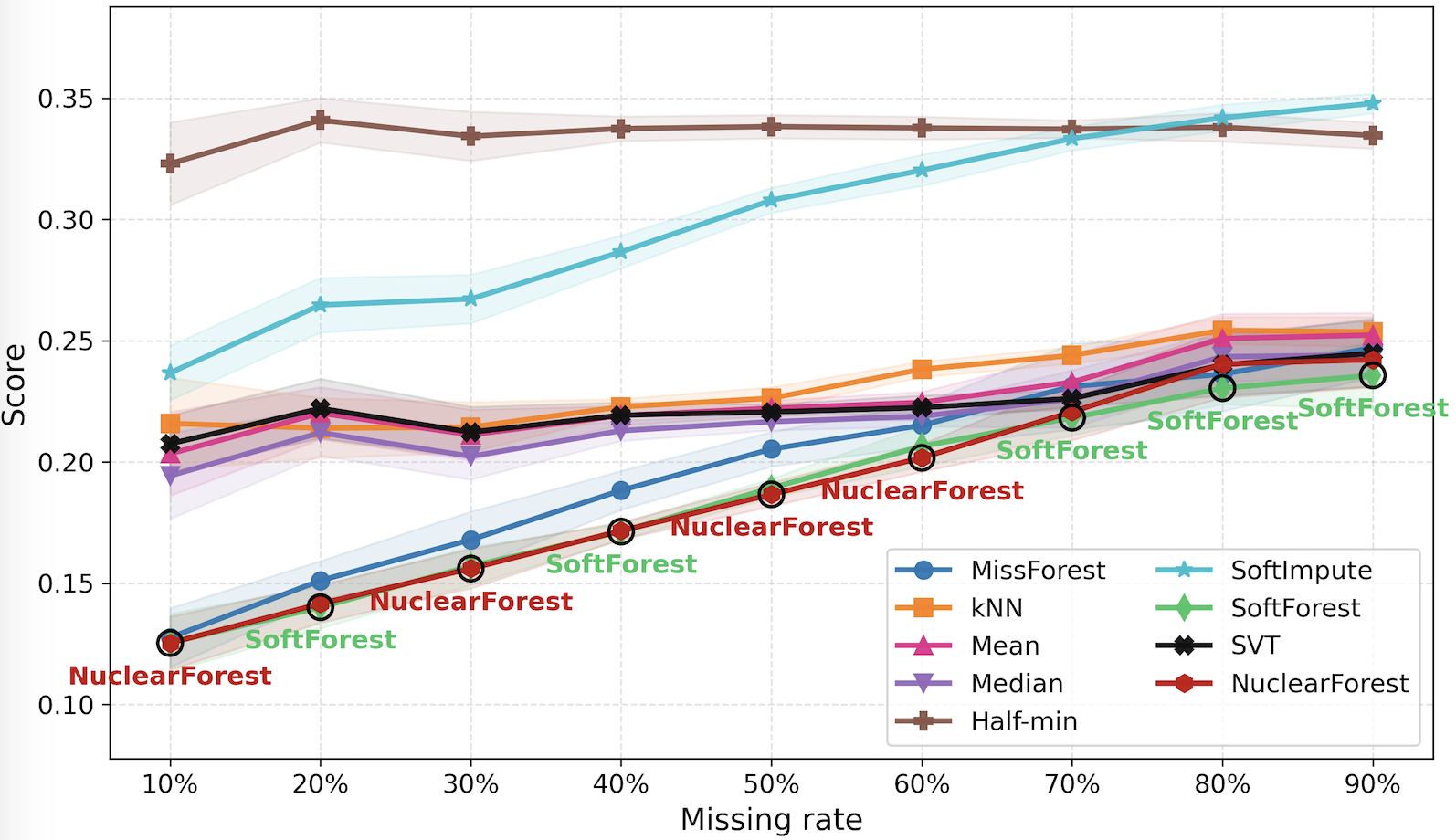}
        \caption{Gower's distance}
        \label{fig:sub39}
    \end{subfigure}

    \begin{subfigure}[t]{0.32\textwidth}
        \centering
        \includegraphics[width=\textwidth]{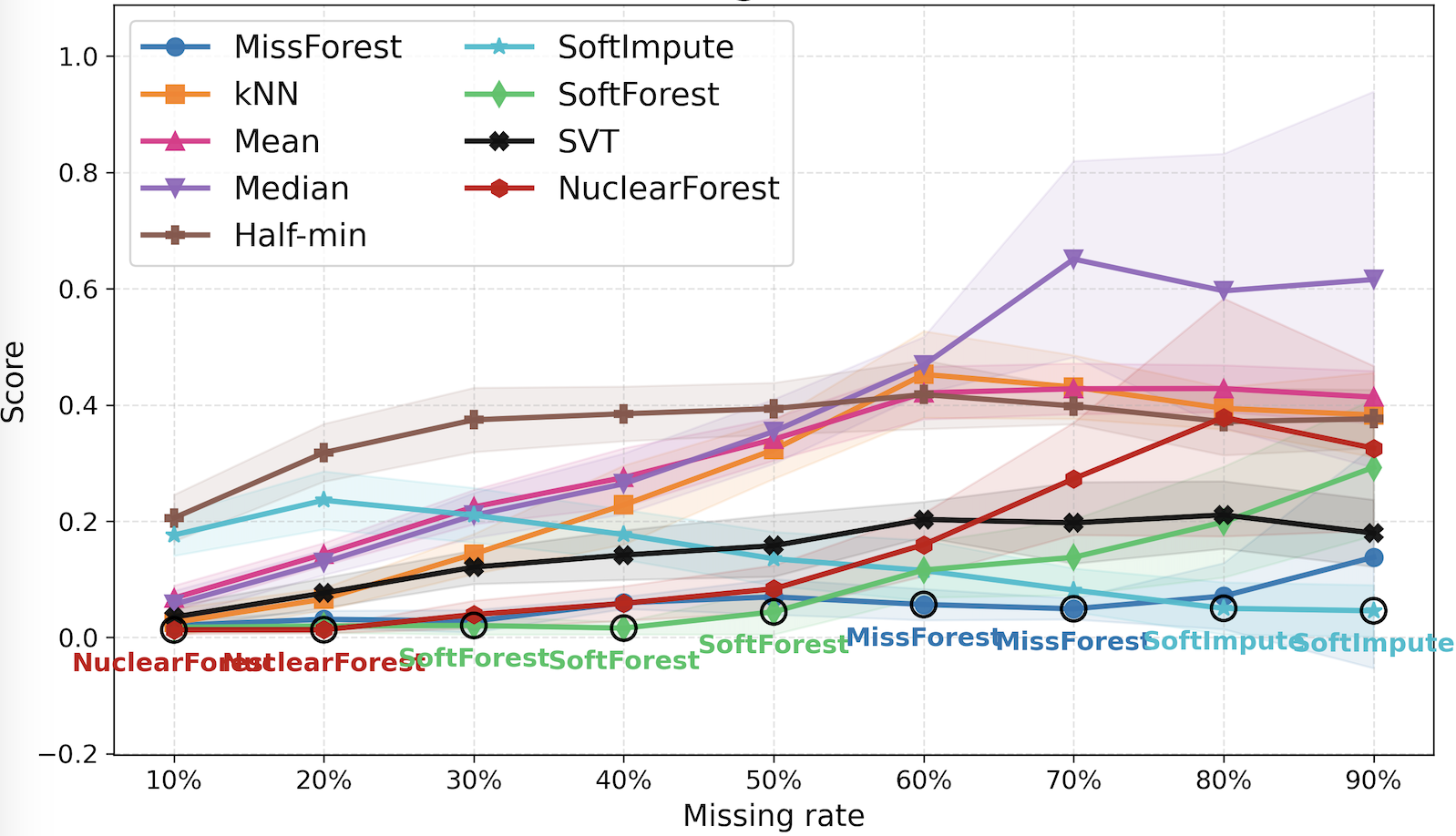}
        \caption{Predictive R\textsuperscript{2} degradation}
        \label{fig:sub40}
    \end{subfigure}
    \hfill
    \begin{subfigure}[t]{0.32\textwidth}
        \centering
        \includegraphics[width=\textwidth]{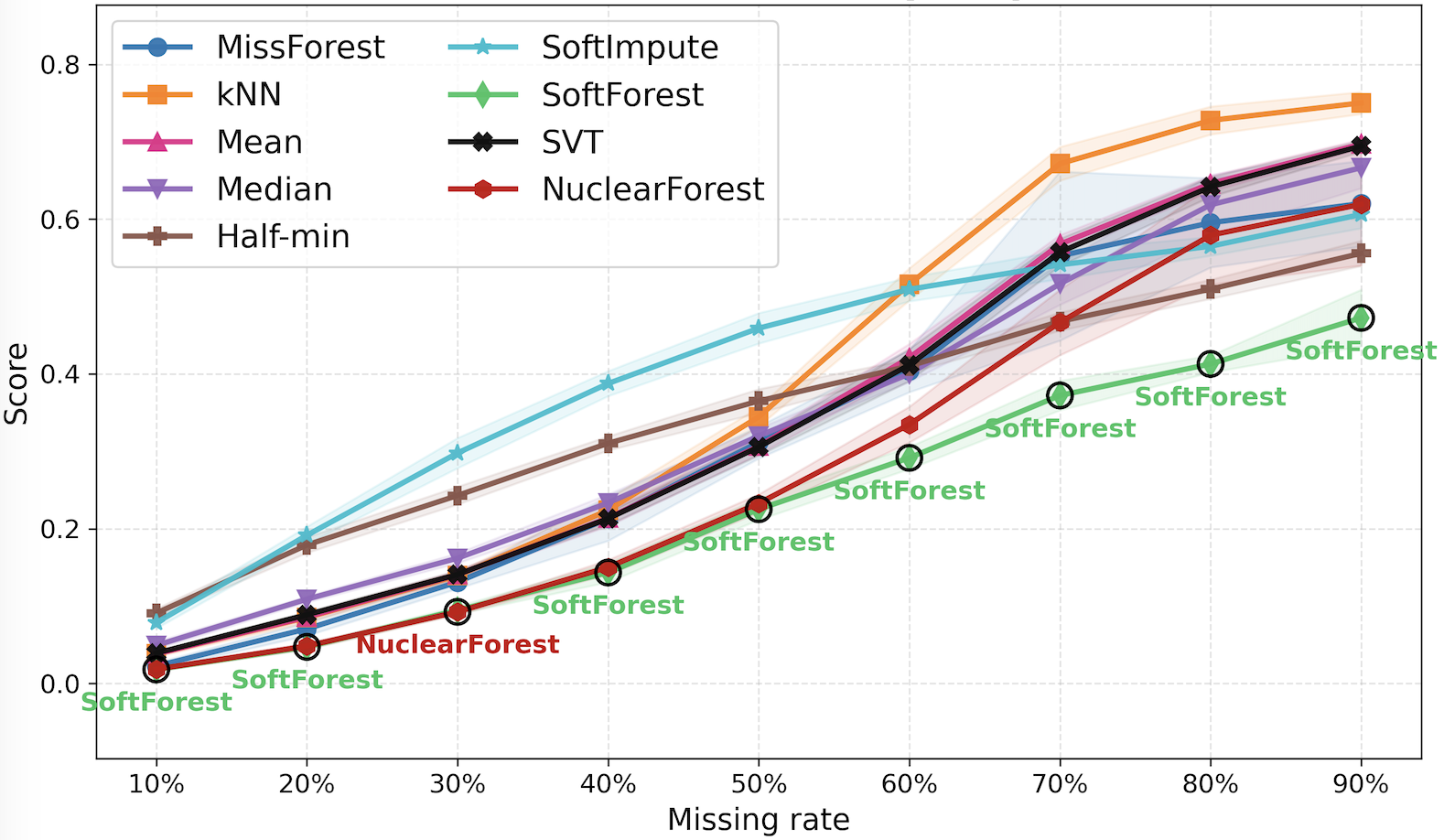}
        \caption{PCA Procrustes distance}
        \label{fig:sub41}
    \end{subfigure}
    \hfill
    \begin{subfigure}[t]{0.32\textwidth}
        \centering
        \includegraphics[width=\textwidth]{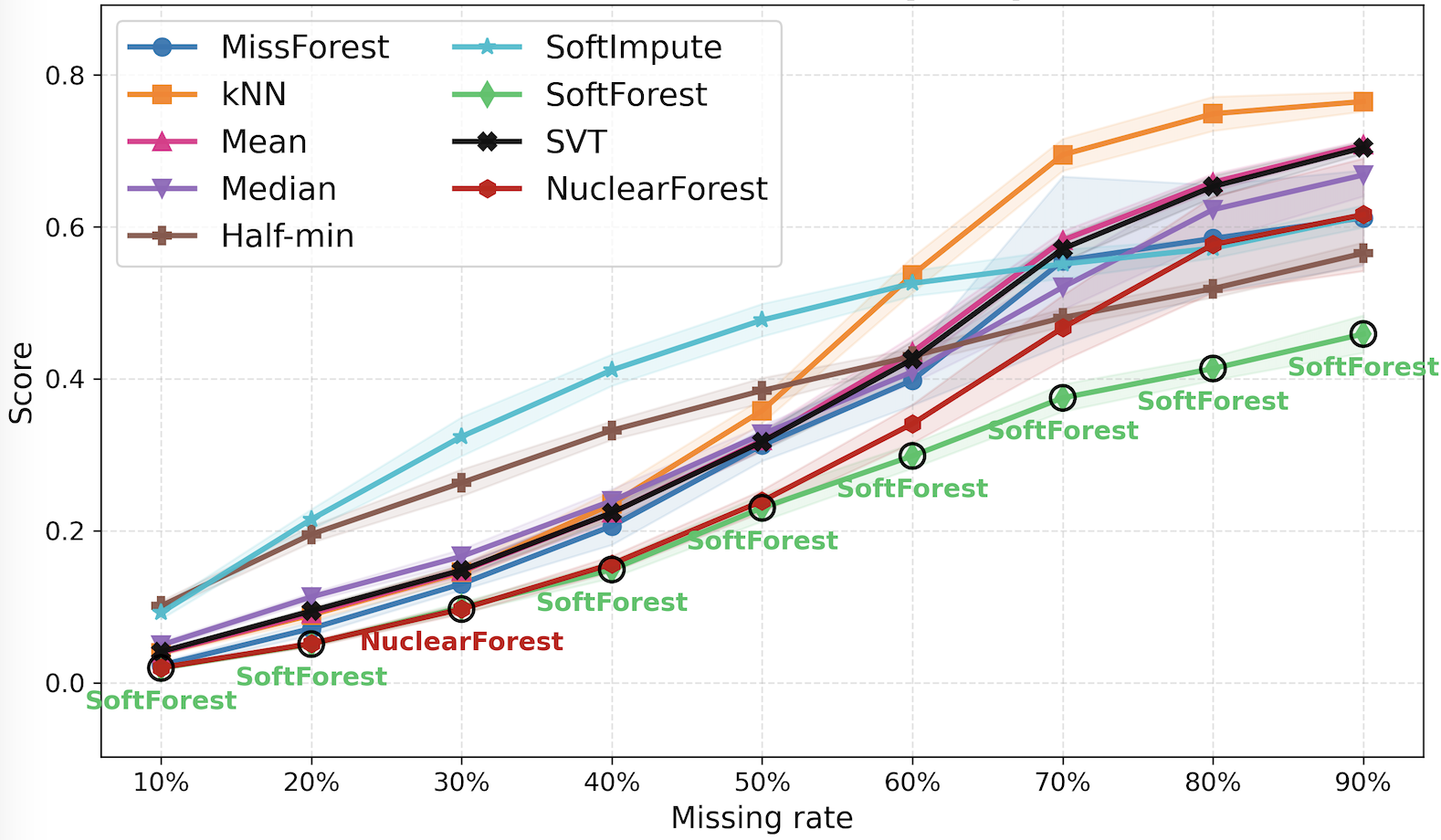}
        \caption{PLS Procrustes distance}
        \label{fig:sub42}
    \end{subfigure}

    \caption{Comparison of imputation methods under MCAR (a--f) and MAR (g--l).}
    \label{fig:housing}
\end{figure}
\begin{table}[t]
\centering
\caption{Runtime comparison on the housing dataset, reported in seconds as mean ± std.}
\label{tab:runtime_results}
\begin{tabular}{lccc}
\toprule
Mechanism 
& MissForest

& SoftForest 

& NuclearForest \\
\midrule
MCAR
& 5.934 ± 0.187 

& 0.664 ± 0.045 

& 0.659 ± 0.047 \\
MAR
& 5.973 ± 0.198 

& 0.661 ± 0.039 

& 0.661 ± 0.034 \\
\bottomrule
\end{tabular}
\end{table}

\paragraph{Housing dataset under MCAR and MAR missingness.}
We further benchmark data imputation on the publicly available Kaggle housing dataset \citep{housing_price_kaggle} under MCAR and MAR mechanisms, having six numerical and seven categorical features.
In terms of estimation error, NRMSE after \(z\)-score transformation is used and for the binary/one hot columns, we calculate the proportion of falsely classified entries (PFC) \citep{stekhoven2012missforest}. \cref{fig:housing} \subref{fig:sub31}--\subref{fig:sub32} show that NuclearForest performs best for missing rates between 10\% and 60\% in terms of NRMSE, and at 10\%, 30\%, and 50\%--60\% in terms of PFC under the MCAR mechanism, while SoftForest achieves the lowest PFC values at 20\% and 40\% missingness.
Under the MAR mechanism, the results in \cref{fig:housing} \subref{fig:sub37}--\subref{fig:sub38} indicate that NuclearForest obtains the lowest NRMSE at 10\%--60\% missingness, while SoftForest performs best at 70\% missingness.
In terms of PFC, NuclearForest achieves the best results at 10\% and 50\%--60\% missingness, whereas SoftForest obtains the lowest values at 20\%--40\% missingness.
For distributional evaluation, Gower's distance measures the dissimilarity for the mixed data types \citep{elbadisy2024imputation,gower1971similarity}.
\cref{fig:sub33} shows that, under the MCAR mechanism, NuclearForest obtains the lowest values at 10\% and 30\%--60\% missingness, while SoftForest achieves the lowest value at 20\% missingness.
Under the MAR mechanism, \cref{fig:sub39} shows that NuclearForest performs best at 10\%, 30\%, and 50\%--60\% missingness, whereas SoftForest obtains the lowest values at 20\%, 40\% and 70\%--90\% missingness.
Downstream price prediction evaluates the utility of the imputed data for regression tasks.
This follows a prediction-oriented evaluation perspective in supervised learning with missing values \citep{josse2024consistency,lemorvan2021imputation,lemorvan2024imputation}, where missing-value handling is assessed by its impact on downstream predictive performance rather than only by reconstruction error. We therefore compute Downstream Predictive \(R^2\) Degradation by training a ridge regression model to predict housing prices from all remaining features.
Overall, these results suggest that the most effective method for preserving downstream predictive performance depends on the severity of missingness. Hybrid methods are particularly competitive at low to moderate missingness levels, whereas MissForest and SoftImpute become more favorable under higher missingness.
To maintain consistency across datasets, we additionally evaluate imputation performance using global and group-level structure preservation metrics, including PCA and PLS Procrustes distance, and Pearson correlation of log-transformed \(p\)-values \citep{wei2018missing}.
These metrics are not commonly used for evaluating regression tasks, where predictive performance (e.g., regression error) is typically the primary criterion, originating instead from omics-based studies where preserving multivariate structure and statistical inference is essential.
Specifically, we divide the dataset into three groups of price tercile labels to assess the imputation methods.
\cref{fig:housing} \subref{fig:sub35}--\subref{fig:sub36} and \cref{fig:housing} \subref{fig:sub41}--\subref{fig:sub42} show the results of PCA/PLS Procrustes distance and the experimental results with respect to the log-transformed \(p\)-values is given in Appendix \ref{app:additionalplot} \cref{fig:housinglogp}. Under the MCAR mechanism, NuclearForest obtains the lowest PCA and PLS Procrustes distance at 10\%--70\% missingness.
At 80\%--90\% missingness, SVT achieves the best result for both PCA and PLS Procrustes distances.
When missingness follows the MAR mechanism, SoftForest obtains the best results across most missingness levels, while NuclearForest performs best at 30\% missingness.
The Pearson log-\(p\) correlation results in \cref{fig:housinglogp}\subref{fig:subo41}--\subref{fig:subo42} show that, under MCAR, median imputation achieves the best performance across most missingness rates, with the exception of the 10\%--20\% setting.
For evaluation under MAR, SoftForest performs best at 50\%--70\% missing rates.

\section{Conclusion}
We proposed NuclearForest and SoftForest, two hybrid imputation methods that combine low-rank initialization with a single random-forest refinement pass. 
NuclearForest is built on an improved SVT variant with warm starting and adaptive step sizes.
For this adaptive SVT component, we prove step-size bounds and a convergence result for the corresponding zero-initialized iteration, and our ablation studies show that both the warm start and adaptive update contribute to improved imputation quality under the MCAR/MAR missingness setting considered in our experiments.
Across the metabolomics and housing benchmarks, the proposed hybrid methods offer a favorable accuracy--efficiency trade-off.
On the former, NuclearForest remains competitive with MissForest across missingness levels and is particularly strong at low missing rates, while MissForest achieves the best performance in several medium- and high-missingness settings.
Importantly, the proposed hybrids are substantially faster: in the MCAR/MAR metabolomics experiment, SoftForest and NuclearForest achieve approximately \(9.52\times\) and \(5.81\times\) speedups over MissForest, respectively.
A limitation is that these gains do not fully extend to the
left-censored MNAR setting, where Half-min is well aligned with this mechanism.
This suggests that the proposed hybrids are most effective when the missingness pattern preserves enough global correlation structure for informative low-rank initialization.

The advantage of the hybrid strategy is more pronounced on the smaller housing dataset, where NuclearForest and SoftForest perform strongly across many low- to moderate-missingness settings, with especially strong performance on PCA and PLS Procrustes metrics, indicating better preservation of low-dimensional and group-level structure.
These findings suggest that low-rank completion provides an effective structural warm start, while a single random forest refinement step can capture nonlinear feature relationships with lower computational cost than fully iterative random forest imputation. Overall, this supports the proposed hybrid strategy as a practical and computationally efficient approach for preserving structural information during tabular data imputation.
Future work will further characterize the full hybrid procedure theoretically and explore low-rank warm starts for other nonlinear refiners, including neural-network-based imputers.

\subsection*{Acknowledgements}
This work was supported by the Bayerisches Verbundforschungsprogramm (BayVFP) of the Free State of Bavaria under the funding line ``Digitalisierung''. We also thank Gina Pommerenke, Judith Mehler, Johanna Schuerlein and Dr. Josef Scheiber at BioVariance GmbH for valuable biological expertise and discussions that helped inform the interpretation of the biomedical data.

\subsection*{AI use statement}
In this work, we used generative AI tools to improve the grammar and readability of the manuscript, identify potentially relevant literature, and assist with coding. All AI-assisted content was manually reviewed and verified. We take responsibility for the final content of this work.

\bibliography{iclr2027_conference}
\bibliographystyle{iclr2027_conference}

\appendix

\section{Adaptive SVT variant and step-size analysis}
\label{app:svtcomparison}

This section compares the proposed adaptive SVT variant with the standard fixed-step SVT and provides the proof of the step-size boundedness result.

\noindent\paragraph{Adaptive Step Size vs. Fixed Step Size}
The original paper by \citet{cai2010singular} uses a fixed step size \(\delta = 1.2/p\), where \(p\) is the sampling rate. Moreover, they prove convergence for a fixed step size satisfying \(0 < \delta < 2\). In this paper, our SVT uses an adaptive step size that increases when the error is decreasing (\verb+step * 1.05+) and shrinks when the error increases (\verb+step * 0.9+).

In a convergence analysis paper by \citet{LeiZhou2019}, they prove that a necessary and sufficient condition for convergence with respect to the Bregman distance, specifically the step size sequence \(\delta_k\), must satisfy \(\sum_{k=1}^{\infty} \delta_k = \infty\) and the corresponding convergence proof is provided by \citep[Proposition~5]{LeiZhou2019}. The necessity is derived using a lower bound on the decay of the Bregman distance. This provides theoretical justification for employing an adaptive step size, as long as the step sizes do not decay too rapidly (bounded away from zero), ensuring divergence of their cumulative sum.

Recent paper in optimization theory suggests that adaptive step size strategies can accelerate first-order methods beyond what is achievable with constant step sizes. In particular, \citet{Davis2025AdaptiveGD} demonstrate that gradient descent with adaptive step sizes achieves nearly linear convergence under weaker growth conditions where constant step size methods exhibit only sublinear rates. 

Although these results are established for smooth unconstrained optimization (nuclear norm is not smooth), they provide important insight into the role of step size selection in gradient-based algorithms. Since the SVT algorithm can be interpreted as a gradient descent method applied to the dual problem, this perspective motivates the use of adaptive step size strategies within SVT. Moreover, it can improve convergence behavior by applying larger effective updates in the case of slow progress.

\noindent\paragraph{Warm-start initialization (column-mean imputation) vs. cold-start (zero matrix)} Standard SVT updates the matrix starting from zero. Motivated by the warm-start strategy used in SoftImpute \citep{mazumder2010spectral}, we initialize the missing entries with column means before applying the low-rank completion step. This deterministic initialization provides a simple and stable starting point for the iterative updates.

%\paragraph{The nonmonotone schemes} 

%As can be seen in the previous studies, nonmonotone schemes can improve the likelihood of finding a global optimum. Moreover, they can improve convergence speed in cases where a monotone scheme is forced to creep along the bottom of a narrow curved valley \citep{zhang2004nonmonotone}. 

\paragraph{Adaptive step size convergence under finitely many contraction steps}

\paragraph{Linear equality constraints \citep{cai2010singular}.}

Set the objective function
\[
f_\tau(\mathbf{X}) = \tau \|\mathbf{X}\|_* + \frac{1}{2}\|\mathbf{X}\|_F^2
\]
for some fixed \( \tau > 0 \), and consider the following optimization problem:
\begin{equation}
\begin{aligned}
\text{minimize} \quad & f_\tau(\mathbf{X}) \\
\text{subject to} \quad & \mathcal{A}(\mathbf{X}) = \mathbf{b},
\end{aligned}
\tag{3.1}
\end{equation}
where \( \mathcal{A} \) is a linear transformation mapping \( n_1 \times n_2 \) matrices into \( \mathbb{R}^m \) and \( \mathcal{A}^* \) denotes its adjoint. Then the Lagrangian for this problem
is of the form

\[
\mathcal{L}(\mathbf{X}, \mathbf{y}) 
= f_\tau(\mathbf{X}) 
+ \langle \mathbf{y}, \mathbf{b} - \mathcal{A}(\mathbf{X}) \rangle,
\tag{3.2}
\]
where \( \mathbf{X} \in \mathbb{R}^{n_1 \times n_2} \) and \( \mathbf{y} \in \mathbb{R}^m \), 
and starting with \( \mathbf{y}^0 = 0 \), Uzawa's iteration is given by
\begin{equation}
\begin{cases}
\mathbf{X}^k = \mathcal{S}_\tau\!\bigl(\mathcal{A}^*(\mathbf{y}^{k-1})\bigr), \\
\mathbf{y}^k = \mathbf{y}^{k-1} + \delta_k \bigl(\mathbf{b} - \mathcal{A}(\mathbf{X}^k)\bigr).
\end{cases}
\tag{3.3}
\end{equation}

\paragraph{Shrinkage iterations \citep{cai2010singular}.}

For the matrix completion problem, let \( \mathcal{A} = \mathcal{P}_\Omega \), \( b = \mathcal{P}_\Omega(M) \), and note that
\( \|\mathcal{P}_\Omega\| = 1 \).
Then \((3.1)\) reduces to
\begin{equation}
\begin{aligned}
\text{minimize} \quad & \tau \|\mathbf{X}\|_* + \frac{1}{2}\|\mathbf{X}\|_F^2 \\
\text{subject to} \quad & \mathcal{P}_\Omega(\mathbf{X}) = \mathcal{P}_\Omega(\mathbf{M}).
\end{aligned}
\tag{2.8}
\end{equation}

and the corresponding iteration \((3.3)\) becomes
\begin{equation}
\begin{cases}
\mathbf{X}^k = \mathcal{S}_\tau\!\bigl(\mathbf{Y}^{k-1}\bigr), \\
\mathbf{Y}^k = \mathbf{Y}^{k-1} + \delta_k \mathcal{P}_\Omega\!\bigl(\mathbf{M} - \mathbf{X}^k\bigr),
\end{cases}
\tag{2.7}
\end{equation}
where we write \( \mathbf{Y}^k \) in place of \( \mathbf{y}^k \).

\begin{theorem}[SVT convergence {\citep{cai2010singular}}]
Suppose the step sizes obey
\[ 0 < \inf_k \delta_k \le \sup_k \delta_k < 2/\|\mathcal{A}\|^2 \]
Then the sequence \( \{\mathbf{X}^k\} \) obtained via \((3.3)\) converges to the unique solution to \((3.1)\). 
In particular, the sequence \( \{\mathbf{X}^k\} \) obtained via \((2.7)\) converges to the unique solution of \((2.8)\) provided that 
\[ 0 < \inf_k \delta_k \le \sup_k \delta_k < 2 \]
\end{theorem}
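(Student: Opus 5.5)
We would prove the statement by reading Uzawa's iteration (3.3) as projected gradient ascent on the dual of (3.1). The key structural fact is that \(f_\tau\) is \(1\)-strongly convex, because \(\tfrac12\|\mathbf{X}\|_F^2\) is. Hence (3.1) has a unique solution \(\mathbf{X}^\star\). The plan is as follows.

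\textbf{Step 1: existence of a KKT pair and the primal update.} We assume the constraint set \(\{\mathbf{X}:\mathcal{A}(\mathbf{X})=\mathbf{b}\}\) is nonempty; for (2.8) this is automatic, since \(\mathbf{M}\) itself is feasible. Because the constraints are affine and \(f_\tau\) is finite everywhere, there is a dual optimal \(\mathbf{y}^\star\) with \(\mathcal{A}^*(\mathbf{y}^\star)\in\partial f_\tau(\mathbf{X}^\star)\). Next we check that \(\mathbf{X}^k=\mathcal{S}_\tau(\mathcal{A}^*\mathbf{y}^{k-1})\) is exactly \(\arg\min_{\mathbf{X}}\mathcal{L}(\mathbf{X},\mathbf{y}^{k-1})\). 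Completing the square, this minimization is the proximal problem defining \(\mathcal{S}_\tau\) in \eqref{eq:svt}. Consequently \(\mathcal{A}^*\mathbf{y}^{k-1}\in\partial f_\tau(\mathbf{X}^k)\).

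\textbf{Step 2: the key one-step inequality.} Strong monotonicity of \(\partial f_\tau\), applied to the two subgradient relations above, gives
\[
\langle \mathcal{A}^*(\mathbf{y}^{k-1}-\mathbf{y}^\star),\mathbf{X}^k-\mathbf{X}^\star\rangle\ge\|\mathbf{X}^k-\mathbf{X}^\star\|_F^2 .
\]
Since \(\mathbf{b}=\mathcal{A}(\mathbf{X}^\star)\), the dual update reads \(\mathbf{y}^k-\mathbf{y}^\star=\mathbf{y}^{k-1}-\mathbf{y}^\star-\delta_k\mathcal{A}(\mathbf{X}^k-\mathbf{X}^\star)\). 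Expanding the square and using the inequality above yields
\[
\|\mathbf{y}^k-\mathbf{y}^\star\|^2\le\|\mathbf{y}^{k-1}-\mathbf{y}^\star\|^2-\delta_k\bigl(2-\delta_k\|\mathcal{A}\|^2\bigr)\|\mathbf{X}^k-\mathbf{X}^\star\|_F^2 .
\]

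\textbf{Step 3: convergence.} The hypothesis \(0<\inf_k\delta_k\le\sup_k\delta_k<2/\|\mathcal{A}\|^2\) makes the coefficient \(\delta_k(2-\delta_k\|\mathcal{A}\|^2)\) bounded below by some \(\beta>0\). Telescoping the inequality of Step 2 gives
\[
\beta\sum_k\|\mathbf{X}^k-\mathbf{X}^\star\|_F^2\le\|\mathbf{y}^\star\|^2<\infty ,
\]
using \(\mathbf{y}^0=0\). Therefore \(\mathbf{X}^k\to\mathbf{X}^\star\).

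\textbf{Step 4: specialization to matrix completion.} For the second claim, we take \(\mathcal{A}=\mathcal{P}_\Omega\) and identify \(\mathbf{y}\) with a matrix supported on \(\Omega\), so that \(\mathcal{A}^*\mathbf{y}=\mathbf{Y}\). Then (3.3) becomes (2.7). Since \(\mathcal{P}_\Omega\) is an orthogonal projection, \(\|\mathcal{P}_\Omega\|=1\) (for \(\Omega\neq\emptyset\)), and the condition reduces to \(0<\inf_k\delta_k\le\sup_k\delta_k<2\).

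\textbf{Main obstacle.} The main obstacle is Step 1, namely guaranteeing the existence of the dual optimum \(\mathbf{y}^\star\). We would handle it by standard Lagrange duality for convex problems with linear equality constraints, where only feasibility is needed. Alternatively, one can write it out directly: the optimality condition says \(0\in\partial f_\tau(\mathbf{X}^\star)+N_{\mathcal{C}}(\mathbf{X}^\star)\), where \(\mathcal{C}=\{\mathbf{X}:\mathcal{A}(\mathbf{X})=\mathbf{b}\}\), and the normal cone satisfies \(N_{\mathcal{C}}(\mathbf{X}^\star)=\operatorname{range}\mathcal{A}^*\). Once \(\mathbf{y}^\star\) exists, Steps 2 and 3 are routine.
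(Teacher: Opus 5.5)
Your proposal is correct and follows essentially the same route as the proof in \citet{cai2010singular}; the paper itself only cites that result and does not reproduce a proof. The shared route is: the primal step is the Lagrangian minimizer, strong monotonicity of \(\partial f_\tau\) gives \(\|\mathbf{y}^k-\mathbf{y}^\star\|^2\le\|\mathbf{y}^{k-1}-\mathbf{y}^\star\|^2-\delta_k(2-\delta_k\|\mathcal{A}\|^2)\|\mathbf{X}^k-\mathbf{X}^\star\|_F^2\), and telescoping yields \(\mathbf{X}^k\to\mathbf{X}^\star\), with your treatment of feasibility and of the existence of the multiplier \(\mathbf{y}^\star\) filling in the one point that argument takes for granted.
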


\begin{theorem} \citep{LeiZhou2019}
Let \( \{(X^k, Y^k)\}_{k \in \mathbb{N}} \) be produced by \((3.3)\) and \( \mathbf{b}_0 \neq 0 \). 
Here, \(\mathbf{b}_0\) denotes the orthogonal projection of \(\mathbf{b}\) onto the range of \(\mathcal{A}\).
The term \(D_\Psi^{Y^T}(X^\star,X^T)\) denotes the Bregman distance associated
with \(\Psi\), evaluated at \(X^T\) with subgradient \(Y^T\). In general, for
\(\widetilde Y\in\partial\Psi(\widetilde X)\),
\(
D_\Psi^{\widetilde Y}(X,\widetilde X)
=
\Psi(X)-\Psi(\widetilde X)-\langle X-\widetilde X,\widetilde Y\rangle .
\)

Then the following statements hold.
\begin{enumerate}
\item If \( \sup_k \delta_k < \frac{1}{2\|\mathcal{A}\|^2} \), then
\[
\lim_{T \to \infty} D_\Psi^{Y^T}(X^\star, X^T) = 0 
\quad \text{if and only if} \quad 
\sum_{k=1}^{\infty} \delta_k = \infty.
\]

\item If \( \sup_k \delta_k < \frac{2}{\|\mathcal{A}\|^2} \), then
\[
\|X^{T+1} - X^\star\|_F^2 \le \tilde{C} \left[ \sum_{k=1}^{T} \delta_k \right]^{-1}, 
\quad \forall T \in \mathbb{N},
\]
where \( \tilde{C} \) is a constant independent of \( T \).
\end{enumerate}
\end{theorem}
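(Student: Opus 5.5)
The plan is to recast (3.3) as gradient descent on the dual problem and to identify the Bregman distance with the dual suboptimality gap. Both statements then become facts about gradient descent with variable step sizes on a smooth convex function. Write \(\Psi=f_\tau\). It is \(1\)-strongly convex, so its conjugate \(\Psi^*\) is differentiable with \(1\)-Lipschitz gradient, and \(\nabla\Psi^*(Z)=\mathcal{S}_\tau(Z)\). Define the dual objective
\[
g(\mathbf{y}) = \Psi^*(\mathcal{A}^*\mathbf{y})-\langle \mathbf{y},\mathbf{b}\rangle ,
\qquad
\nabla g(\mathbf{y}) = \mathcal{A}(\mathbf{X}(\mathbf{y}))-\mathbf{b},
\qquad
\mathbf{X}(\mathbf{y})=\mathcal{S}_\tau(\mathcal{A}^*\mathbf{y}).
\]
Then (3.3) is exactly \(\mathbf{y}^{k}=\mathbf{y}^{k-1}-\delta_k\nabla g(\mathbf{y}^{k-1})\), with \(\mathbf{X}^k=\mathbf{X}(\mathbf{y}^{k-1})\). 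Moreover, \(\nabla g\) is \(L\)-Lipschitz with \(L=\|\mathcal{A}\|^2\).

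\textbf{Key identity.} Take \(Y=\mathcal{A}^*\mathbf{y}\in\partial\Psi(\mathbf{X}(\mathbf{y}))\). Use the Fenchel equality \(\Psi^*(Y)=\langle Y,\mathbf{X}(\mathbf{y})\rangle-\Psi(\mathbf{X}(\mathbf{y}))\) and \(\mathcal{A}(X^\star)=\mathbf{b}\). This gives
\[
D_\Psi^{\mathcal{A}^*\mathbf{y}}\bigl(X^\star,\mathbf{X}(\mathbf{y})\bigr)
=\Psi(X^\star)+g(\mathbf{y}) = g(\mathbf{y})-g^\star ,
\qquad g^\star:=\inf g=-\Psi(X^\star).
\]
The last equality is strong duality, which holds because the constraints are linear and \(\Psi\) is finite everywhere. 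Strong convexity of \(\Psi\) also gives \(\tfrac12\|X-X^\star\|_F^2\le D_\Psi(X^\star,X)\). Hence it suffices to control the gaps \(e_k:=g(\mathbf{y}^k)-g^\star\).

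\textbf{Part 2 (rate).} The descent lemma gives
\[
e_{k}\le e_{k-1}-\delta_k\bigl(1-\tfrac{L\delta_k}{2}\bigr)\|\nabla g(\mathbf{y}^{k-1})\|^2 ,
\]
so \(e_k\) is monotone whenever \(\sup_k\delta_k<2/L\). To obtain a rate in \(\sum_k\delta_k\), I will fix a dual reference point \(\bar{\mathbf{y}}\). For the matrix-completion case one can take a dual optimum, since the dual attains its infimum on the range of \(\mathcal{A}\). I will then run the standard estimate
\[
\|\mathbf{y}^{k}-\bar{\mathbf{y}}\|^2\le\|\mathbf{y}^{k-1}-\bar{\mathbf{y}}\|^2-2\delta_k\,(g(\mathbf{y}^{k-1})-g(\bar{\mathbf{y}}))+\delta_k^2\|\nabla g(\mathbf{y}^{k-1})\|^2 .
\]
The squared-gradient term is absorbed using the descent inequality, which is where \(\delta_k<2/L\) (rather than \(1/L\)) enters, with constants depending on \(2-L\sup_k\delta_k\). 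Telescoping and using monotonicity of \(e_k\) gives \(e_T\sum_{k\le T}\delta_k\le \tilde C\). Combined with \(\|X^{T+1}-X^\star\|_F^2\le 2e_T\), this is part 2. In the same regime it also gives the sufficiency direction of part 1.

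\textbf{Part 1, necessity (the main obstacle).} This needs a lower bound on how fast \(e_k\) can decay. Smooth convexity gives the reverse bound
\[
g(\mathbf{y}^{k-1})-g(\mathbf{y}^{k})\le \delta_k\|\nabla g(\mathbf{y}^{k-1})\|^2\le 2L\delta_k\, e_{k-1},
\]
where the second inequality is \(\|\nabla g\|^2\le 2L(g-g^\star)\) for \(L\)-smooth convex \(g\) bounded below. Hence \(e_k\ge(1-2L\delta_k)e_{k-1}\). The hypothesis \(\sup_k\delta_k<1/(2L)\) keeps every factor in \((0,1)\) and bounded away from \(0\). Consequently, \(\sum_k\delta_k<\infty\) implies \(\prod_k(1-2L\delta_k)>0\), and therefore \(\liminf_k e_k>0\) as soon as \(e_0>0\).

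The last step is to check that \(e_0>0\). Since \(\mathbf{y}^0=0\), we have \(\mathbf{X}^1=\mathcal{S}_\tau(0)=0\), so \(e_0=D(X^\star,0)\), which vanishes only if \(X^\star=0\), i.e., only if \(\mathbf{b}\) has zero component in the range of \(\mathcal{A}\). This is excluded by \(\mathbf{b}_0\neq0\). The delicate bookkeeping is getting the constant \(2L\) sharp enough to match the threshold \(1/(2\|\mathcal{A}\|^2)\), and verifying the indexing between \(D_\Psi^{Y^T}(X^\star,X^T)\) and \(e_{T-1}\).
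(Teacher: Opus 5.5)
Your argument is correct and essentially the route the paper relies on: the paper gives no proof of its own but cites Lei and Zhou, remarking only that necessity comes from a lower bound on the decay of the Bregman distance, and that analysis likewise writes the Bregman distance as the excess objective value of the dual gradient iteration, with your bound \(e_k\ge(1-2\|\mathcal{A}\|^2\delta_k)e_{k-1}\) supplying the decay lower bound at exactly the threshold \(1/(2\|\mathcal{A}\|^2)\). The one detail to add is that your identity uses \(\mathcal{A}(X^\star)=\mathbf{b}\) (if \(\mathbf{b}\notin\operatorname{range}\mathcal{A}\), then \(g\) is unbounded below), but since \(\mathcal{A}^*\mathbf{b}=\mathcal{A}^*\mathbf{b}_0\), the iterates \(X^k\) and subgradients \(\mathcal{A}^*\mathbf{y}^k\) are unchanged when \(\mathbf{b}\) is replaced by \(\mathbf{b}_0\), so you may assume consistency without loss of generality.
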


\begin{Proposition}[Adaptive step sizes bounds under condition]
\label{prop:1}
Let \(p:=|\Omega|/(n_1n_2)\) denote the observed-entry ratio.  Assume \(0<p<1\), let \(\delta_0=1.2\,p\) and set
\(\delta_1=\delta_0\). Let
\[
e_k :=
\frac{\|\mathcal{P}_\Omega(X^{k+1}-M)\|_F}
{\|\mathcal{P}_\Omega(M)\|_F}
\]
denote the relative observed-entry reconstruction error, as in
Algorithm~\ref{alg:svt-me}. For \(k\ge 1\), let the step sizes be generated by
\[
\delta_{k+1}=
\begin{cases}
0.9\,\delta_{k}, & \text{if } e_k>e_{k-1},\\[1mm]
\min\,\bigl(1.05\,\delta_{k},\,2p\bigr), & \text{if } e_k\le e_{k-1}.
\end{cases}
\]
Assume further that the total number of contraction steps is finite, namely, there exists
\(B\in\mathbb N_0\) such that at most \(B\) indices \(k\) satisfy \(e_k>e_{k-1}\).
Then the following statements hold:

\begin{enumerate}
\item[\textup{(a)}] For all \(k\ge 0\),
\begin{equation}\label{eq:uniform-bounds}
0.9^{B}\,\delta_0 \;\le\; \delta_k \;\le\; 2p \;<\; 2.
\end{equation}
Hence
\[
0<\inf_k \delta_k \le \sup_k \delta_k <2.
\]

\item[\textup{(b)}] The step sizes have divergent sum:
\begin{equation}\label{eq:divergent-sum}
\sum_{k=0}^\infty \delta_k = \infty.
\end{equation}

\end{enumerate}
\end{Proposition}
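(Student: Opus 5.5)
The plan is an elementary induction on $k$ that tracks how many contraction steps have occurred so far. Define $c_k$ as the number of indices $j$ with $1\le j\le k-1$ and $e_j>e_{j-1}$; these are the contraction steps used in producing $\delta_2,\dots,\delta_k$. By hypothesis $c_k\le B$ for every $k$. The claim to prove by induction is
\[
0.9^{c_k}\,\delta_0 \;\le\; \delta_k \;\le\; 2p \qquad (k\ge 0).
\]
The base cases $k=0,1$ hold because $\delta_1=\delta_0=1.2p$, $c_0=c_1=0$, and $1.2p\le 2p$ since $p>0$.

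For the inductive step we split according to the update rule. If $e_k>e_{k-1}$, then $\delta_{k+1}=0.9\,\delta_k$. The upper bound follows from $0.9\,\delta_k\le \delta_k\le 2p$. For the lower bound, $c_{k+1}=c_k+1$, so $\delta_{k+1}\ge 0.9\cdot 0.9^{c_k}\delta_0=0.9^{c_{k+1}}\delta_0$.

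If instead $e_k\le e_{k-1}$, then $\delta_{k+1}=\min(1.05\,\delta_k,2p)$. The upper bound $\delta_{k+1}\le 2p$ is immediate from the cap. For the lower bound, $c_{k+1}=c_k$, and both arguments of the minimum are at least $0.9^{c_k}\delta_0$: we have $1.05\,\delta_k\ge\delta_k\ge 0.9^{c_k}\delta_0$, and $2p\ge 1.2p=\delta_0\ge 0.9^{c_k}\delta_0$. This comparison of the cap with $\delta_0$ is the only point needing care, since the cap could in principle push a step below the inductive lower bound; it does not, because $\delta_0<2p$.

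Since $c_k\le B$ and $0.9<1$, we get $0.9^{c_k}\ge 0.9^B$, which gives the lower bound in (a). The strict upper bound $2p<2$ holds because $p<1$. Taking the infimum and supremum over $k$ yields $0<0.9^B\delta_0\le\inf_k\delta_k\le\sup_k\delta_k\le 2p<2$, where positivity uses $\delta_0=1.2p>0$.

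Part (b) follows directly from (a): every term satisfies $\delta_k\ge 0.9^B\cdot 1.2p>0$, so the partial sums satisfy $\sum_{k=0}^{T}\delta_k\ge (T+1)\,0.9^B\cdot 1.2p\to\infty$ as $T\to\infty$.

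I do not expect a real obstacle. The only subtle points are the bookkeeping of the counter $c_k$, since the rule producing $\delta_{k+1}$ is indexed by $e_k$ versus $e_{k-1}$ for $k\ge1$, and the verification above that the cap $2p$ never falls below the running lower bound.
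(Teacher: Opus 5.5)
Your proof is correct and follows essentially the same route as the paper: an induction giving $\delta_k\le 2p$, a count of contraction steps yielding $\delta_k\ge 0.9^{N_k}\delta_0\ge 0.9^B\delta_0$, and then divergence of the sum from this uniform lower bound. The only difference is in the capped expansion step, where you compare $2p$ with $\delta_0$; the paper instead asserts that an expansion step never decreases the step size, which tacitly relies on $\delta_k\le 2p$ from its Part 1, so your version makes that dependence explicit.
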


\begin{proof} 
The proof is in three parts.

\medskip
\noindent\textbf{Part 1: positivity and upper bound.}
We first show by induction that
\[
0<\delta_k\le 2p
\qquad\forall k\ge 0.
\]
For \(k=0\), this is immediate from \(\delta_0=1.2\,p\) and \(0<p<1\), since
\[
0<\delta_0=1.2\,p\le 2p.
\]
For \(k=1\), this also holds because \(\delta_1=\delta_0\).

Now let \(k\ge 1\) and assume \(0<\delta_{k}\le 2p\). There are two cases.
If \(e_k>e_{k-1}\), then
\[
\delta_{k+1}=0.9\,\delta_{k},
\]
so \(0<\delta_{k+1}<2p\).

If \(e_k\le e_{k-1}\), then
\[
\delta_{k+1}=\min\,\bigl(1.05\,\delta_{k},\,2p\bigr),
\]
which again implies \(0<\delta_{k+1}\le 2p\).

Thus, by induction, \(0<\delta_k\le 2p\) for all \(k\ge 0\).

\medskip
\noindent\textbf{Part 2: uniform positive lower bound under finitely many contraction steps.}
For \(k\ge 1\), define
\[
N_k
:=
\#\{j\in\{1,\ldots,k-1\}: e_j>e_{j-1}\},
\]
with \(N_0=N_1=0\). Thus \(N_k\) counts the number of contraction steps used in generating
\(\delta_k\) from the initial step size. By assumption,
\[
N_k\le B
\qquad\forall k\ge 0.
\]

An expansion step never decreases the step size, while a contraction step multiplies the step size by exactly \(0.9\). Therefore, after generating \(\delta_k\),
the smallest possible value of \(\delta_k\) is obtained by applying the factor \(0.9\) at each of the
\(N_k\) contraction steps and no decrease at the remaining steps. Hence
\[
\delta_k \ge 0.9^{N_k}\delta_0 \ge 0.9^B\delta_0.
\]
Combining this with Part 1 yields
\[
0.9^B\delta_0 \le \delta_k \le 2p<2,
\]
which proves \eqref{eq:uniform-bounds}.

\medskip
\noindent\textbf{Part 3: divergence of the sum.}
From the lower bound just proved,
\[
\delta_k\ge 0.9^B\delta_0>0
\qquad\forall k\ge 0.
\]
Therefore,
\[
\sum_{k=0}^\infty \delta_k
\;\ge\;
\sum_{k=0}^\infty 0.9^B\delta_0
=
\infty,
\]
which proves \eqref{eq:divergent-sum}.
\end{proof}

\begin{corollary}[Convergence of the corresponding zero-initialized adaptive SVT iteration]
\label{cor:adaptive-svt-zero-init}

Consider the standard SVT iteration~(2.7) with zero initialization \(Y^0=0\), and let the step sizes be generated by the same adaptive rule as in Proposition~\ref{prop:1}. Define \(e_k\) analogously for this zero-initialized iteration, and assume that its total number of contraction steps is finite, which means there exists \(B\in\mathbb{N}_0\) such that at most \(B\) indices \(k\) satisfy \(e_k>e_{k-1}\).

Then, by the same argument as in Proposition~\ref{prop:1},
\[
0.9^B\delta_0
\le
\delta_k
\le
2p
<
2,
\]
and hence
\[
0
<
\inf_k\delta_k
\le
\sup_k\delta_k
<
2.
\]
Therefore, by the SVT convergence theorem of \citet{cai2010singular}, the corresponding zero-initialized adaptive SVT iterates converge to a unique solution.
Moreover, Proposition~9 of \citet{LeiZhou2019} provides the following convergence rate bound for the zero-initialized iteration:
\[
D_\Psi^{Y^{T+1}}(X^\ast,X^{T+1})
\le
\widetilde{C}
\left[
\sum_{k=1}^{T}\delta_k
\right]^{-1},
\]
where \(\widetilde{C}\) is independent of \(T\).
Since
\[
\delta_k \ge 0.9^B\delta_0,
\]
we have
\[
\sum_{k=1}^{T}\delta_k
\ge
T\,0.9^B\delta_0,
\]
and therefore
\[
D_\Psi^{Y^{T+1}}(X^\ast,X^{T+1})
\le
\frac{\widetilde{C}}
{T\,0.9^B\delta_0}
=
O(T^{-1}).
\]
\end{corollary}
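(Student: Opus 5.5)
The plan is to reduce the corollary to Proposition~\ref{prop:1} and then feed the resulting step-size bounds into the two cited convergence theorems. The proposition never uses anything about the iterates themselves. It only uses the recursion defining $\delta_{k+1}$ from $\delta_k$ and the sign of $e_k-e_{k-1}$, together with the finite-contraction hypothesis. So I would first observe that the same three-part argument applies verbatim to the zero-initialized iteration~(2.7) with $Y^0=0$, with $e_k$ now computed from those iterates. Concretely, Part~1 (induction giving $0<\delta_k\le 2p$) uses only $\delta_0=1.2p\le 2p$ and the cap $\min(1.05\delta_k,2p)$. Part~2 gives $\delta_k\ge 0.9^{N_k}\delta_0\ge 0.9^B\delta_0$, because expansions never decrease the step size and at most $B$ contractions occur. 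Together these yield $0.9^B\delta_0\le\delta_k\le 2p<2$, using $p<1$.

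Second, I would verify the hypotheses of the SVT convergence theorem of \citet{cai2010singular} in the matrix-completion specialization. Since $\|\mathcal{P}_\Omega\|=1$, the requirement is $0<\inf_k\delta_k\le\sup_k\delta_k<2$, which is exactly what the first step gives. The theorem then yields convergence of $X^k$ to the unique solution of (2.8). Uniqueness comes from the strong convexity of $f_\tau$, so the solution is the same $X^\ast$ regardless of the step-size sequence.

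Third, for the rate, I would invoke the Bregman-distance bound of \citet{LeiZhou2019}, whose hypothesis $\sup_k\delta_k<2/\|\mathcal{A}\|^2=2$ is again satisfied, and use $\mathbf{b}_0\neq0$ (equivalently $\mathcal{P}_\Omega(M)\neq0$, which is implicit since $e_k$ divides by $\|\mathcal{P}_\Omega(M)\|_F$). This gives $D_\Psi^{Y^{T+1}}(X^\ast,X^{T+1})\le\widetilde C\,[\sum_{k=1}^T\delta_k]^{-1}$. Bounding the sum below by $T\,0.9^B\delta_0$ term by term gives the $O(T^{-1})$ statement, with constant $\widetilde C/(0.9^B\cdot1.2p)$.

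The main obstacle is not computational but one of bookkeeping. The cited results are stated for deterministic step sequences, whereas here $\delta_k$ depends on the iterates through $e_k$. I would address this by noting that both cited proofs are pathwise. They use only the realized sequence $(\delta_k)$ and its bounds, applied to the single deterministic trajectory generated by the algorithm. Once $(\delta_k)$ is fixed along that trajectory, the iteration coincides with (2.7) for a prescribed sequence obeying the bounds, so the theorems apply. A secondary point is to state clearly which form of the Lei--Zhou bound is used (the Bregman form rather than the Frobenius form), and that $\widetilde C$ does not depend on $T$.
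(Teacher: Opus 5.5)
Your proposal is correct and follows the paper's own route. You transfer the bounds $0.9^B\delta_0\le\delta_k\le 2p<2$ from the argument of Proposition~\ref{prop:1}, which uses only the step-size recursion. You then check the hypotheses of the SVT convergence theorem of \citet{cai2010singular} with $\|\mathcal{P}_\Omega\|=1$, and finally bound $\sum_{k=1}^T\delta_k\ge T\,0.9^B\delta_0$ from below inside the Lei--Zhou rate.

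Your extra remarks are also sound, and they make explicit two points the paper leaves implicit. First, both cited convergence proofs are pathwise, so step sizes that depend on the iterates cause no problem. Second, the Lei--Zhou theorem needs $\mathbf{b}_0=\mathcal{P}_\Omega(M)\neq 0$.
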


\section{Algorithms}

\begin{algorithm}[H]
\caption{Singular Value Thresholding (SVT) \citep{cai2010singular}}
\label{alg:svt}
\begin{algorithmic}[1]
\Require Partially observed matrix \(M \in \mathbb{R}^{n_1 \times n_2}\), observed index set \(\Omega\), threshold \(\tau\), step size \(\delta\), tolerance \(\varepsilon\)
\State Initialize \(Y^{0} \gets 0\)
\Repeat
    \State \(X^{k} \gets \mathcal{S}_\tau(Y^{k-1})\)
    \State \(Y^{k} \gets Y^{k-1} + \delta\,\mathcal{P}_\Omega(M - X^{k})\)
\Until{\(\|\mathcal{P}_\Omega(X^{k} - M)\|_F \,/\, \|\mathcal{P}_\Omega(M)\|_F < \varepsilon\)}
\State \Return \(\widehat{M}\) with \(\widehat{M}_{ij} = M_{ij}\) for \((i,j) \in \Omega\), and \(\widehat{M}_{ij} = X^{k}_{ij}\) otherwise
\end{algorithmic}
\end{algorithm}

\begin{algorithm}[H]
\caption{SVT with warm start and adaptive step size}
\label{alg:svt-me}
\begin{algorithmic}[1]
\Require Partially observed matrix \(M \in \mathbb{R}^{n_1 \times n_2}\), observed index set \(\Omega\), threshold \(\tau\), tolerance \(\varepsilon\)
\State Initialize \(X^{0}\) by column-mean imputation; \(Y^{0} \gets X^{0}\)
\State \(\delta_{0} \gets 1.2\,p\), \(\delta_{\max} \gets \min(2p,\, 2)\), where \(p = |\Omega|/(n_1 n_2)\), $e_{-1} \gets \infty$
\Repeat
    \State \(Y^{k+1} \gets Y^{k} + \delta_{k}\,\mathcal{P}_\Omega(M - X^{k})\)
    \State \(X^{k+1} \gets \mathcal{S}_\tau(Y^{k+1})\)
    \State \(e_{k} \gets \|\mathcal{P}_\Omega(X^{k+1} - M)\|_F \,/\, \|\mathcal{P}_\Omega(M)\|_F\)
    \State \(\delta_{k+1} \gets 0.9\,\delta_{k}\) if \(e_{k} > e_{k-1}\), else \(\min(1.05\,\delta_{k},\, \delta_{\max})\)
\Until{\(e_{k} < \varepsilon\)}
\State \Return \(\widehat{M}\) with \(\widehat{M}_{ij} = M_{ij}\) on \(\Omega\), \(\widehat{M}_{ij} = X^{k+1}_{ij}\) otherwise
\end{algorithmic}
\end{algorithm}

\begin{algorithm}[H]
\caption{SoftImpute \citep{mazumder2010spectral}}
\label{alg:softimpute}
\begin{algorithmic}[1]
\Require Partially observed matrix \(M \in \mathbb{R}^{n_1 \times n_2}\), observed index set \(\Omega\), maximum rank \(r_{\max}\), regularization \(\lambda\), tolerance \(\varepsilon\)
\State Initialize \(X^{0} \gets 0\)
\Repeat
    \State \(W^{k} \gets \mathcal{P}_\Omega(M) + \mathcal{P}_\Omega^\perp(X^{k})\)
    \State \(U\Sigma V^\top \gets \mathrm{SVD}(W^{k})\), truncate to top \(r_{\max}\) components
    \State \(X^{k+1} \gets S_\lambda(W^{k})\)
\Until{\(\|X^{k+1} - X^{k}\|_F^2 \,/\, \|X^{k}\|_F^2 < \varepsilon\)}
\State \Return \(\widehat{M}\) with \(\widehat{M}_{ij} = M_{ij}\) on \(\Omega\), \(\widehat{M}_{ij} = X^{k+1}_{ij}\) otherwise
\end{algorithmic}
\end{algorithm}

\clearpage

\section{Experimental details}
\label{app:experimental-details}

This section summarizes the hyperparameters and implementation details used in the experiments.
\begin{table}[H]
\centering
\caption{Hyperparameter settings used in the additional experiments. SVT\_OGpaper follows the fixed step size configuration of \citet{cai2010singular}. SoftImpute follows the soft-thresholded SVD formulation of \citet{mazumder2010spectral}.}
\label{tab:hyperparameters}

\begin{tabular}{p{0.20\linewidth} p{0.73\linewidth}}
\toprule
Method & Configuration / hyperparameters \\
\midrule

Mean &
Column-wise mean imputation. \\

Median &
Column-wise median imputation. \\

Half-min &
Missing entries are replaced by one half of the observed column minimum when the minimum is positive; otherwise a floor value of \(10^{-6}\) is used. \\

kNN &
\(k=10\) nearest neighbors with distance-based weighting. \\

MissForest &
Iterative Random Forest imputation using
100 trees, maximum iterations \(=10\), median initialization, and \texttt{n\_jobs=-1}. \\

SVT\_OGpaper &
The original SVT method with a fixed step size and \(\delta=\min(1.2/p,2)\), zero initialization,
skip-ahead dual initialization, maximum iterations \(=1000\), and tolerance \(=10^{-5}\). \\

SVT &
Proposed adaptive SVT variant with
\(\tau=5\max(n_1,n_2)\), column-mean warm start,
initial step size \(\delta_0=\min(1.2p,2p)\),
step-size cap \(\min(2p,2)\), contraction factor \(0.9\),
expansion factor \(1.05\), maximum iterations \(=1000\), and tolerance \(=10^{-5}\). \\

NF\_OGpaper &
Two-stage method using SVT\_OGpaper for initialization, followed by one
column-wise Random Forest refinement with 100 trees, and \texttt{n\_jobs=-1}. \\

NuclearForest &
Two-stage method using the proposed adaptive SVT variant for initialization, followed by one column-wise Random Forest refinement with 100 trees, and \texttt{n\_jobs=-1}. \\

SoftImpute &
SoftImpute with data-scaled regularization
\(\lambda=0.01\lambda_0\), where \(\lambda_0\) is the largest singular value of the zero-filled matrix; rank cap \(\min(n,p)\), maximum iterations \(=1000\), and convergence threshold \(=10^{-5}\). \\

SoftForest &
Two-stage method using SoftImpute initialization with
\(\lambda=0.01\lambda_0\), rank cap \(\min(n,p)\), maximum iterations \(=1000\), and threshold \(=10^{-5}\), followed by one column-wise Random Forest refinement with 100 trees, and \texttt{n\_jobs=-1}. \\

\bottomrule
\end{tabular}
\end{table}

\begin{table}[H]
\centering
\caption{Global experimental settings}
\label{tab:global-settings}

\begin{tabular}{ll}
\toprule
Setting & Value \\
\midrule
Number of repeated runs & 10 \\
Random seeds & \(42, 43, \ldots, 51\), assigned sequentially to the 10 repeated runs \\

Missingness rates & \(10\%,20\%,30\%,40\%,50\%,60\%,70\%,80\%,90\%\) \\

\bottomrule
\end{tabular}
\end{table}

\begin{table}[H]
\centering
\caption{Total wall-clock runtime.}
\label{tab:additional-runtime-summary}
\begin{tabular}{lcc}
\toprule
Experiment & Total runtime (min) & Total runtime (h) \\
\midrule
Metabolomics dataset under MCAR/MAR missingness & 192.5 & 3.21 \\
Metabolomics dataset under MNAR missingness & 192.0 & 3.20 \\
Housing dataset under MCAR missingness & 12.9 & 0.22 \\
Housing dataset under MAR missingness & 13.0 & 0.22 \\
\bottomrule
\end{tabular}
\end{table}

\section{Additional experimental results}

\label{app:additionalplot}

\subsection{Results including the original SVT method}
\label{app:og-paper-results}

\begin{figure}[H]
    \centering

    \begin{subfigure}[b]{0.49\textwidth}
        \centering
        \includegraphics[width=\textwidth]{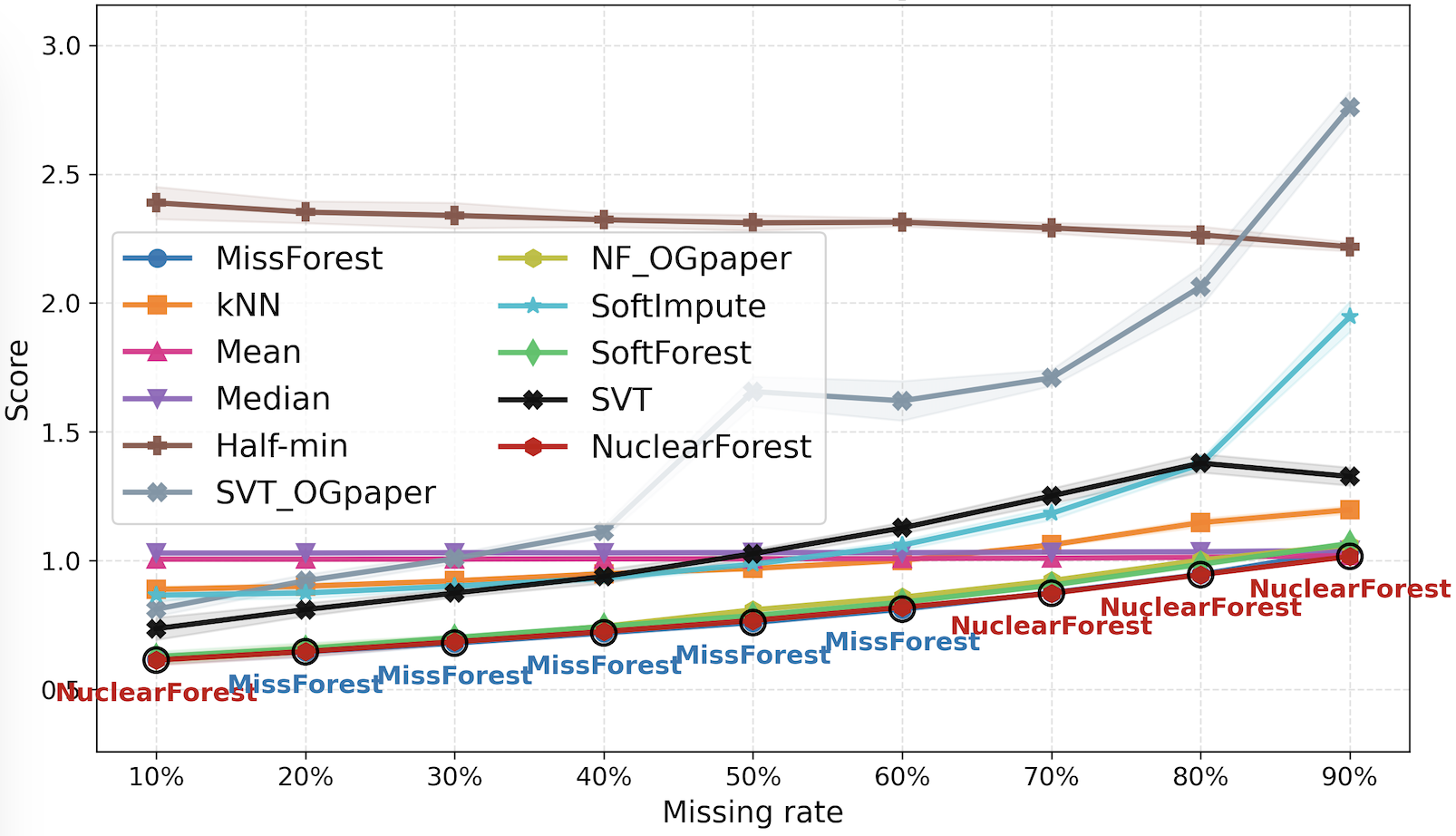}
        \caption{NRMSE}
        \label{fig:subo11}
    \end{subfigure}
    \hfill
    \begin{subfigure}[b]{0.49\textwidth}
        \centering
        \includegraphics[width=\textwidth]{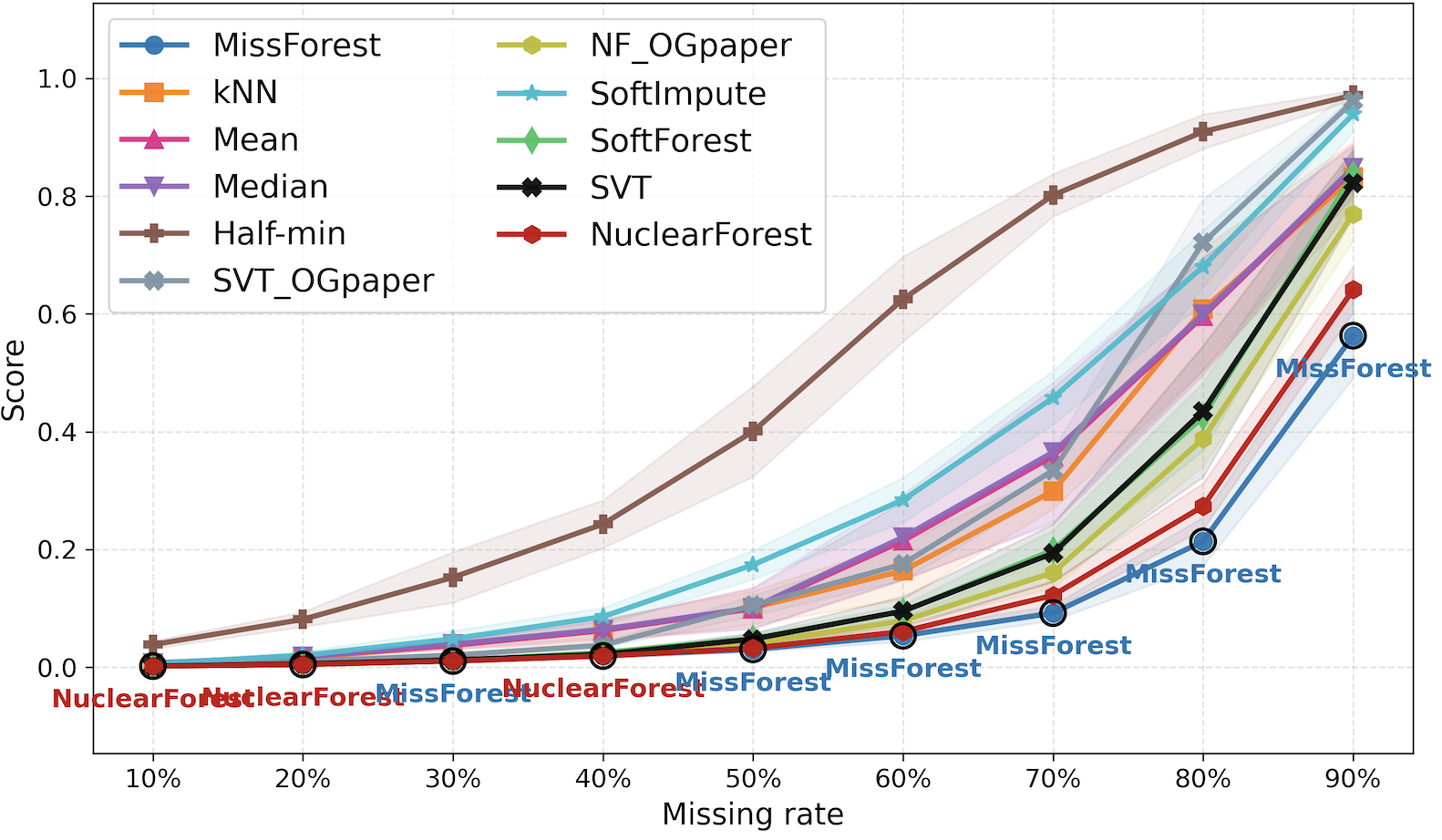}
        \caption{PCA Procrustes distance}
        \label{fig:subo12}
    \end{subfigure}

    \begin{subfigure}[b]{0.49\textwidth}
        \centering
        \includegraphics[width=\textwidth]{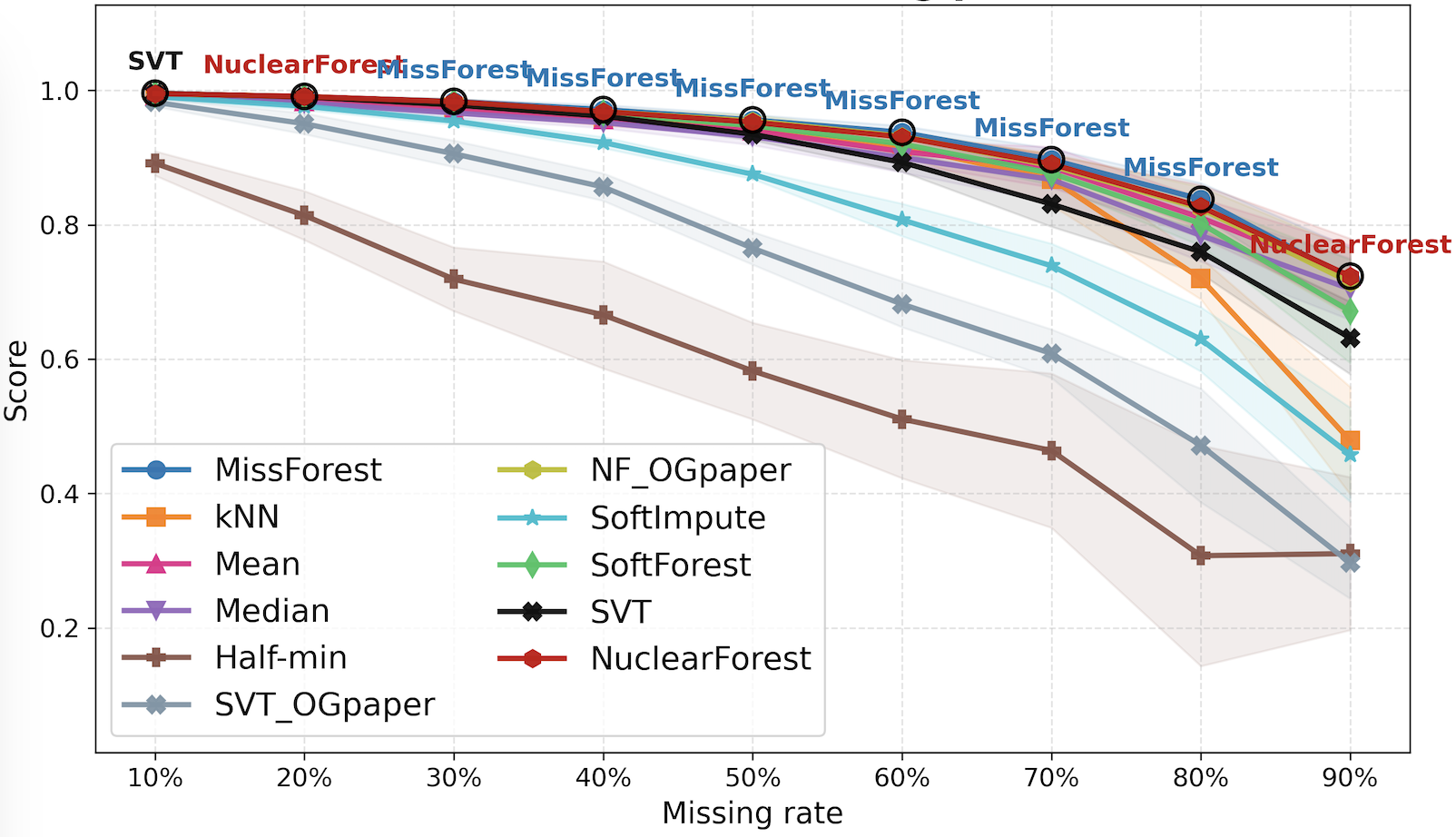}
        \caption{Pearson log-\(p\) correlation}
        \label{fig:subo13}
    \end{subfigure}
    \hfill
    \begin{subfigure}[b]{0.49\textwidth}
        \centering
        \includegraphics[width=\textwidth]{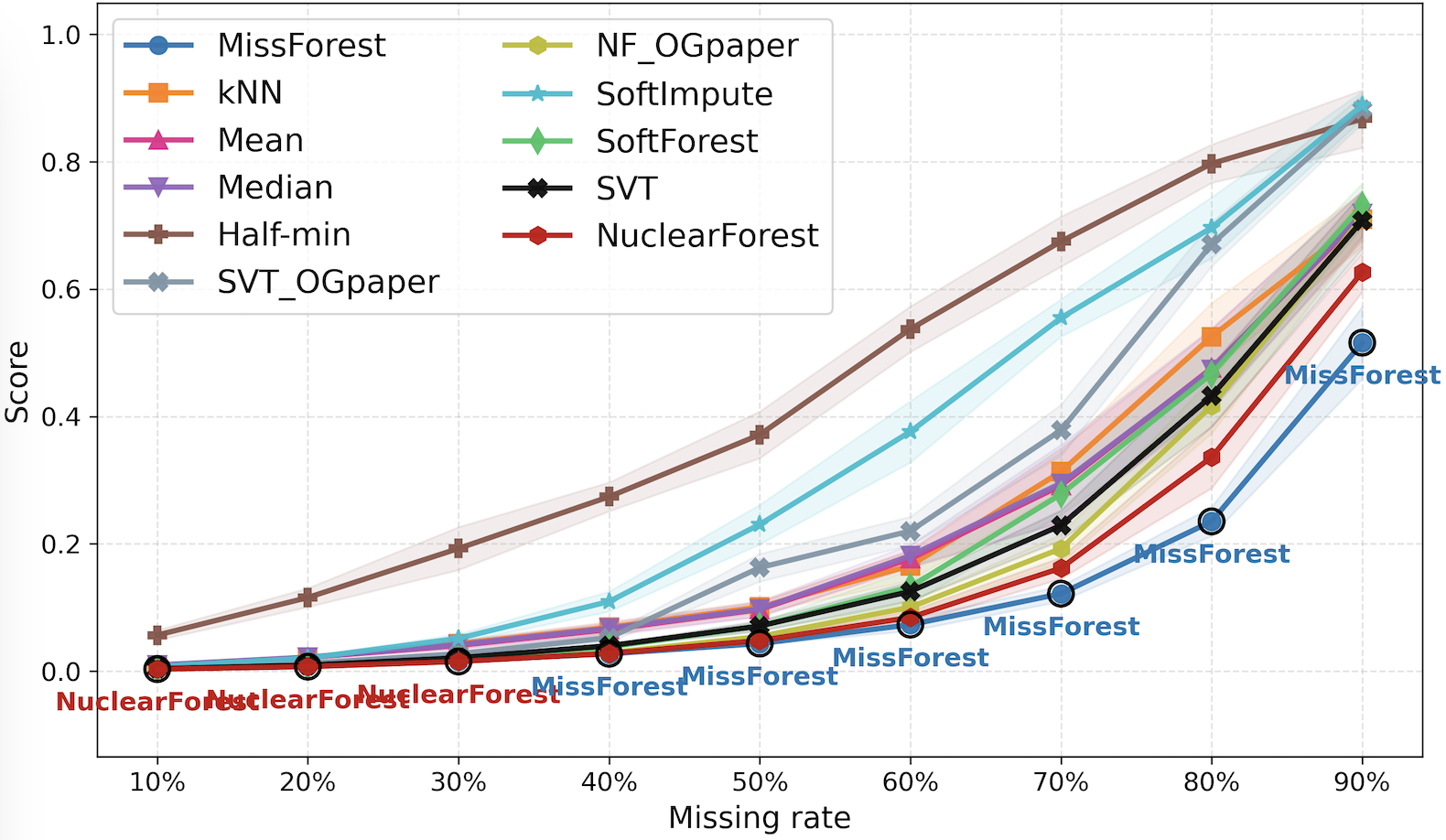}
        \caption{PLS Procrustes distance}
        \label{fig:subo14}
    \end{subfigure}

    \begin{subfigure}[b]{0.49\textwidth}
        \centering
        \includegraphics[width=\textwidth]{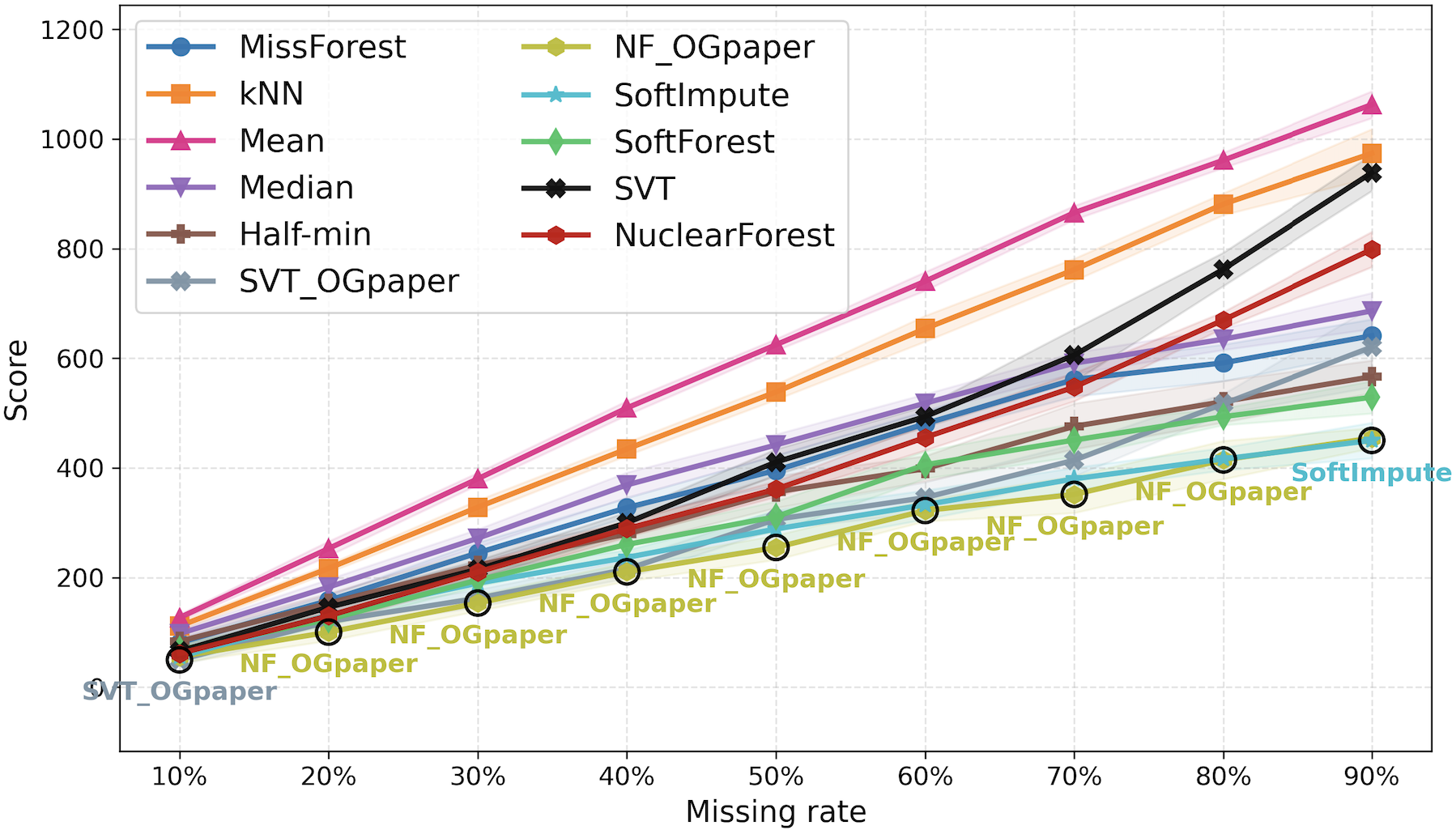}
        \caption{SOR}
        \label{fig:subo21}
    \end{subfigure}
    \hfill
    \begin{subfigure}[b]{0.49\textwidth}
        \centering
        \includegraphics[width=\textwidth]{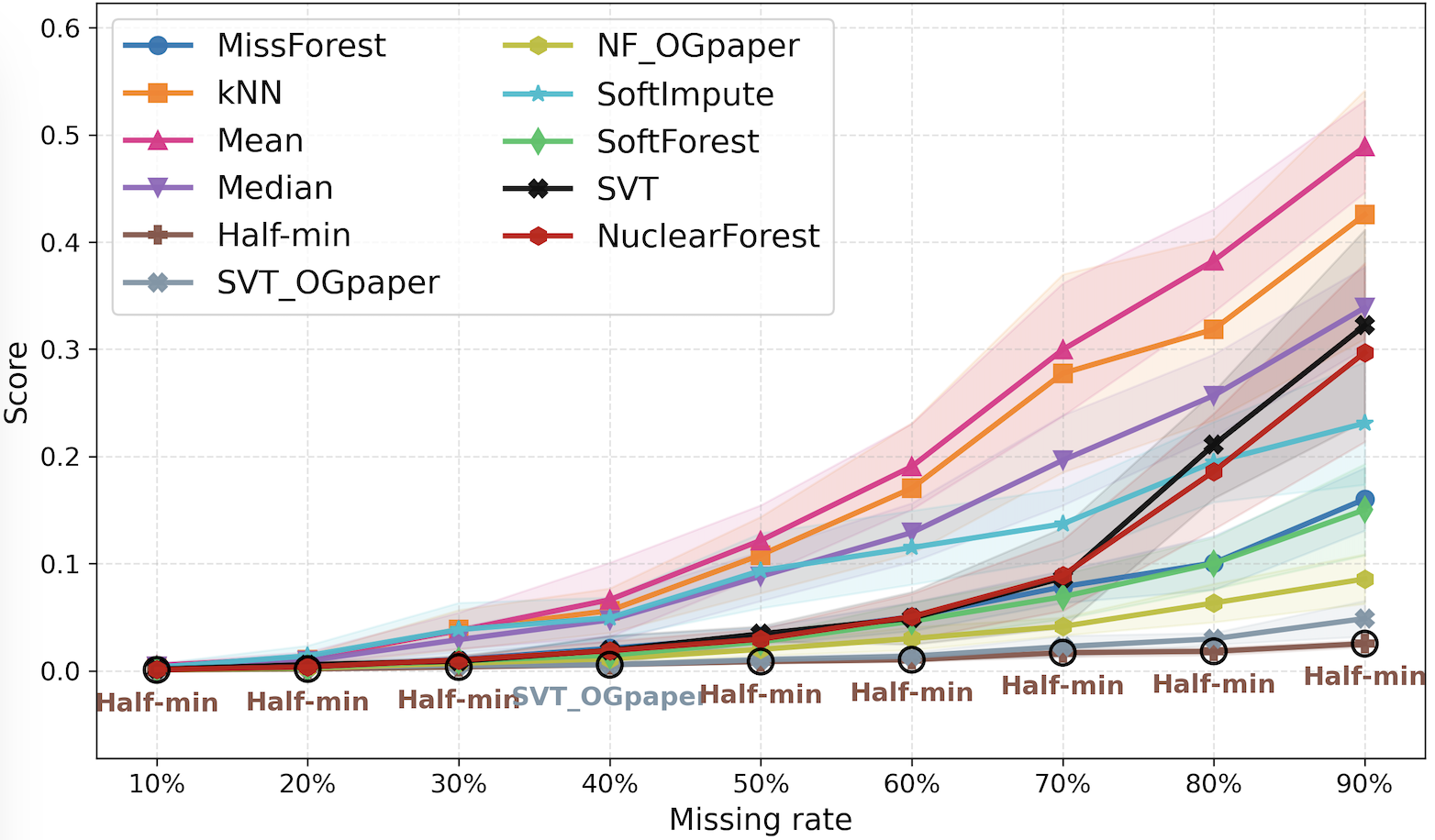}
        \caption{PCA Procrustes distance}
        \label{fig:subo22}
    \end{subfigure}

    \begin{subfigure}[b]{0.49\textwidth}
        \centering
        \includegraphics[width=\textwidth]{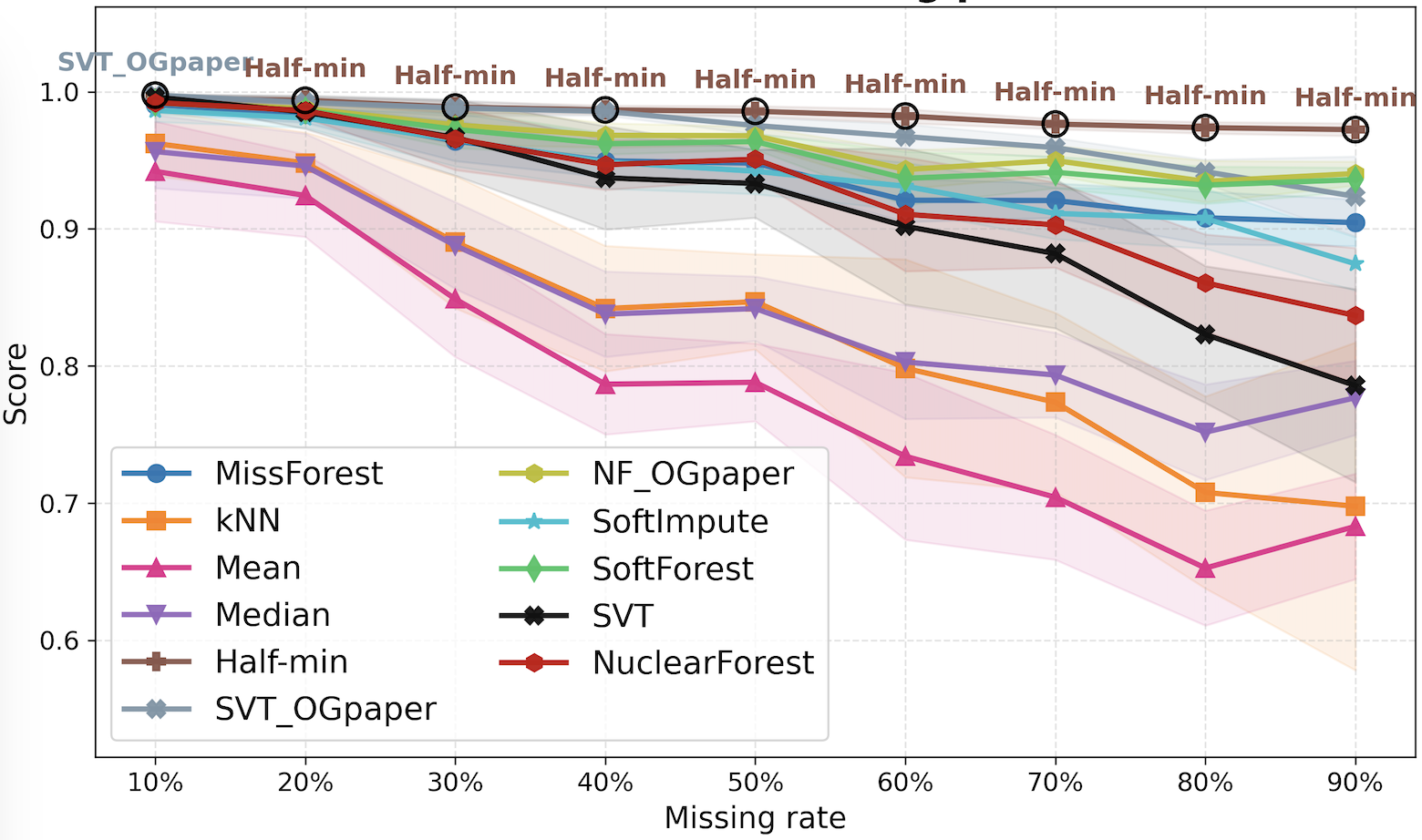}
        \caption{Pearson log-\(p\) correlation}
        \label{fig:subo23}
    \end{subfigure}
    \hfill
    \begin{subfigure}[b]{0.49\textwidth}
        \centering
        \includegraphics[width=\textwidth]{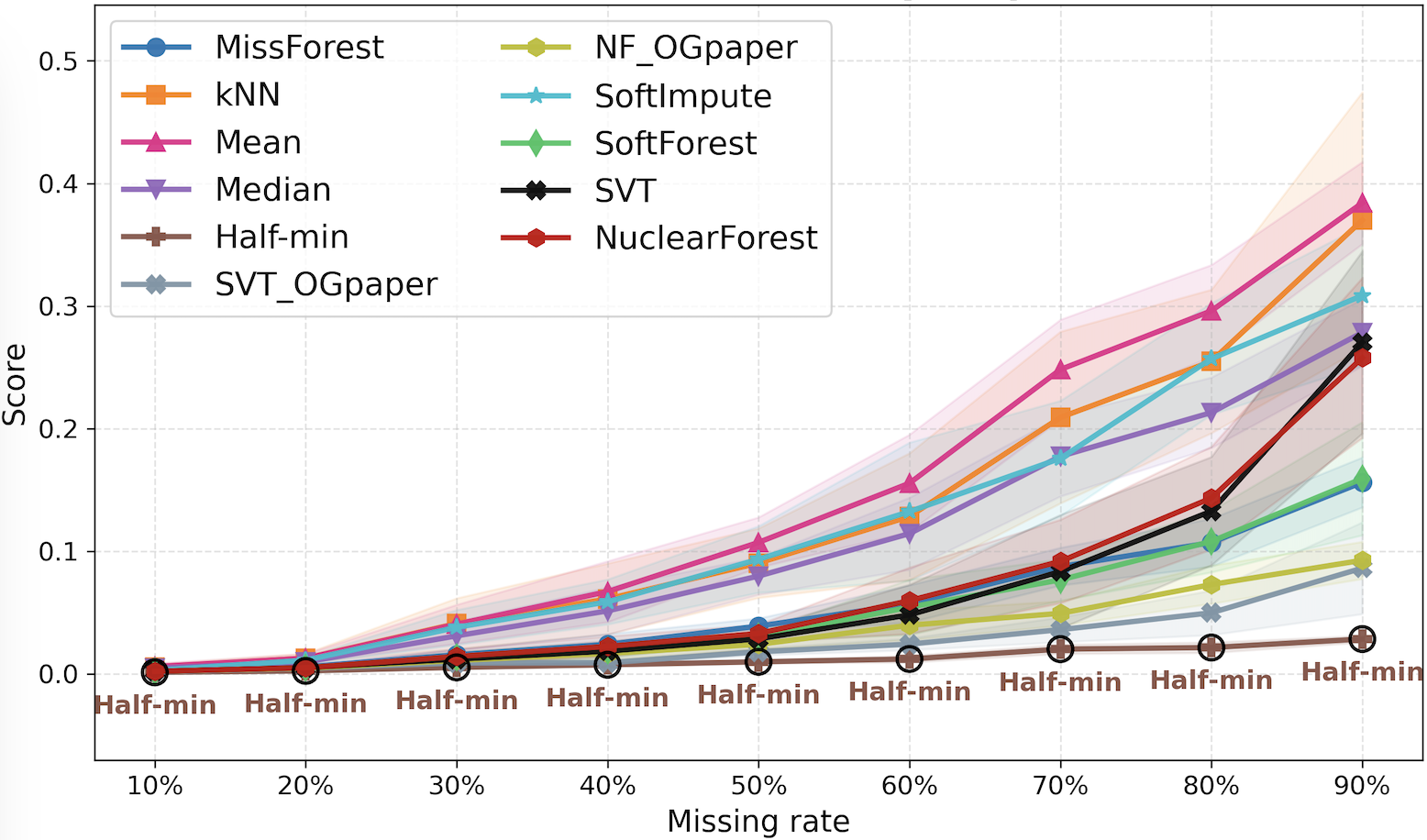}
        \caption{PLS Procrustes distance}
        \label{fig:subo24}
    \end{subfigure}

    \caption{Comparison of SVT\_OGpaper and NF\_OGpaper with the adapted SVT and NuclearForest methods on the metabolomics dataset under MCAR/MAR (subfigures a--d) and MNAR (subfigures e--h).}

    \label{fig:og}
\end{figure}

\begin{figure}[H]
    \centering

    \begin{subfigure}[b]{0.32\textwidth}
        \centering
        \includegraphics[width=\textwidth]{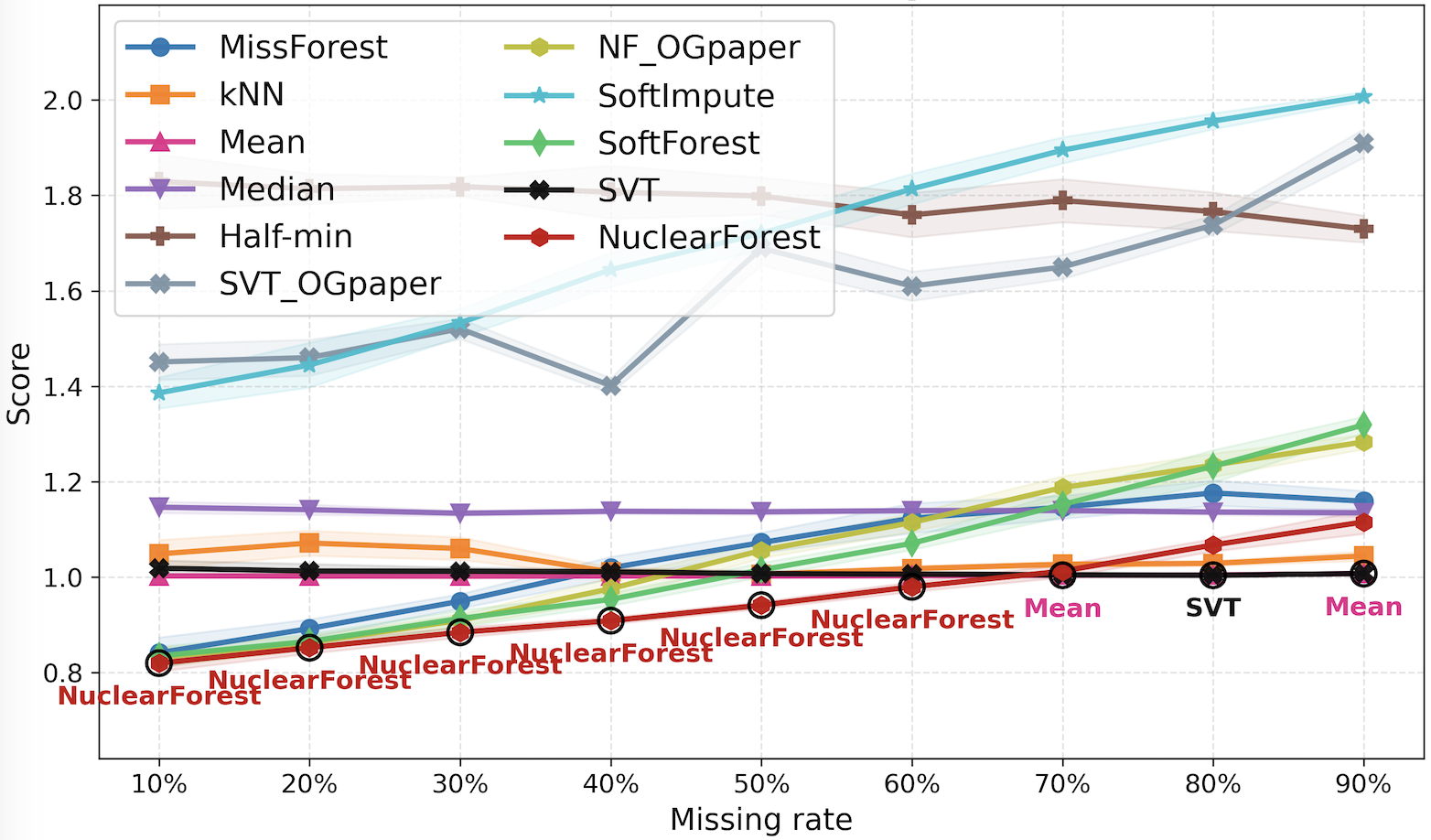}
        \caption{NRMSE}
        \label{fig:subo31}
    \end{subfigure}
    \hfill
    \begin{subfigure}[b]{0.32\textwidth}
        \centering
        \includegraphics[width=\textwidth]{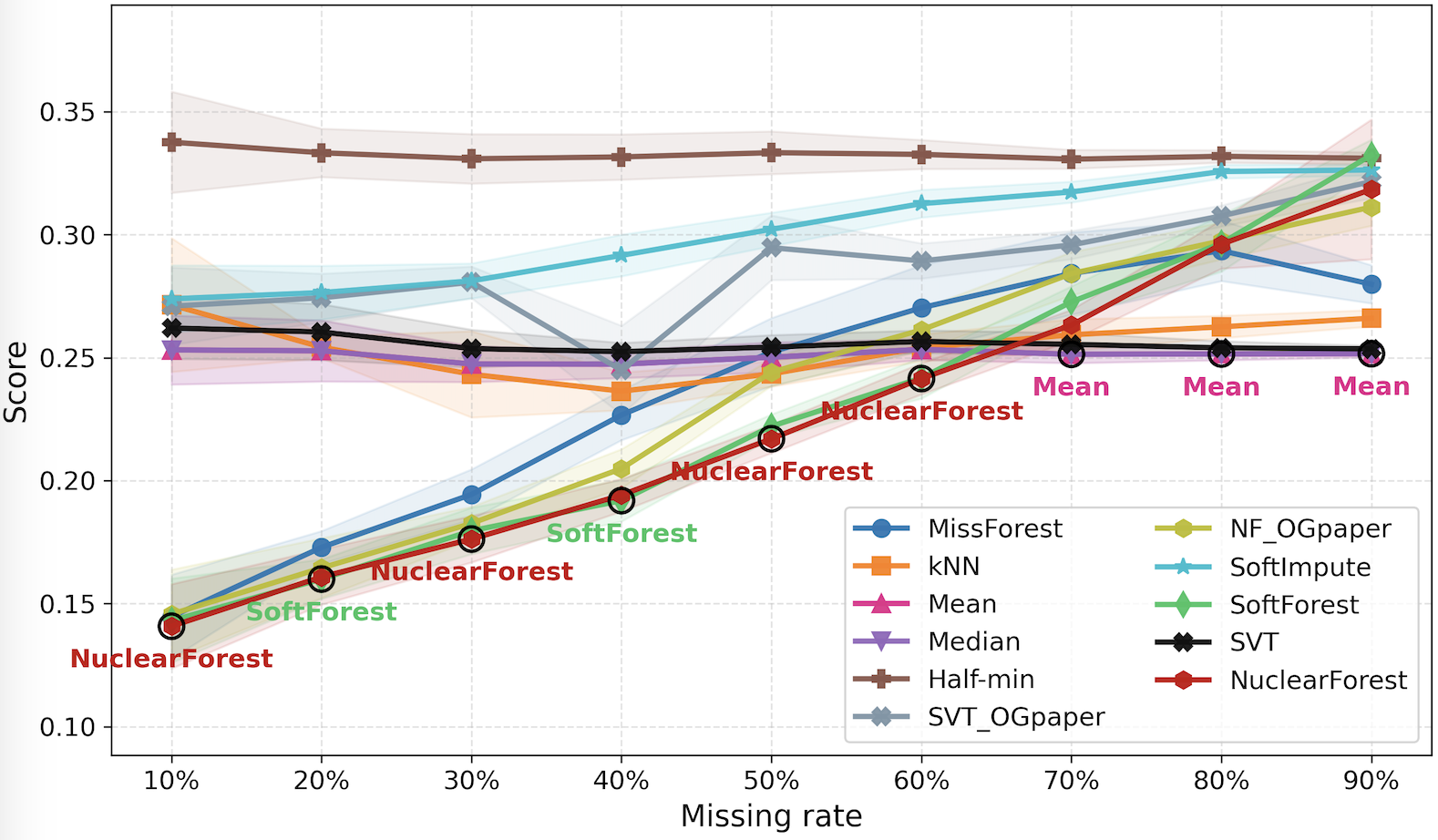}
        \caption{PFC}
        \label{fig:subo32}
    \end{subfigure}
    \hfill
    \begin{subfigure}[b]{0.32\textwidth}
        \centering
        \includegraphics[width=\textwidth]{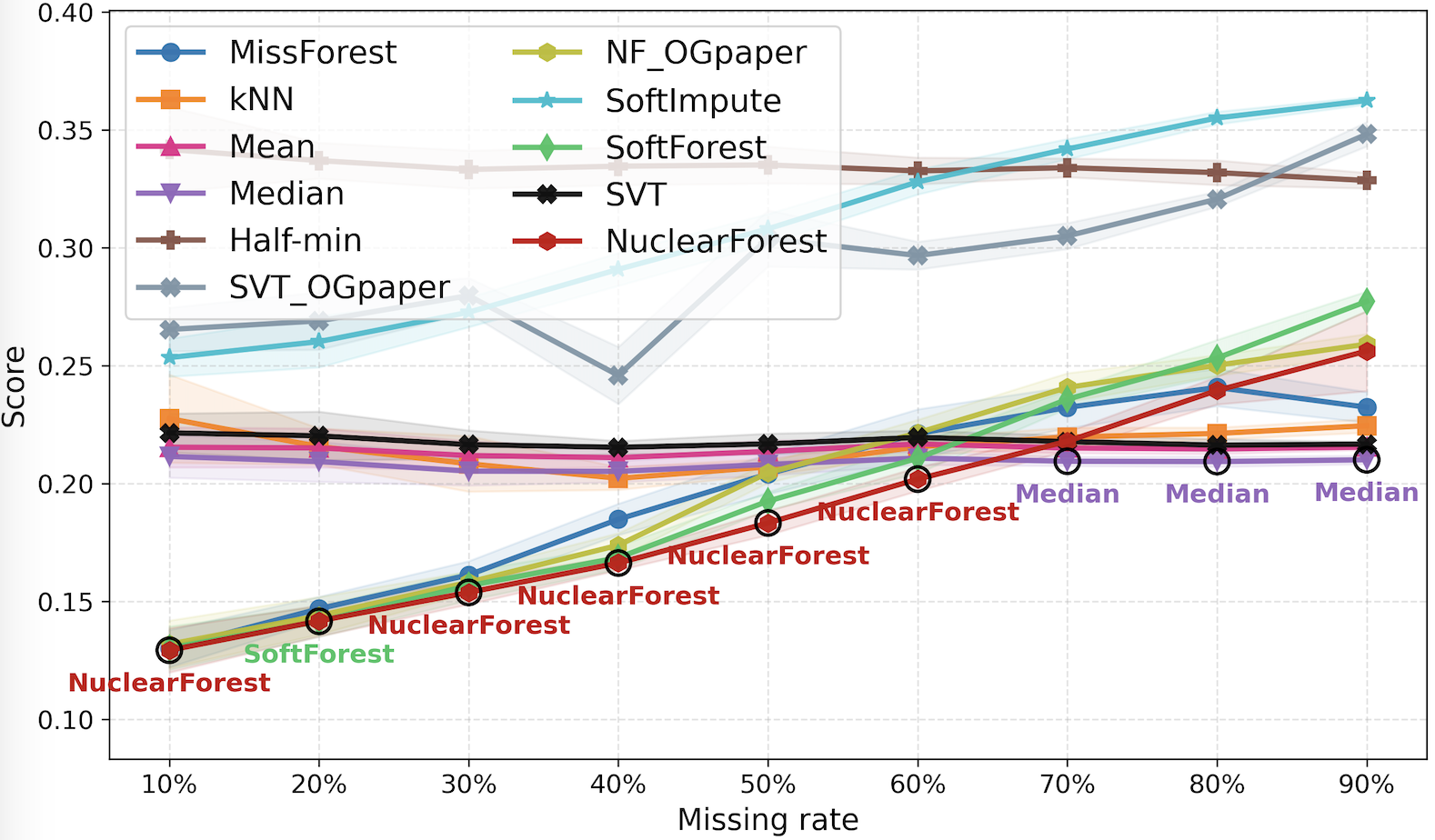}
        \caption{Gower's distance}
        \label{fig:subo33}
    \end{subfigure}

    \begin{subfigure}[b]{0.32\textwidth}
        \centering
        \includegraphics[width=\textwidth]{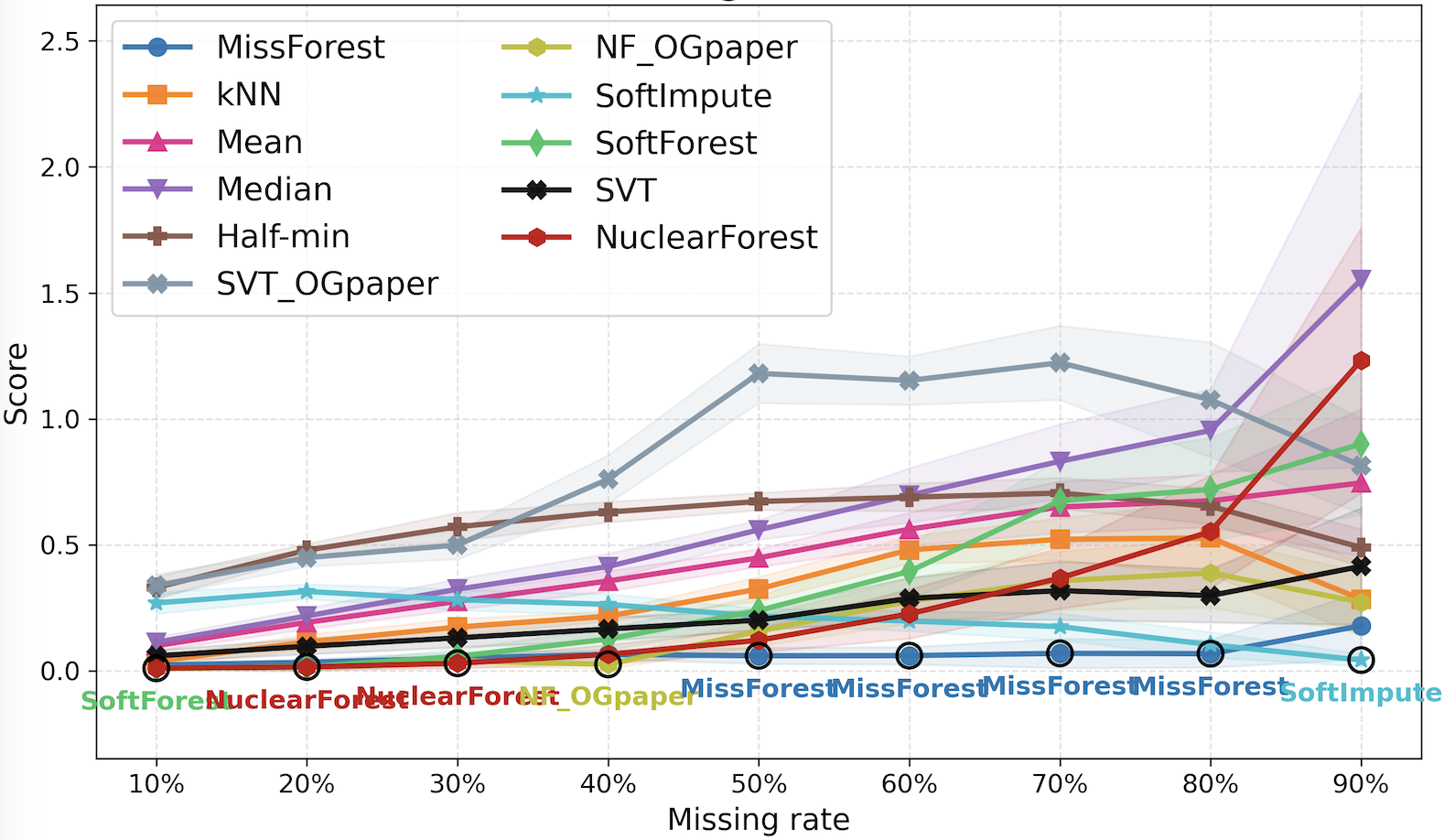}
        \caption{Predictive R\textsuperscript{2} degradation}
        \label{fig:subo34}
    \end{subfigure}
    \hfill
    \begin{subfigure}[b]{0.32\textwidth}
        \centering
        \includegraphics[width=\textwidth]{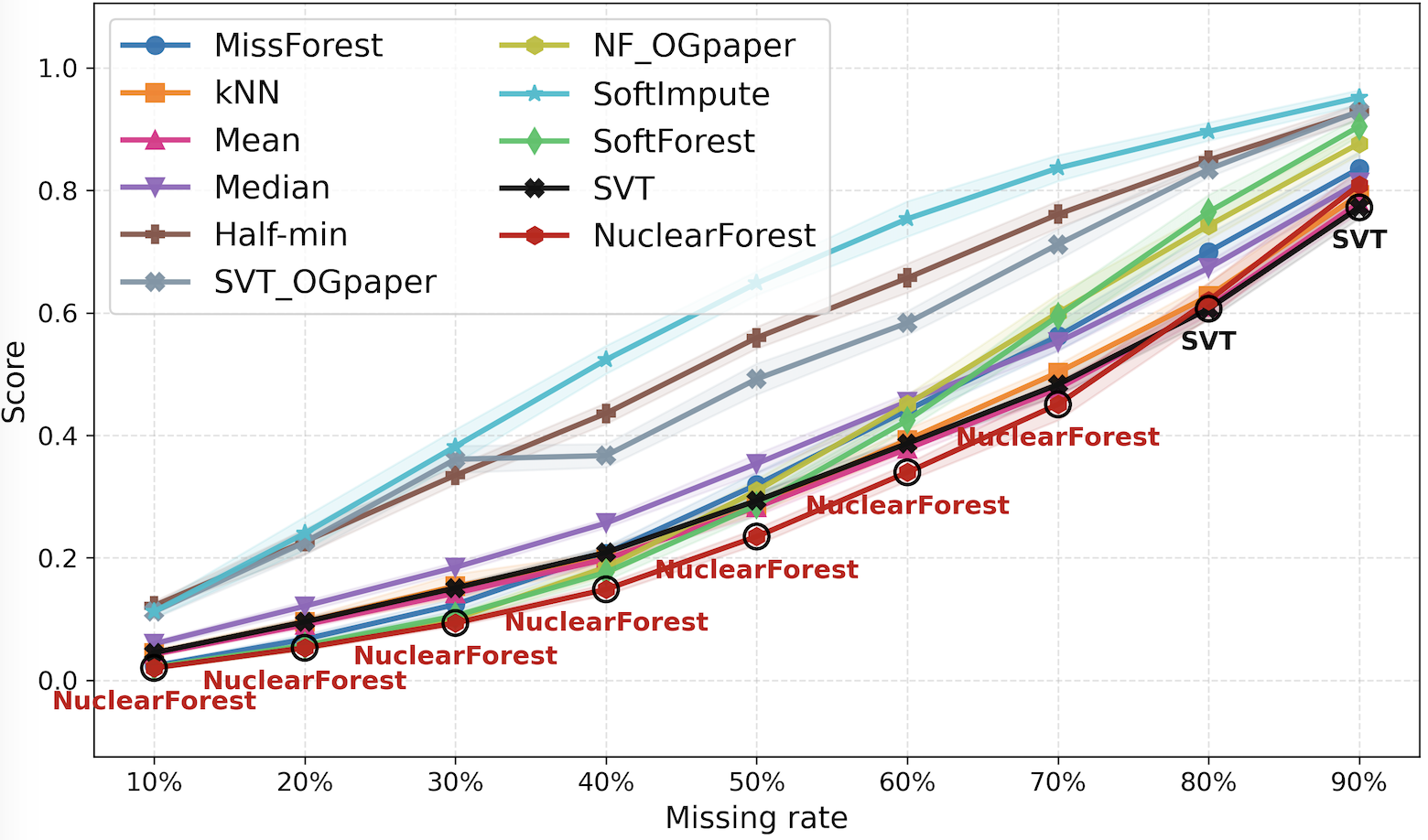}
        \caption{PCA Procrustes distance}
        \label{fig:subo35}
    \end{subfigure}
    \hfill
    \begin{subfigure}[b]{0.32\textwidth}
        \centering
        \includegraphics[width=\textwidth]{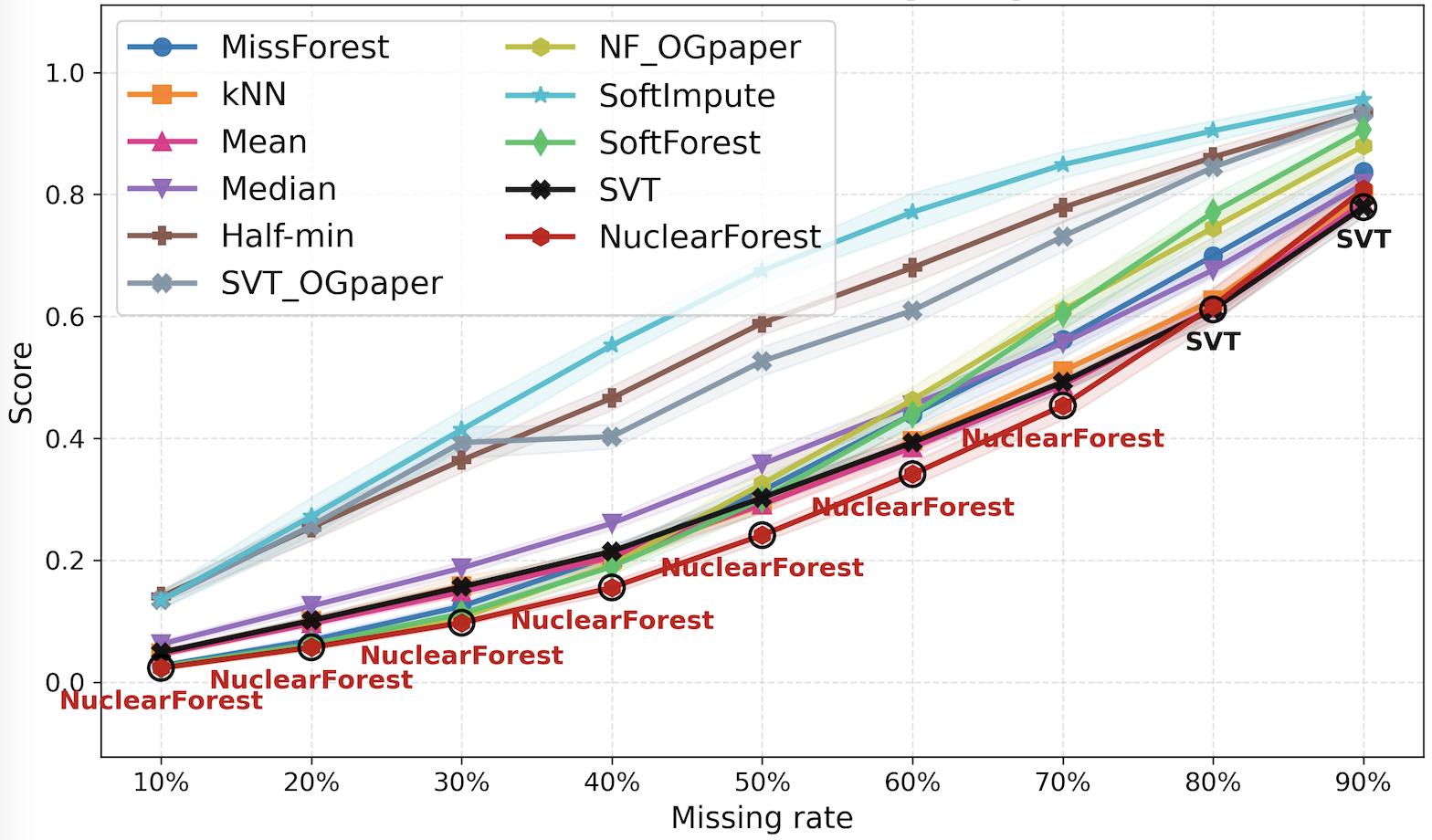}
        \caption{PLS Procrustes distance}
        \label{fig:subo36}
    \end{subfigure}

    \begin{subfigure}[b]{0.32\textwidth}
        \centering
        \includegraphics[width=\textwidth]{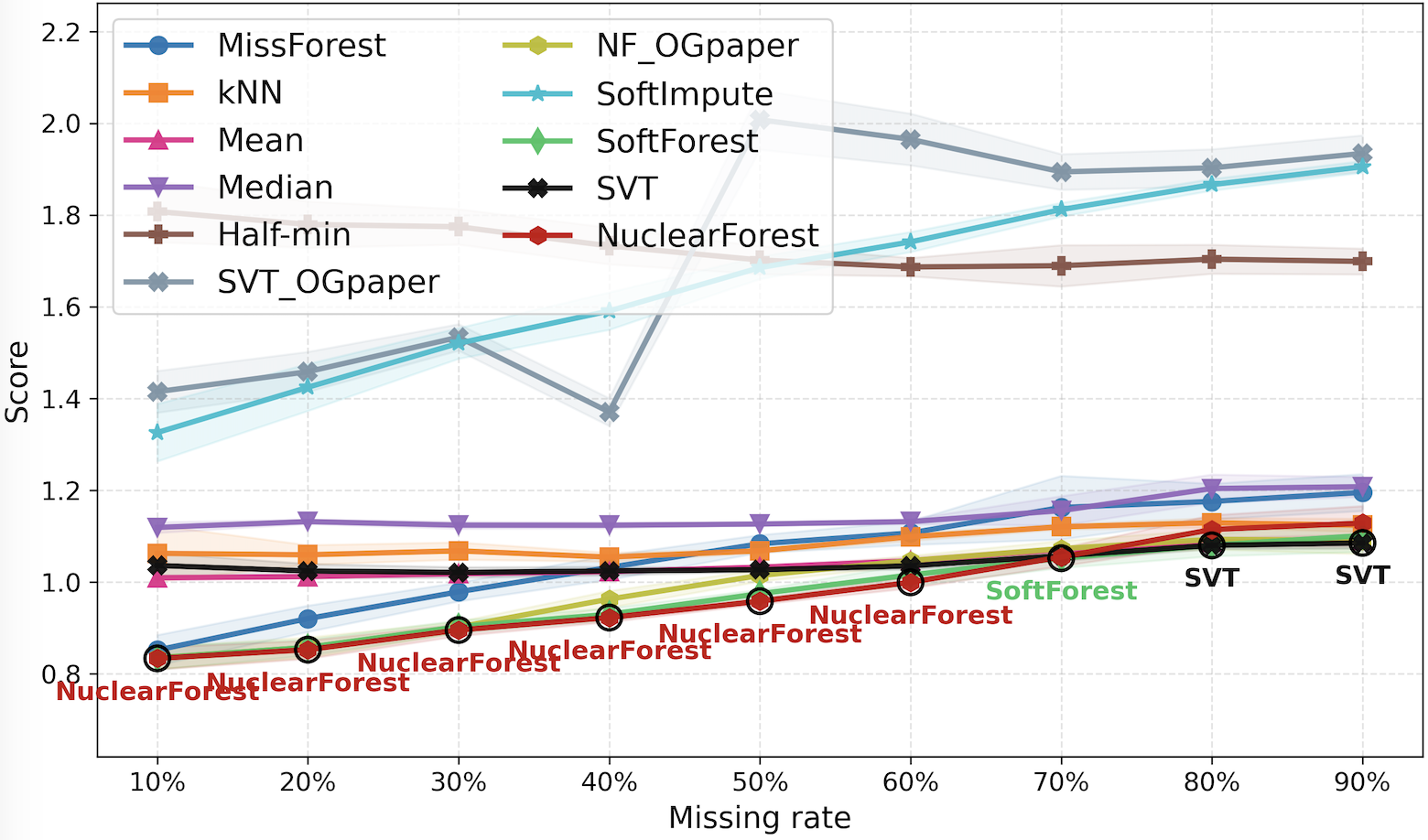}
        \caption{NRMSE}
        \label{fig:subo37}
    \end{subfigure}
    \hfill
    \begin{subfigure}[b]{0.32\textwidth}
        \centering
        \includegraphics[width=\textwidth]{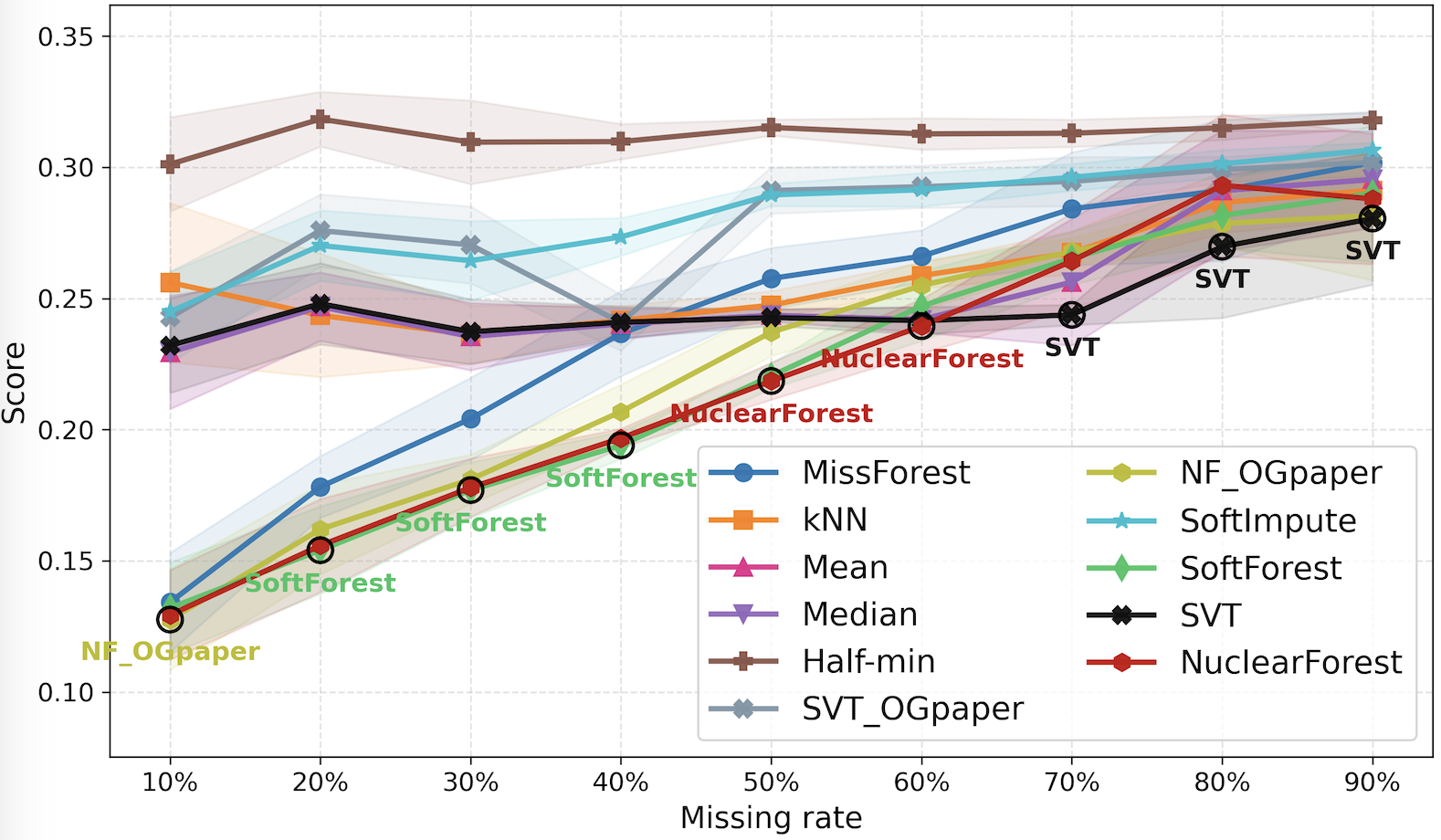}
        \caption{PFC}
        \label{fig:subo38}
    \end{subfigure}
    \hfill
    \begin{subfigure}[b]{0.32\textwidth}
        \centering
        \includegraphics[width=\textwidth]{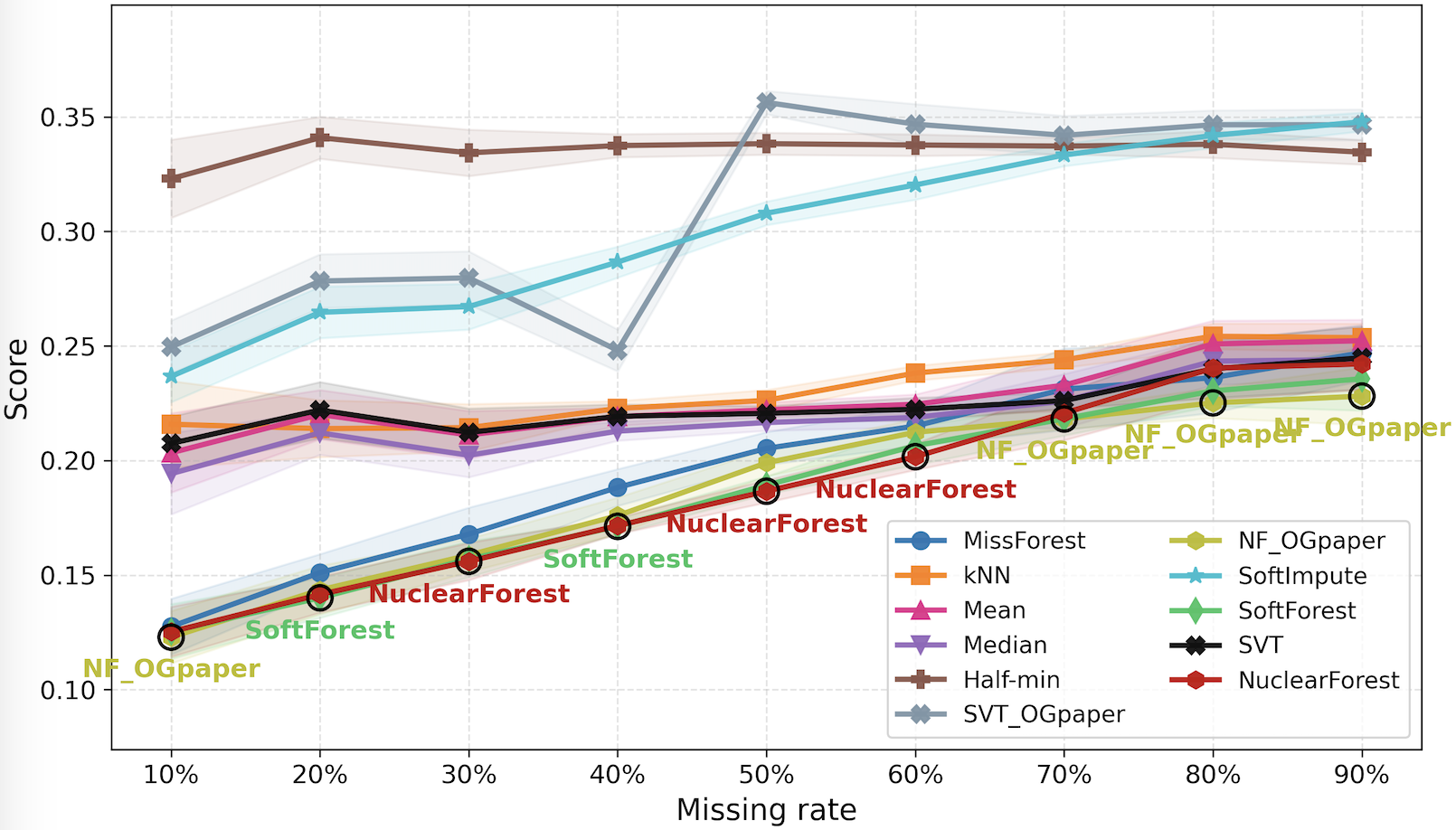}
        \caption{Gower's distance}
        \label{fig:subo39}
    \end{subfigure}

    \begin{subfigure}[b]{0.32\textwidth}
        \centering
        \includegraphics[width=\textwidth]{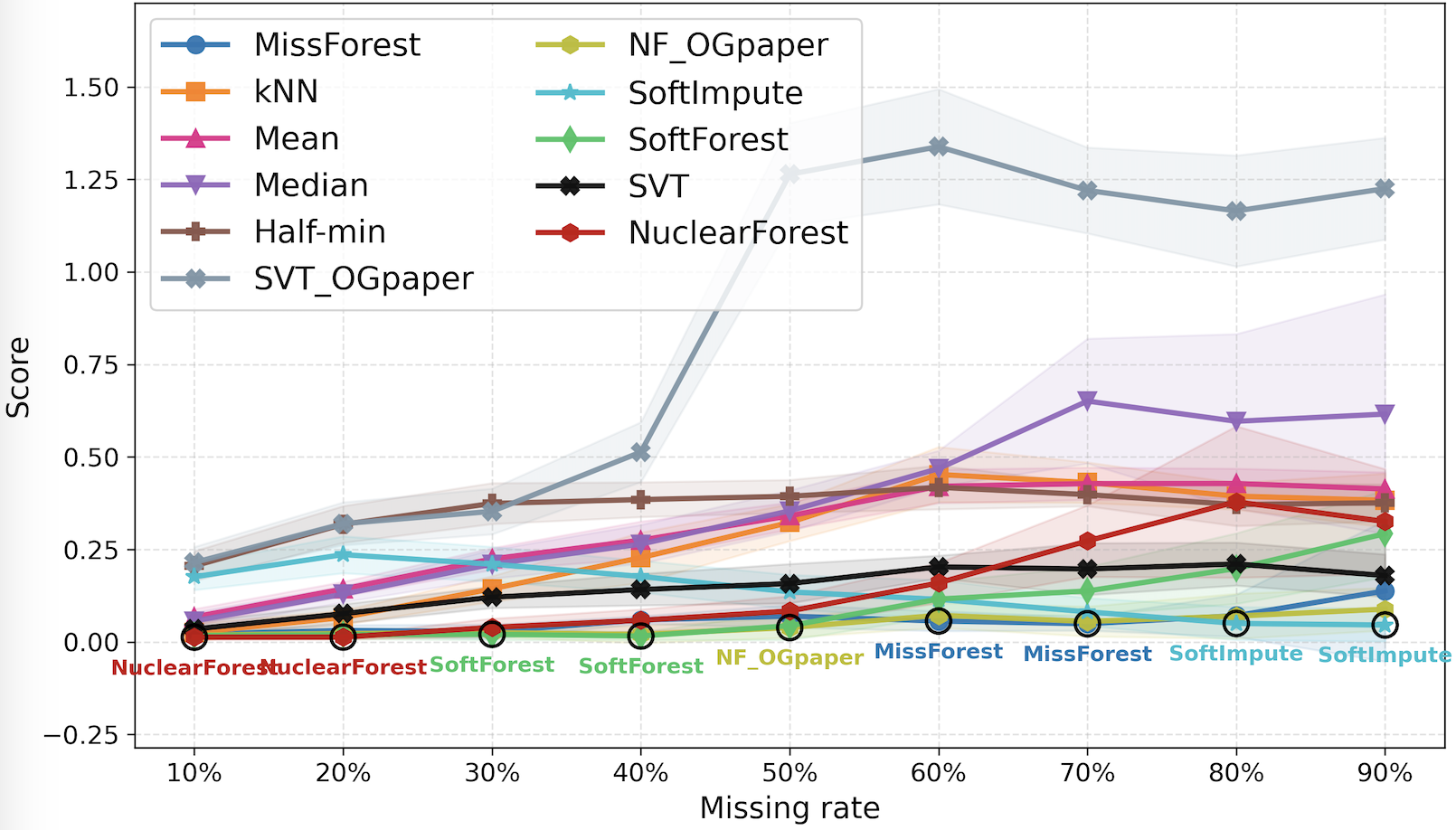}
        \caption{Predictive R\textsuperscript{2} degradation}
        \label{fig:subo40}
    \end{subfigure}
    \hfill
    \begin{subfigure}[b]{0.32\textwidth}
        \centering
        \includegraphics[width=\textwidth]{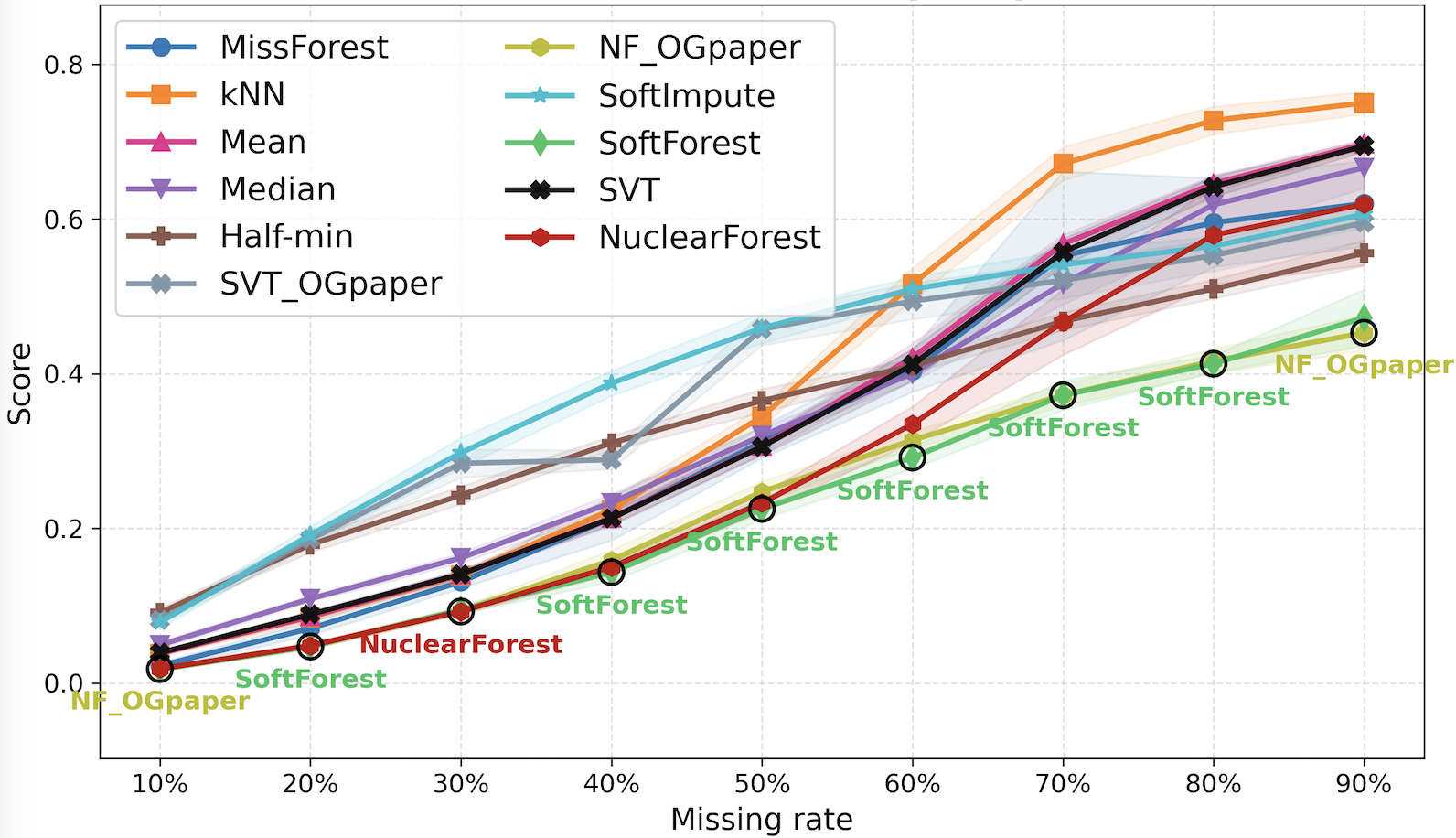}
        \caption{PCA Procrustes distance}
        \label{fig:subo41}
    \end{subfigure}
    \hfill
    \begin{subfigure}[b]{0.32\textwidth}
        \centering
        \includegraphics[width=\textwidth]{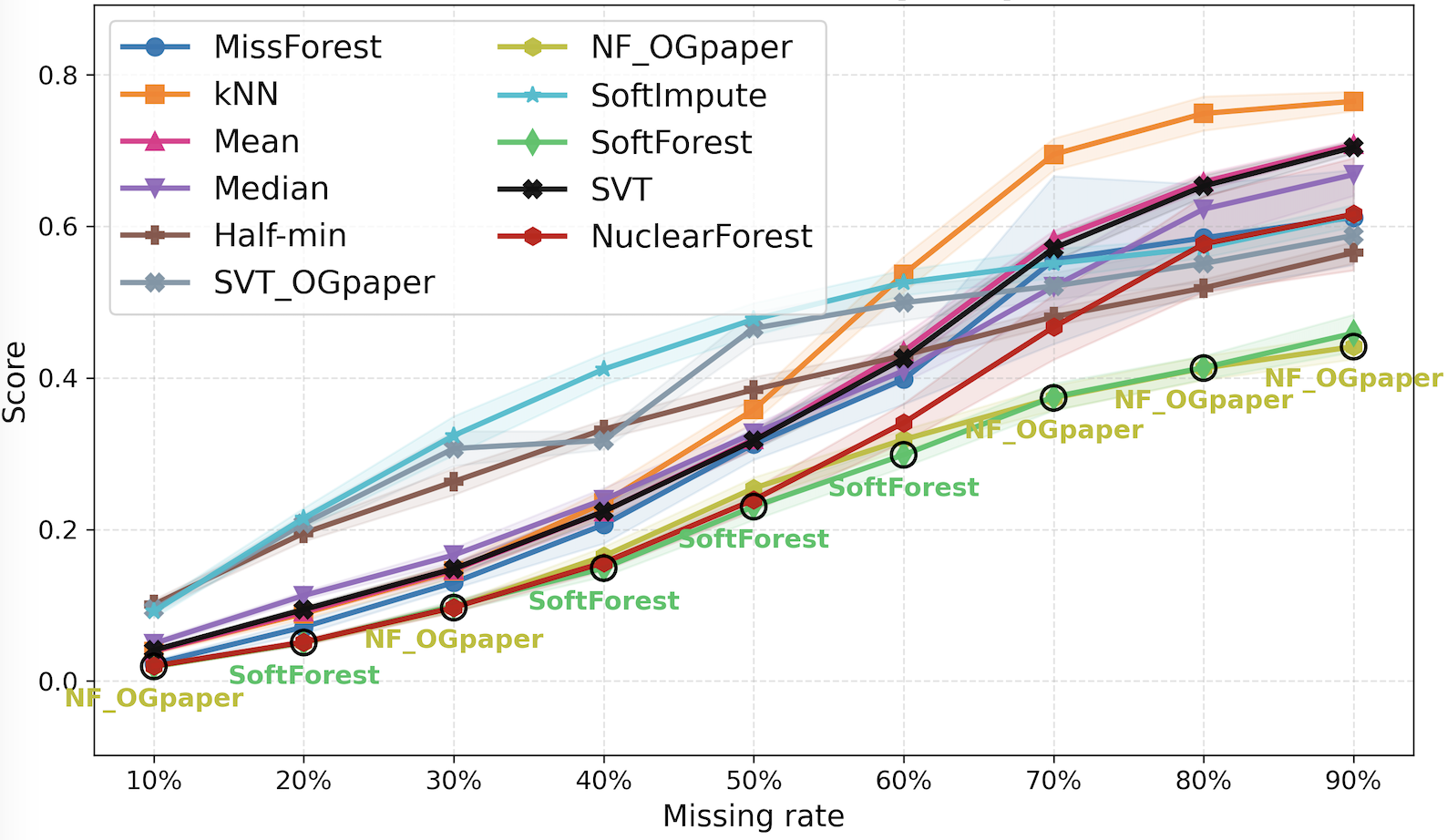}
        \caption{PLS Procrustes distance}
        \label{fig:subo42}
    \end{subfigure}

    \caption{Comparison of SVT\_OGpaper and NF\_OGpaper with the adapted SVT and NuclearForest methods on the housing dataset under MCAR (subfigures a--f) and MAR (subfigures g--l).}

    \label{fig:housingoriginal}
\end{figure}

\begin{figure}[H]
    \centering

    \begin{subfigure}[b]{0.49\textwidth}
        \centering
        \includegraphics[width=\textwidth]{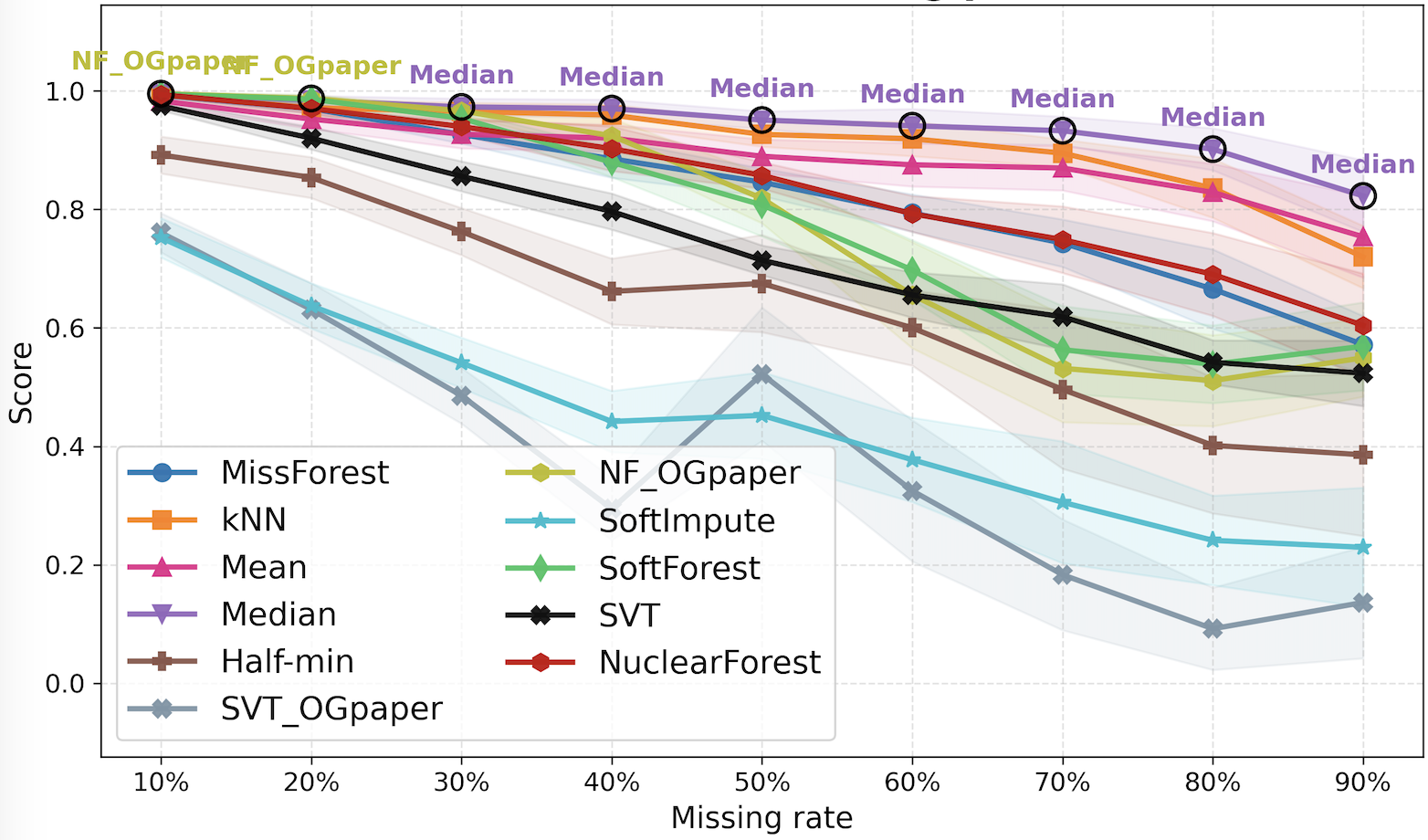}
        \caption{Pearson log-\(p\) correlation}
        \label{fig:subo41}
    \end{subfigure}
    \hfill
    \begin{subfigure}[b]{0.49\textwidth}
        \centering
        \includegraphics[width=\textwidth]{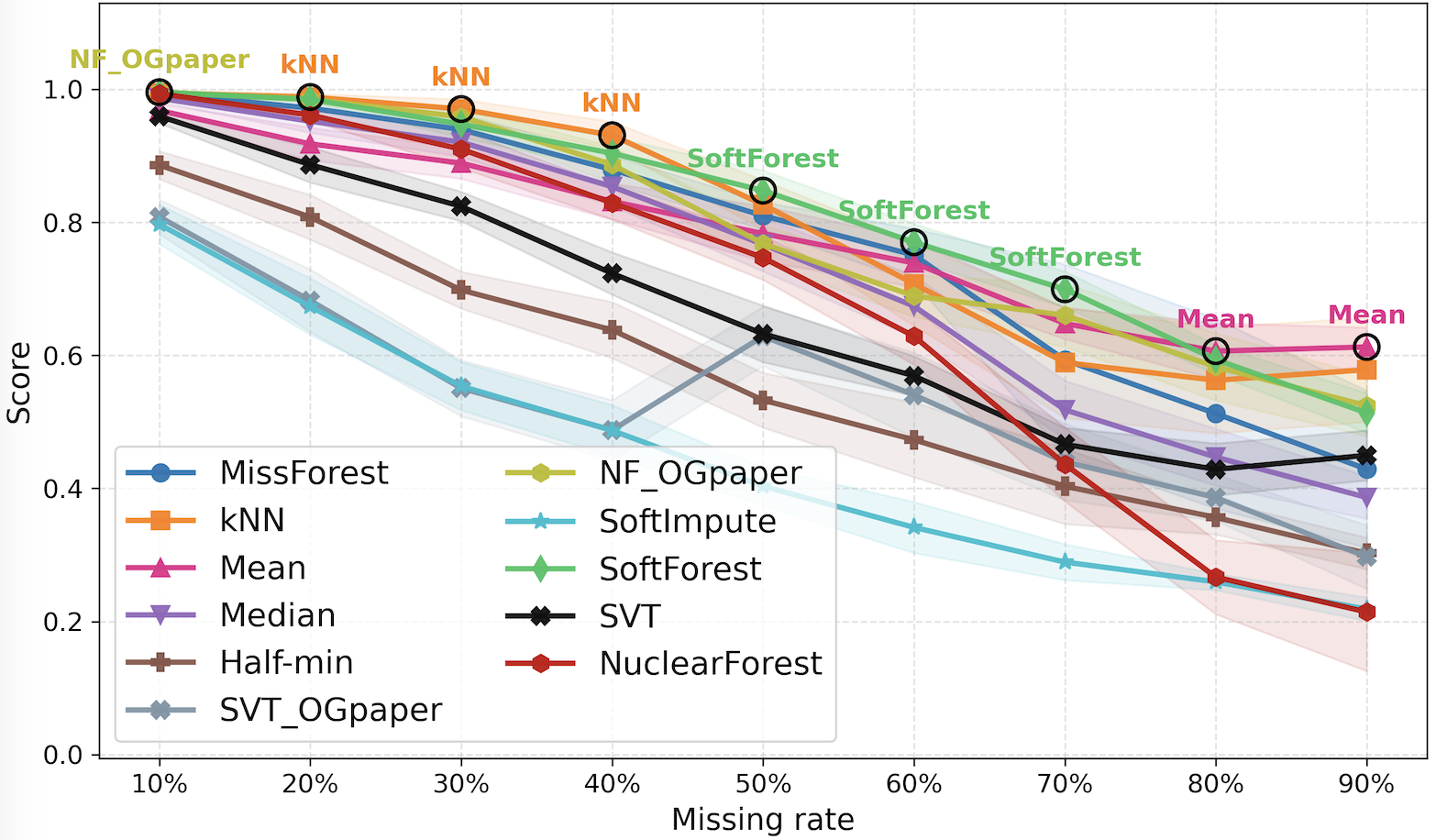}
        \caption{Pearson log-\(p\) correlation}
        \label{fig:subo42}
    \end{subfigure}

    \caption{Comparison of imputation methods on the housing dataset under MCAR (subfigure a) and MAR (subfigure b).}

    \label{fig:housinglogp}
\end{figure}

%\paragraph{Results} As shown in \cref{fig:og}, the adapted versions of SVT and NuclearForest outperform the original implementations under MCAR/MAR. However, under MNAR, the original versions generally achieve better performance than the adapted variants. Under MNAR, the adaptive variants may be disadvantaged because their step-size updates are guided by the observed-entry residual, while the observed entries are themselves biased under informative missingness. Thus, better fitting of the observed set does not necessarily imply better recovery of the missing entries, and may reinforce the missingness-induced bias. The original fixed-step SVT update is less directly coupled to residual-based adaptation, which may explain its stronger performance under MNAR. This is consistent with the strong performance of Half-min, as the MNAR mechanism resembles left-censoring. The Half-min method remains the best-performing imputer for this missingness mechanism overall except SOR metric.

\subsection{Partial SVD versus full SVD in SVT}

\label{app:partial-full-svd}

\ref{app:partial-full-svd} compares partial and full SVD implementations inside SVT. These results assess whether the choice of SVD solver materially affects imputation quality in the small-to-moderate dense matrices considered in this work. SVT\_partial uses a partial SVD implementation, whereas SVT and SVT\_OGpaper use full SVD.

\begin{table}[H]
\centering

\centering

\caption{NRMSE}
\label{tab:particalSVD_nrmse}

\begin{tabular}{lccc}
\toprule
Rate & SVT\_OGpaper & SVT\_partial & SVT \\
\midrule

10\% & 0.806 ± 0.011 & 0.804 ± 0.010 & \textbf{0.740 ± 0.050} \\
20\% & 0.920 ± 0.029 & 0.917 ± 0.028 & \textbf{0.814 ± 0.032} \\
30\% & 0.999 ± 0.012 & 0.995 ± 0.014 & \textbf{0.874 ± 0.015} \\
40\% & 1.104 ± 0.027 & 1.096 ± 0.028 & \textbf{0.938 ± 0.028} \\
50\% & 1.646 ± 0.067 & 1.646 ± 0.067 & \textbf{1.023 ± 0.023} \\
60\% & 1.579 ± 0.014 & 1.579 ± 0.014 & \textbf{1.140 ± 0.013} \\
70\% & 1.717 ± 0.035 & 1.717 ± 0.035 & \textbf{1.252 ± 0.035} \\
80\% & 2.028 ± 0.066 & 2.028 ± 0.066 & \textbf{1.360 ± 0.033} \\
90\% & 2.753 ± 0.069 & 2.753 ± 0.069 & \textbf{1.332 ± 0.028} \\
\bottomrule
\end{tabular}

\end{table}

\begin{table}[H]
\centering

\caption{PCA Procrustes distance}
\label{tab:particalSVD_pca_procrustes}

\begin{tabular}{lccc}
\toprule
Rate & SVT\_OGpaper & SVT\_partial & SVT \\
\midrule
10\% & 0.0028 ± 0.0006 & 0.0028 ± 0.0007 & \textbf{0.0019 ± 0.0007} \\
20\% & 0.0095 ± 0.0026 & 0.0096 ± 0.0027 & \textbf{0.0059 ± 0.0011} \\
30\% & 0.0216 ± 0.0029 & 0.0215 ± 0.0030 & \textbf{0.0119 ± 0.0014} \\
40\% & 0.0393 ± 0.0045 & 0.0385 ± 0.0042 & \textbf{0.0226 ± 0.0048} \\
50\% & 0.1020 ± 0.0178 & 0.1020 ± 0.0178 & \textbf{0.0488 ± 0.0123} \\
60\% & 0.1600 ± 0.0309 & 0.1600 ± 0.0309 & \textbf{0.0952 ± 0.0312} \\
70\% & 0.3231 ± 0.0238 & 0.3231 ± 0.0238 & \textbf{0.1715 ± 0.0343} \\
80\% & 0.7594 ± 0.0723 & 0.7594 ± 0.0723 & \textbf{0.4641 ± 0.1562} \\
90\% & 0.9626 ± 0.0199 & 0.9626 ± 0.0199 & \textbf{0.8080 ± 0.0604} \\
\bottomrule
\end{tabular}
\end{table}

\begin{table}[H]
\centering
\caption{Pearson correlation of \(-\log_{10}p\) }
\label{tab:pearson_logp_mcar}

\begin{tabular}{lccc}
\toprule
Rate & SVT\_OGpaper & SVT\_partial & SVT \\
\midrule
10\% & 0.982 ± 0.004 & 0.982 ± 0.004 & \textbf{0.995 ± 0.003} \\
20\% & 0.956 ± 0.012 & 0.956 ± 0.012 & \textbf{0.991 ± 0.002} \\
30\% & 0.916 ± 0.020 & 0.915 ± 0.020 & \textbf{0.980 ± 0.003} \\
40\% & 0.858 ± 0.015 & 0.858 ± 0.015 & \textbf{0.960 ± 0.006} \\
50\% & 0.770 ± 0.025 & 0.770 ± 0.025 & \textbf{0.938 ± 0.009} \\
60\% & 0.695 ± 0.039 & 0.695 ± 0.039 & \textbf{0.889 ± 0.018} \\
70\% & 0.606 ± 0.042 & 0.606 ± 0.042 & \textbf{0.838 ± 0.030} \\
80\% & 0.478 ± 0.111 & 0.478 ± 0.111 & \textbf{0.757 ± 0.036} \\
90\% & 0.300 ± 0.044 & 0.300 ± 0.044 & \textbf{0.631 ± 0.044} \\
\bottomrule
\end{tabular}
\end{table}

\begin{table}[H]

\centering
\caption{PLS Procrustes distance}
\label{tab:particalSVD_pls_procrustes}

\begin{tabular}{lccc}
\toprule
Rate & SVT\_OGpaper & SVT\_partial & SVT \\
\midrule
10\% & 0.0045 ± 0.0004 & 0.0045 ± 0.0005 & \textbf{0.0041 ± 0.0005} \\
20\% & 0.0124 ± 0.0004 & 0.0125 ± 0.0005 & \textbf{0.0094 ± 0.0012} \\
30\% & 0.0272 ± 0.0017 & 0.0271 ± 0.0015 & \textbf{0.0203 ± 0.0018} \\
40\% & 0.0562 ± 0.0051 & 0.0560 ± 0.0049 & \textbf{0.0403 ± 0.0022} \\
50\% & 0.1630 ± 0.0243 & 0.1630 ± 0.0243 & \textbf{0.0740 ± 0.0089} \\
60\% & 0.2020 ± 0.0048 & 0.2020 ± 0.0048 & \textbf{0.1272 ± 0.0116} \\
70\% & 0.3742 ± 0.0298 & 0.3742 ± 0.0298 & \textbf{0.2281 ± 0.0287} \\
80\% & 0.6624 ± 0.0201 & 0.6624 ± 0.0201 & \textbf{0.4307 ± 0.0637} \\
90\% & 0.8905 ± 0.0149 & 0.8905 ± 0.0149 & \textbf{0.6956 ± 0.0399} \\

\bottomrule
\end{tabular}
\end{table}

\begin{table}[H]
\centering
\caption{Runtime comparison between partial SVD and full SVD in SVT.}
\label{tab:svd-runtime}

\begin{tabular}{lcccc}
\toprule
Method & Mean runtime (s) & Std runtime (s) & Min runtime (s) & Max runtime (s)  \\
\midrule
SVT\_OGpaper & 5.465 ± 1.817 & 1.817 & 3.933 & 14.748  \\
SVT\_partial & 15.810 ± 5.069 & 5.069 & 8.206 & 29.109  \\
SVT & \textbf{5.292 ± 1.267} & \textbf{1.267} & \textbf{3.896} & \textbf{9.304}  \\
\bottomrule
\end{tabular}
\end{table}

\subsection{Ablation Study}
\label{app:ablation}
The ablation isolates the effects of the two modifications to the SVT: column-mean warm starting and the adaptive step-size rule. The plots compare these variants across missingness levels.

\begin{figure}[H]
    \centering
    \includegraphics[width=\textwidth]{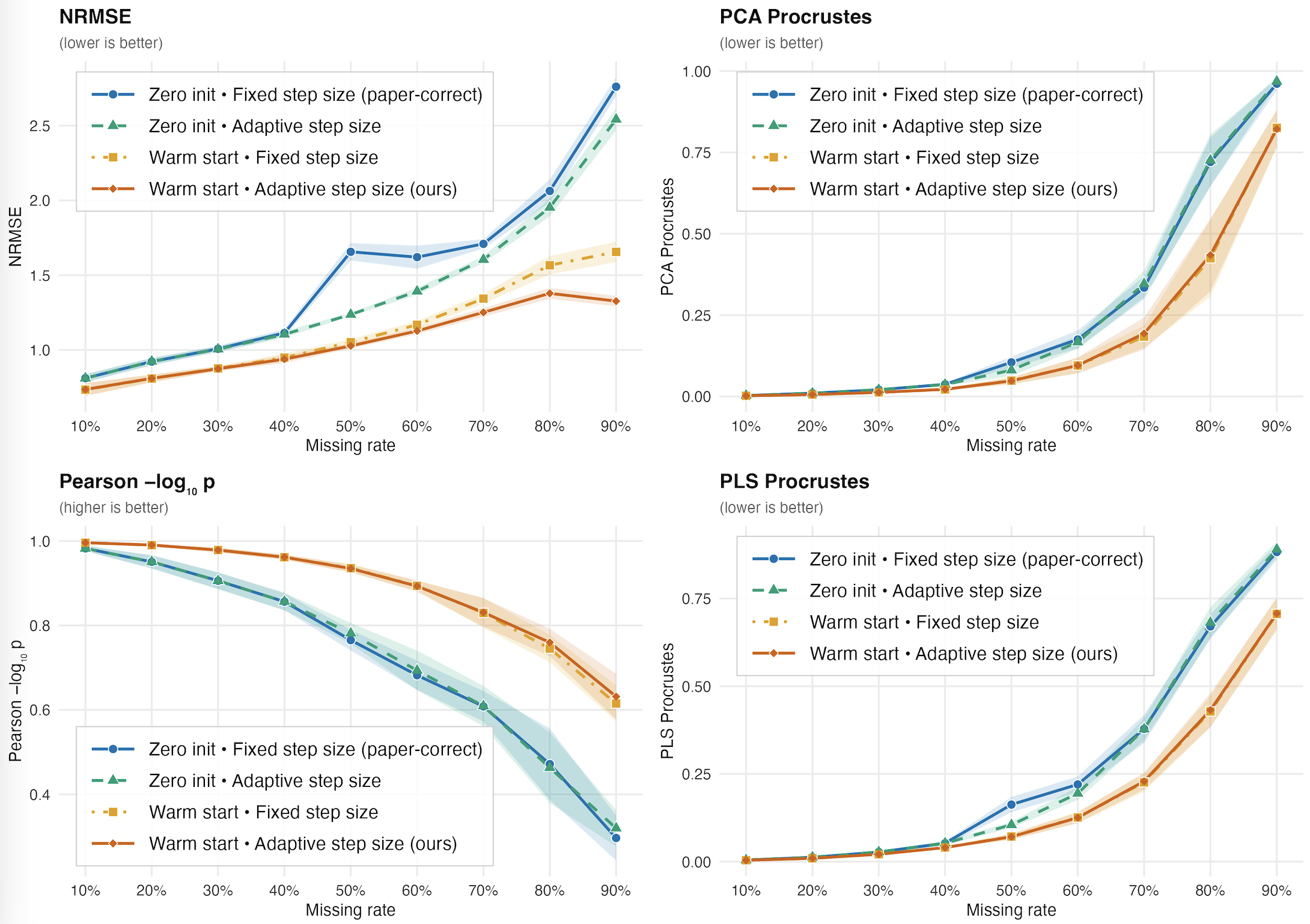}
    \caption{Ablation study on the metabolomics dataset under MCAR/MAR missingness.}
    \label{fig:large_image}
\end{figure}

\begin{table}[H]
\centering
\caption{NRMSE across missing rates}
\small
\begin{tabular}{lcccc}
\toprule
Rate & Zero · fixed & Zero · adaptive & Warm · fixed & Warm · adaptive \\
\midrule
10\% & 0.811 ± 0.027 & 0.811 ± 0.027 & \textbf{0.734 ± 0.042} & 0.736 ± 0.043 \\
20\% & 0.922 ± 0.024 & 0.923 ± 0.024 & \textbf{0.809 ± 0.026} & 0.810 ± 0.025 \\
30\% & 1.008 ± 0.020 & 1.005 ± 0.020 & 0.877 ± 0.017 & \textbf{0.874 ± 0.016} \\
40\% & 1.113 ± 0.025 & 1.104 ± 0.023 & 0.949 ± 0.027 & \textbf{0.938 ± 0.026} \\
50\% & 1.656 ± 0.058 & 1.237 ± 0.011 & 1.052 ± 0.020 & \textbf{1.027 ± 0.017} \\
60\% & 1.620 ± 0.077 & 1.392 ± 0.024 & 1.168 ± 0.025 & \textbf{1.127 ± 0.022} \\
70\% & 1.709 ± 0.032 & 1.604 ± 0.023 & 1.343 ± 0.033 & \textbf{1.251 ± 0.029} \\
80\% & 2.063 ± 0.078 & 1.952 ± 0.059 & 1.566 ± 0.061 & \textbf{1.378 ± 0.036} \\
90\% & 2.760 ± 0.063 & 2.542 ± 0.073 & 1.656 ± 0.069 & \textbf{1.327 ± 0.035} \\
\bottomrule
\end{tabular}
\end{table}

\begin{table}[H]
\centering
\caption{PCA Procrustes distance across missing rates}
\small
\begin{tabular}{lcccc}
\toprule
Rate & Zero · fixed & Zero · adaptive & Warm · fixed & Warm · adaptive \\
\midrule
10\% & 0.003 ± 0.001 & 0.003 ± 0.001 & \textbf{0.002 ± 0.001} & 0.002 ± 0.001 \\
20\% & 0.010 ± 0.002 & 0.010 ± 0.002 & 0.006 ± 0.001 & \textbf{0.006 ± 0.001} \\
30\% & 0.021 ± 0.004 & 0.021 ± 0.004 & 0.012 ± 0.003 & \textbf{0.012 ± 0.003} \\
40\% & 0.037 ± 0.005 & 0.037 ± 0.004 & \textbf{0.022 ± 0.003} & 0.022 ± 0.004 \\
50\% & 0.105 ± 0.017 & 0.081 ± 0.018 & 0.049 ± 0.010 & \textbf{0.048 ± 0.010} \\
60\% & 0.175 ± 0.027 & 0.167 ± 0.021 & \textbf{0.095 ± 0.023} & 0.095 ± 0.025 \\
70\% & 0.334 ± 0.035 & 0.346 ± 0.041 & \textbf{0.184 ± 0.034} & 0.194 ± 0.049 \\
80\% & 0.721 ± 0.075 & 0.726 ± 0.080 & \textbf{0.425 ± 0.118} & 0.435 ± 0.112 \\
90\% & 0.961 ± 0.020 & 0.968 ± 0.011 & 0.826 ± 0.056 & \textbf{0.822 ± 0.057} \\
\bottomrule
\end{tabular}
\end{table}

\begin{table}[H]
\centering
\caption{PLS Procrustes distance across missing rates}
\small
\begin{tabular}{lcccc}
\toprule
Rate & Zero · fixed & Zero · adaptive & Warm · fixed & Warm · adaptive \\
\midrule
10\% & 0.005 ± 0.000 & 0.005 ± 0.000 & \textbf{0.004 ± 0.000} & 0.004 ± 0.000 \\
20\% & 0.012 ± 0.001 & 0.012 ± 0.001 & 0.009 ± 0.001 & \textbf{0.009 ± 0.001} \\
30\% & 0.027 ± 0.003 & 0.027 ± 0.003 & 0.021 ± 0.003 & \textbf{0.021 ± 0.003} \\
40\% & 0.052 ± 0.006 & 0.052 ± 0.006 & 0.040 ± 0.002 & \textbf{0.040 ± 0.002} \\
50\% & 0.163 ± 0.022 & 0.105 ± 0.009 & 0.072 ± 0.009 & \textbf{0.071 ± 0.008} \\
60\% & 0.220 ± 0.023 & 0.194 ± 0.018 & 0.126 ± 0.017 & \textbf{0.125 ± 0.015} \\
70\% & 0.378 ± 0.040 & 0.378 ± 0.032 & \textbf{0.226 ± 0.026} & 0.229 ± 0.024 \\
80\% & 0.670 ± 0.034 & 0.681 ± 0.045 & \textbf{0.428 ± 0.045} & 0.432 ± 0.049 \\
90\% & 0.883 ± 0.022 & 0.890 ± 0.022 & \textbf{0.706 ± 0.048} & 0.707 ± 0.044 \\
\bottomrule
\end{tabular}
\end{table}

\begin{table}[H]
\centering
\caption{Pearson log-\(p\) correlation across missing rates}
\small
\begin{tabular}{lcccc}
\toprule
Rate & Zero · fixed & Zero · adaptive & Warm · fixed & Warm · adaptive \\
\midrule
10\% & 0.982 ± 0.006 & 0.982 ± 0.006 & 0.996 ± 0.002 & \textbf{0.996 ± 0.002} \\
20\% & 0.951 ± 0.016 & 0.951 ± 0.016 & 0.990 ± 0.003 & \textbf{0.990 ± 0.002} \\
30\% & 0.906 ± 0.020 & 0.906 ± 0.020 & \textbf{0.979 ± 0.006} & 0.978 ± 0.005 \\
40\% & 0.856 ± 0.020 & 0.857 ± 0.021 & 0.962 ± 0.006 & \textbf{0.962 ± 0.006} \\
50\% & 0.765 ± 0.024 & 0.781 ± 0.025 & \textbf{0.936 ± 0.010} & 0.935 ± 0.009 \\
60\% & 0.682 ± 0.034 & 0.693 ± 0.047 & \textbf{0.894 ± 0.012} & 0.893 ± 0.014 \\
70\% & 0.608 ± 0.036 & 0.609 ± 0.048 & 0.830 ± 0.036 & \textbf{0.831 ± 0.034} \\
80\% & 0.472 ± 0.085 & 0.463 ± 0.084 & 0.745 ± 0.032 & \textbf{0.760 ± 0.032} \\
90\% & 0.297 ± 0.053 & 0.320 ± 0.043 & 0.615 ± 0.043 & \textbf{0.632 ± 0.054} \\
\bottomrule
\end{tabular}
\end{table}

\begin{table}[H]
\centering

\caption{Runtime comparison of SVT ablation variants. Runtime is reported in seconds as mean ± standard deviation}
\label{tab:svt_ablation_runtime}

\begin{tabular}{lccc}
\toprule
Method & Mean ± Std. & Min. & Max. \\
\midrule
Zero init + fixed step size & 6.440 ± 1.616 & 4.385 & 12.716 \\
Zero init + adaptive step size & \textbf{6.147 ± 1.152} & 4.377 & 11.303 \\
Warm start + fixed step size & 6.378 ± 1.355 & 4.421 & 11.093 \\
Warm start + adaptive step size & 6.459 ± 1.669 & 4.229 & 12.460 \\
\bottomrule
\end{tabular}
\end{table}

\subsection{Runtime Decomposition and Scaling}
\label{app:runtime_scaling}
\begin{table}[H]
\centering

\caption{Stage-decomposed runtime on the metabolomics dataset ($198 \times 130$, MCAR). Times presented are means over three measured runs. Pre denotes Preprocess, SVD denotes one SVD. NF denotes NuclearForest, MF denotes MissForest, and SI denotes SoftImpute.}

\label{tab:runtime_decomposition}
\begin{tabular}{lccccccc}
\toprule
Missing
& \shortstack{Pre.\\(ms)}
& \shortstack{SVD\\(ms)}
& \shortstack{SVT\\(s)}
& \shortstack{RF\\(s)}
& \shortstack{NF\\(s)}
& \shortstack{MF\\(s)}
& \shortstack{SI\\(s)} \\
\midrule
20\%
& 0.18 ms
& 4.09 ms
& 8.97 s
& 10.31 s
& 19.28 s
& 105.50 s
& 0.48 s \\

50\%
& 0.22 ms
& 10.18 ms
& 10.28 s
& 7.75 s
& 18.03 s
& 76.64 s
& 0.80 s \\

80\%
& 0.19 ms
& 12.81 ms
& 8.22 s
& 5.08 s
& 13.30 s
& 50.84 s
& 0.46 s \\
\bottomrule
\end{tabular}
\end{table}

Mask generation is part of the evaluation protocol rather than the imputation
procedure and is therefore excluded from the reported imputation runtime.

We conduct a scaling study using rank-10 Gaussian matrices, with 5\%
noise and 30\% MCAR masking. Matrix size is reported as number of rows by number
of columns. With 130 columns, increasing the number of rows from 1000 to 2000
approximately doubled runtime from 66.0 s to 130.0 s. However, for a matrix with
2000 rows and 1000 columns, runtime reached 6885.7 s, with RF refinement
accounting for 87.0\% of total runtime. 

\begin{table}[H]
\centering

\caption{Runtime scaling study on rank-10 Gaussian matrices with 5\% noise and 30\% MCAR masking.}

\label{tab:runtime_scaling}
% [inline block 0: 27 envs, 48389 chars in 24 pieces, piece 1 here, a bare % at each other -> data_tex | \begin{tabular}{lccccc} \toprule...]

\end{table}

\section{Full numerical results}
\label{app:full-numerical-results}

\subsection{MCAR/MAR results on the metabolomics dataset}

\begin{table}[H]
\centering
\caption{NRMSE across missing rates}
\small
%
\end{table}

\begin{table}[H]
\centering
\caption{PCA Procrustes distance across missing rates}
\small
%
\end{table}

\begin{table}[H]
\centering
\caption{PLS Procrustes distance across missing rates}
\small
%
\end{table}

\begin{table}[H]
\centering
\caption{Pearson log-\(p\) correlation across missing rates}
\small
%
\end{table}

\subsection{MNAR results on the metabolomics dataset}

\begin{table}[H]
\centering
\caption{SOR across missing rates}
\footnotesize
%
\end{table}
\begin{table}[H]
\centering
\caption{PCA Procrustes distance across missing rates}
\small
%
\end{table}
\begin{table}[H]
\centering
\caption{PLS Procrustes distance across missing rates}
\small
%
\end{table}
\begin{table}[H]
\centering
\caption{Pearson log-\(p\) correlation across missing rates}
\small
%
\end{table}
\subsection{MCAR results on the housing dataset}

\begin{table}[H]
\centering
\caption{NRMSE across missing rates}
\small
%
\end{table}

\begin{table}[H]
\centering
\caption{PFC across missing rates}
\small
%
\end{table}

\begin{table}[H]
\centering
\caption{Gower's distance across missing rates}
\small
%
\end{table}

\begin{table}[H]
\centering
\caption{Downstream predictive \(R^2\) degradation across missing rates}
\small
%
\end{table}

\begin{table}[H]
\centering
\caption{PCA Procrustes distance across missing rates}
\small
%
\end{table}

\begin{table}[H]
\centering
\caption{PLS Procrustes distance across missing rates}
\small
%
\end{table}

\begin{table}[H]
\centering
\caption{Pearson log-\(p\) correlation across missing rates}
\small
%
\end{table}

\subsection{MAR results on the housing dataset}

\begin{table}[H]
\centering
\caption{NRMSE across missing rates}
\small
%
\end{table}

\begin{table}[H]
\centering
\caption{PFC across missing rates}
\small
%
\end{table}

\begin{table}[H]
\centering
\caption{Gower's distance across missing rates}
\small
%
\end{table}

\begin{table}[H]
\centering
\caption{Downstream predictive \(R^2\) degradation across missing rates}
\small
%
\end{table}

\subsection{Runtime comparison}

\begin{table}[H]
\centering
\caption{Runtime comparison of imputation methods on the metabolomics dataset under MCAR/MAR missingness. Runtime is reported in seconds.}
\small
%
\end{table}

\begin{table}[H]
\centering
\caption{Runtime comparison of imputation methods on the metabolomics dataset under MNAR missingness. Runtime is reported in seconds.}
\small
%
\end{table}

\begin{table}[H]
\centering
\caption{Runtime comparison of imputation methods on the housing dataset under MCAR missingness. Runtime is reported in seconds.}
\small
%
\end{table}

\begin{table}[H]
\centering
\caption{Runtime comparison of imputation methods on the housing dataset under MAR missingness. Runtime is reported in seconds.}
\small
%
\end{table}

\end{document}